\newcommand{\bx}{\mathbf{x}}
\newcommand{\by}{\mathbf{y}}
\newcommand{\bv}{\mathbf{v}}
\newcommand{\bu}{\mathbf{u}}
\newcommand{\bd}{\mathbf{d}}
\newcommand{\bP}{\mathbf{P}}
\newcommand{\bT}{\mathbf{T}}
\newcommand{\bD}{\mathbf{D}}
\newcommand{\bK}{\mathbf{K}}
\newcommand{\bL}{\mathbf{L}}
\newcommand{\bM}{\mathbf{M}}
\newcommand{\bI}{\mathbf{I}}
\newcommand{\bN}{\mathbf{N}}
\newcommand{\bA}{\mathbf{A}}
\newcommand{\bB}{\mathbf{B}}
\newcommand{\bX}{\mathbf{X}}
\newcommand{\bU}{\mathbf{U}}
\newcommand{\bS}{\mathbf{S}}
\newcommand{\bR}{\mathbf{R}}
\newcommand{\bmu}{\bm{\mu}}
\newcommand{\bV}{\mathbf{V}}
\newcommand{\cW}{\mathcal{W}}
\newcommand{\cB}{\mathcal{B}}
\newcommand{\cJ}{\mathcal{J}}
\newcommand{\pathvar}{\mathbf{p}}
\newcommand{\wone}{\mathcal{W}^{(1)}_J}
\newcommand{\wtwo}{\mathcal{W}^{(2)}_J}
\newcommand*{\addFileDependency}[1]{
  \typeout{(#1)}
  \@addtofilelist{#1}
  \IfFileExists{#1}{}{\typeout{No file #1.}}
}
\newcommand*{\myexternaldocument}[1]{
    \externaldocument{#1}
    \addFileDependency{#1.tex}
    \addFileDependency{#1.aux}
}
\setlist[enumerate]{leftmargin=.5in}
\setlist[itemize]{leftmargin=.5in}
\crefname{hypothesis}{Hypothesis}{Hypotheses}
\title{Understanding Graph Neural Networks with Generalized Geometric Scattering Transforms}
\author{Michael Perlmutter\thanks{University of California, Los Angeles, \texttt{perlmutter@ucla.edu}} \and Alexander Tong\thanks{Universit\'{e} de Montr\'{e}al and Mila -- Quebec AI Institute, \texttt{alexander.tong@mila.quebec}} \and Feng Gao\thanks{Columbia University, \texttt{fg2539@cumc.columbia.edu}} \and Guy Wolf\thanks{Universit\'{e} de Montr\'{e}al and Mila -- Quebec AI Institute, \texttt{guy.wolf@umontreal.ca}} \and Matthew Hirn\thanks{Michigan State University, \texttt{mhirn@msu.edu}}}
\begin{document}

\maketitle

\begin{abstract}

    The scattering transform is a multilayered wavelet-based  architecture that acts as a model of convolutional neural networks. Recently, several works have generalized the scattering transform  to graph-structured data. Our work builds upon these constructions by introducing windowed and non-windowed geometric scattering transforms for graphs based upon two very general classes wavelets, which are in most cases based upon asymmetric matrices. We show that these transforms have many of the same theoretical guarantees as their symmetric counterparts. 
    As a result, the proposed construction unifies and extends known theoretical results for many of the existing graph scattering architectures. Therefore, it helps bridge the gap between geometric scattering and other graph neural networks by introducing a large family of networks with provable stability and invariance guarantees. These results lay the groundwork for future deep learning architectures for graph-structured data that have learned filters and also provably have desirable theoretical properties. 
    
\end{abstract}

\begin{keywords}
  graph neural networks, geometric deep learning, wavelets, scattering transform
\end{keywords}

\begin{AMS}
05C62, 05C81, 42C15, 68R10, 68T07
\end{AMS}

\section{Introduction}



The scattering transform is a wavelet-based  model of convolutional neural networks (CNNs), introduced for signals defined on $\mathbb{R}^n$ by S. Mallat in \cite{mallat:scattering2012}. Like the front end of a CNN, the scattering transform produces a representation of an inputted signal through an alternating cascade of filter convolutions and pointwise nonlinearities. It primarily differs from CNNs in two respects: i) it uses predesigned, wavelet filters rather than filters learned through training data, and ii) it uses the complex modulus $|\cdot|$ as its nonlinear activation function rather than more common choices such as the rectified linear unit (ReLU).  These differences lead to a network which provably has desirable mathematical properties. In particular, the Euclidean scattering transform is: i) nonexpansive on $\mathbf{L}^2(\mathbb{R}^n)$, ii) invariant to translations up to a certain scale parameter, and iii) stable to certain diffeomorphisms. In addition to these theoretical properties, the scattering transform has also been used to achieve very good numerical results in fields such as
audio processing \cite{anden:scatAudioClass2011}, medical signal processing \cite{talmon:scatManifoldHeart2014}, computer vision \cite{oyallon2015deep}, and quantum chemistry~\cite{hirn:waveletScatQuantum2016}. 

While CNNs have proven tremendously effective for a wide variety of machine learning tasks, they typically assume that inputted data has a Euclidean grid-like structure.  However, many data sets of interest such as social networks, molecules, or surfaces appearing in computer graphics have an intrinsically non-Euclidean structure and are naturally modeled as graphs or manifolds. This has motivated the rise of geometric deep learning, a field which aims to generalize deep learning methods to non-Euclidean settings. In particular, a number of papers have  produced versions of the scattering transform for graphs \cite{gama:stabilityGraphScat2019,gama:diffScatGraphs2018,gao:graphScat2018,zou:graphCNNScat2018} and manifolds \cite{perlmutter:geoScatCompactManifold2020}. These constructions provide a model of geometric deep learning architectures such as graph neural networks in a manner analogous to the way that \cite{mallat:scattering2012} models CNNs. 

In this paper, we construct two new families of wavelet transforms on a graph $G$ from matrices $\bK$, which are in most cases asymmetric, and  provide a theoretical analysis of both of these wavelet transforms as well as the windowed and non-windowed scattering transforms constructed from them. Because the matrices $\bK$ are in general not symmetric, our wavelet transforms will not be a nonexpansive frames on the standard unweighted inner product space. Instead, they will be  nonexpansive on a certain weighted inner product space $\mathbf{L}^2(\bM),$ where $\bM$ is an invertible weighting matrix. In important special cases, our matrix $\bK$ will be either the lazy random walk matrix $\bP,$ its transpose $\bP^T,$ or its  symmetric counterpart given by  $\bT=\bD^{-1/2}\bP\bD^{1/2}.$  In these cases, the weighting matrix will depend on the geometry of $G.$ 

We will use these wavelets to construct windowed and non-windowed versions of the scattering transform on $G.$ The windowed scattering transform inputs a signal $\bx\in\mathbf{L}^2(\bM)$ and outputs a sequence of functions which we refer to as the scattering coefficients. We may view the windowed scattering transform as producing a sequence of features for each vertex. Therefore, it is well-suited for tasks such as node classification.  The non-windowed scattering transform replaces the low-pass matrix used in the definition of the windowed scattering transform with an averaging operator $\bmu$ and instead outputs a sequence of scalar-valued coefficients. 
In some cases, it can be viewed as the limit of the windowed scattering transform as the scale of the low-pass tends to infinity (evaluated at some fixed coordinate $0\leq i \leq n-1$). Since the non-windowed scattering transform produces a single set of coefficients for the entire graph, it is well suited for whole-graph level tasks such as graph classification or regression.
 

\subsection{Preliminaries}\label{sec: notation}

Let $G=(V,E,W)$ be a weighted, connected graph consisting of vertices $V$, edges $E$, and weights $W$, with $n$ vertices. Without loss of generality we take the vertices as $V = \{0, \ldots, n-1 \}$. If $\bx=(\bx(0),\ldots,\bx(n-1))^T$ is a function defined on $V$, we will identify $\bx$ with the corresponding vector in $\mathbb{R}^n$. Let $\mathbf{A}$ denote the {\it{weighted}} adjacency matrix of $G$, let $\mathbf{d}=(\bd(0),\ldots,\bd(n-1))^T$ be the weighted degree vector, and let $\bD=\text{diag}(\mathbf{d}).$ We will let    
$$    \bN\coloneqq\bI-\bD^{-1/2}\bA\bD^{-1/2}$$ 
be the normalized graph Laplacian, let $0\leq \omega_0\leq\omega_1\leq \ldots\leq \omega_{n-1} \leq 2$ denote the eigenvalues of $\bN,$ and let $\bv_0,\ldots,\bv_{n-1}$ be an orthonormal eigenbasis for $\mathbb{R}^n$ (with respect to the standard, unweighted inner product) such that $\bN\bv_i=\omega_i \bv_i.$ The matrix $\bN$ may be factored as 
$$\bN=\bV\Omega\bV^T,$$ 
where $\Omega=\text{diag}(\omega_0,\ldots,\omega_{n-1}),$ and $\bV$ is the unitary matrix whose $i$-th column is $\bv_i.$ 
One may check that $\omega_0=0$ and that we may choose $\mathbf{v}_0= \bd^{1/2} / \|\bd^{1/2}\|_2,$ where $\bd^{1/2}$ is defined componentwise. 
We note that since $G$ is connected,  we have  
$\omega_1 > 0$.

Our wavelet transforms will be constructed from the matrix $\bT_g$ defined by 
\begin{equation} \label{eqn: Tfactorization} 
    \bT_g \coloneqq \bV g(\Omega)\bV^T\coloneqq\bV\Lambda_g\bV^T,
\end{equation}
where $g:[0,2]\rightarrow[0,1]$ is some monotonically decreasing function such that $g(0)=1$ and $g(2)=0$, and  
$
\Lambda_g\coloneqq \text{diag}(g(\omega_0),\ldots,g(\omega_{n-1}))\coloneqq \text{diag}(\lambda_0,\ldots,\lambda_{n-1}).
$ 
We note that by construction we have $1= \lambda_0> \lambda_1\geq\ldots\geq\lambda_{n-1} \geq 0.$ As observed in e.g., \cite{levie2019transferability}, in the case where $g$ is a rational function, the matrix $\bT_g$ can be constructed in the spatial domain via functional calculus and there is no need to explicitly diagonalize $\mathbf{N}$, which may be computationally expensive. Indeed, this approach is used in many popular graph neural networks \cite{Defferrard2018,kipf2016semi}. When there is no potential for confusion, we will suppress dependence on $g$ and write $\bT$ and $\Lambda$ in place of $\bT_g$ and $\Lambda_g.$  As our main example, we will set $g(t) = g_\star(t)\coloneqq1-t/2,$ yielding
\begin{equation}\label{eqn: defT}
    \bT_{g_\star} =\bI-\frac{1}{2}\left(\bI-\bD^{-1/2}\bA\bD^{-1/2}\right)=\frac{1}{2}\left(\mathbf{I}+\bD^{-1/2}\mathbf{A}\bD^{-1/2}\right).
\end{equation}

In \cite{gama:diffScatGraphs2018}, Gama et al.\ constructed a graph scattering transform  using wavelets which are polynomials in $\bT_{g_\star},$ and in \cite{gao:graphScat2018}, Gao et al. defined a closely related graph scattering transform from polynomials of the lazy random walk matrix
\begin{equation*}
    \bP\coloneqq\bD^{1/2}\bT_{g_\star}\bD^{-1/2}=\frac{1}{2}\left(\mathbf{I}+\mathbf{A}\mathbf{D}^{-1}\right).
\end{equation*}
In order to unify and generalize these frameworks, we will let $\bM$ be an $n\times n$ invertible matrix and let $\bK$
be the matrix defined by 
\begin{equation}\label{eqn: K}
    \bK = \bK_{g, \bM} \coloneqq\bM^{-1}\bT_g\bM.
\end{equation}
Note that $\bK$ depends on the choice of both $g$ and $\bM$, and thus includes a very large family of matrices. However, we shall suppress this dependence in order to avoid cumbersome notation. As important special cases, we note that we may obtain $\bK = \bT$ by setting $\bM = \bI$, and we may obtain $\bP$ and $\bP^T$ by setting $g(t)=g_\star(t)$ and letting $\bM = \bD^{-1/2}$ and $\bM = \bD^{1/2}$, respectively. More generally, as we shall see below, the function $g$ controls the eigenvalues of $\bK$ while the matrix $\bM$ influences the eigenvectors of $\bK$. Since the eigenvalues and eigenvectors uniquely determine $\bK$, they strongly affect any wavelets derived from $\bK$. Thus, the two parameters $g$ and $\bM$ allow for a wide range of possible graph wavelet constructions.

In Section \ref{sec: wavelets}, we will construct two wavelet transforms $\cW^{(1)}_J$ and $\cW^{(2)}_J$ from functions of $\bK$. 
We note that the matrix $\bK$  is not self-adjoint on the standard, unweighted inner product space (except in the case $\bK=\bT$). Therefore, we will introduce a weighted inner product space, $\mathbf{L}^2(\bM),$ of signals defined on $V$ with inner product defined by\footnote{To avoid confusion, we note that our definition differs from the notation sometimes used in the literature where $\langle\mathbf{x},\mathbf{y}\rangle_{\mathbf{M}}$ is defined by $\mathbf{y}^T\mathbf{M}\mathbf{x}$.}   
\begin{equation}\label{eqn: weighted inner product}
    \langle \bx,\by\rangle_{\bM} \coloneqq 
    \langle \bM\bx,\bM\by\rangle_2,
\end{equation}
where $\langle\cdot,\cdot\rangle_2$ denotes the standard, unweighted inner product on $\mathbb{R}^n$.
To better understand this definition, we note that if $\bM=\bD^{\alpha/2}$ then
$
  \langle \bx,\by\rangle_{\bD^{\alpha/2}} = 
    \sum_{i=0}^{n-1} \bx(i)\by(i)\bd(i)^{\alpha}
$. 
Thus, $\langle \bx,\by\rangle_{\bD^{\alpha/2}}$ is a weighted $\mathbf{L}^2$ inner product with weights depending on the degree of each vertex. We note that the norms 
   $\|\bx\|_\bM^2\coloneqq\langle\bx,\bx\rangle_{\bM}$ and $\|\bx\|_2^2=\langle\bx,\bx\rangle_2$
are equivalent and 
    $\frac{1}{\|\bM^{-1}\|_2}\|\bx\|_2\leq \|\bx\|_{\bM}\leq \|\bM\|_2\|\bx\|_2,$
where for  a matrix  $\mathbf{B},$  we shall let $\|\mathbf{B}\|_{\bM}$ and $\|\mathbf{B}\|_2$  denote its operator norms on $\mathbf{L}^2(\bM)$ and on the standard, unweighted  $\mathbf{L}^2$ space. If $\mathcal{I}$ is a countable indexing set, and  $\mathcal{X}=\{\mathbf{x}_i\}_{i\in\mathcal{I}}$ is a family of vectors, then we will also let $\|\mathcal{X}\|_{\bM}$ denote the $\bm{\ell}^2(\mathbf{L}^2(\bM))$ norm of $\mathcal{X}$ and $\|\mathcal{X}\|_{2}$ denote its norm on the vector-valued $\mathbf{L}^2$ space. If $\Gamma=(\bB_i)_{i\in\mathcal{I}}$ is a family of matrices, we will say that $\Gamma$ is a frame if there exist $0<c\leq C<\infty$ such that 
\begin{equation}
    c\|\bx\|^2_{\bM}\leq\|\mathcal{B}\bx\|^2_{\bM}\coloneqq \sum_{i\in\mathcal{I}}\|\mathbf{B}_i\bx\|^2_{\bM}\leq C\|\bx\|^2_{\bM},\quad \forall \, \mathbf{x}\in\mathbf{L}^2(\bM).\label{eqn: frameAB}
\end{equation}
The following lemma 
will be useful in studying the frame bounds of the wavelet transforms constructed from $\bK.$
 
\begin{lemma}\label{lem: selfadjoint}
The matrix $\bK$ is self-adjoint on $\mathbf{L}^2(\bM).$
\end{lemma}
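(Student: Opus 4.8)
The plan is to prove the identity $\langle \bK\bx,\by\rangle_{\bM} = \langle \bx,\bK\by\rangle_{\bM}$ for all $\bx,\by\in\mathbf{L}^2(G,\bM)$ by reducing everything to the standard inner product $\langle\cdot,\cdot\rangle_2$, on which $\bT$ is already known to be self-adjoint. The key structural fact I would exploit is that $\bK$ is by definition conjugate to $\bT$ via $\bM$: from $\bK = \bM^{-1}\bT\bM$ we get the intertwining relation $\bM\bK = \bT\bM$.

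Concretely, I would start from the left-hand side and simply unfold the definition of the weighted inner product:
\begin{equation*}
\langle \bK\bx,\by\rangle_{\bM} = \langle \bM\bK\bx,\bM\by\rangle_2 = \langle \bT\bM\bx,\bM\by\rangle_2,
\end{equation*}
using $\bM\bK = \bT\bM$ in the last step. Next I would invoke the self-adjointness of $\bT$ on the standard space $\mathbf{L}^2(G)$ — this holds because $\bT = \bV\Lambda\bV^T$ with $\bV$ unitary (orthogonal) and $\Lambda$ real diagonal, so $\bT^T = \bT$ — to move $\bT$ onto the second argument:
\begin{equation*}
\langle \bT\bM\bx,\bM\by\rangle_2 = \langle \bM\bx,\bT\bM\by\rangle_2 = \langle \bM\bx,\bM\bK\by\rangle_2 = \langle \bx,\bK\by\rangle_{\bM},
\end{equation*}
where the last two equalities again use $\bT\bM = \bM\bK$ and the definition of $\langle\cdot,\cdot\rangle_{\bM}$. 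Chaining these gives $\langle \bK\bx,\by\rangle_{\bM} = \langle \bx,\bK\by\rangle_{\bM}$, which is the claim.

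There is essentially no substantive obstacle here; the only thing to be careful about is to cite the right ingredient at the right place — namely that $\bT$ (not $\bK$) is the operator that is symmetric in the standard pairing, so the symmetry of $\bK$ is genuinely a consequence of the change of inner product rather than of $\bK$ being a symmetric matrix (in general it is not). I would also remark in passing that since $\bM$ is invertible, $\bK$ is similar to the symmetric matrix $\bT$ and hence has the real spectrum $\{\lambda_0,\ldots,\lambda_{n-1}\}$ with $\bM^{-1}\bv_i$ an eigenvector-basis; this is the form of the statement that will actually be used when computing frame bounds for $\cW^{(1)}$ and $\cW^{(2)}$ in Section \ref{sec: wavelets}.
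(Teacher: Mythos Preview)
Your proof is correct and follows essentially the same argument as the paper: both reduce to the standard inner product via the definition of $\langle\cdot,\cdot\rangle_\bM$, use $\bM\bK=\bT\bM$, and then apply the self-adjointness of $\bT$ on $\mathbf{L}^2(G)$. Your additional remark about the eigenbasis $\bM^{-1}\bv_i$ is also consistent with how the lemma is used later in the paper.
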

\begin{proof}
Since $\bT$ is symmetric, we may use \eqref{eqn: K} and \eqref{eqn: weighted inner product} to see
\begin{align*}
    \langle\bK\bx,\by\rangle_\bM 
    =\langle\bM \bx,\bT\bM\by\rangle_2
    =\langle\bM\bx,\bM(\bM^{-1}\bT\bM)\by\rangle_2
    =\langle\bM\bx,\bM\bK\by\rangle_2
    =\langle\bx,\bK\by\rangle_\bM.
\end{align*}
\end{proof}

It will  frequently be useful to consider the eigenvector decompositions of $\bT$ and $\bK.$ By definition we have $\bT=\bV\Lambda\bV^T$, and therefore 
$\bT\mathbf{v}_i=\lambda_i\mathbf{v}_i.$ 
Since the matrices $\bT$ and $\bK$ are similar with $\bK=\bM^{-1}\bT\bM,$   
one may use the definition of $\langle\cdot,\cdot\rangle_\bM$ to verify that the vectors 
\begin{equation*}
    \mathbf{u}_i\coloneqq\bM^{-1}\mathbf{v}_i
\end{equation*}
form an orthonormal eigenbasis for $\mathbf{L}^2(\bM)$ with $\bK\mathbf{u}_i=\lambda_i\mathbf{u}_i.$ One may also verify that 
$   \mathbf{w}_i \coloneqq \bM^T \mathbf{v}_i
$ 
is a left eigenvector of $\bK$ and $\mathbf{w}_i^T\bK = \lambda_i\mathbf{w}_i^T$ for all $0\leq i \leq n-1.$ 

In the following section, we will construct wavelets from polynomials of $\bK,$ where for a polynomial  $p(t)=a_kt^k+\ldots+a_1t+a_0$ we define  $p(\mathbf{B}) \coloneqq a_k\mathbf{B}^k+\ldots+a_1\mathbf{B}+a_0\bI.$
The following lemma uses  \eqref{eqn: Tfactorization} to derive a formula for polynomials of $\bK$ and $\bT$ and relates the operator norms of  polynomials of $\bK$ to polynomials of $\bT.$ It will be useful for studying the wavelet transforms introduced in the following section. For a proof, please see Section \ref{sec: Proof of Polynomial properties}.
\begin{lemma}\label{lem: polynomialproperties}
For any polynomial $p,$  we have 
\begin{equation} \label{eqn: Tpolys}
    p(\bT)= \bV p(\Lambda)\bV^T \quad\text{and} \quad p(\bK) = \bM^{-1} p(\bT) \bM =\bM^{-1} \bV p(\Lambda)\bV^T \bM.
\end{equation}
Consequently, for all $\bx\in\mathbf{L}^2(\bM)$, we have
    $\|p(\bK)\bx\|_\bM=\|p(\bT)\bM\bx\|_2.$
\end{lemma}
In light of Lemma \ref{lem: polynomialproperties}, for any polynomial $p,$ we may define $p(\bT)^{1/2}$ and $p(\bK)^{1/2}$ by 
\begin{equation}\label{eqn: defsquareroot}
    p(\bT)^{1/2}\coloneqq \bV p(\Lambda)^{1/2}\bV^T\quad\text{and}\quad p(\bK)^{1/2}=\bM^{-1}\bV p(\Lambda)^{1/2}\bV^T\bM,
\end{equation}
where the square root of the diagaonal matrix $p(\Lambda)$ is defined entrywise. We
may readily verify that 
$
    p(\bT)^{1/2}p(\bT)^{1/2}=p(\bT) \quad\text{and}\quad p(\bK)^{1/2}p(\bK)^{1/2}=p(\bK).
$

\subsection{Previous work on graph scattering transforms} \label{sec: related}

Several previous works have introduced different formulations of the graph scattering transform. \cite{gama:diffScatGraphs2018,gama:stabilityGraphScat2019,zou:graphCNNScat2018} construct the scattering transform using symmetric wavelets and show their constructions have similar stability and invariance properties to the Euclidean scattering transform. There has also been works
 empirically demonstrating that the graph scattering transform is effective for tasks such as graph classification \cite{gao:graphScat2018}, vertex classification \cite{tong2020data, min2020scattering, min2021gsan}, graph synthesis \cite{zou:graphScatGAN2019, Castro2020}, and  combinatorial optimization  \cite{min2022can}. However, much of this empirical work has been done using asymmetric wavelets to which the theoretical guarantees of \cite{gama:diffScatGraphs2018,gama:stabilityGraphScat2019} and \cite{zou:graphCNNScat2018} do not apply.

In this paper, we will focus on unifying and generalizing the theoretical properties of the different formulations of the graph scattering transform. 
Analogously to the Euclidean scattering transform, we will show that the windowed graph scattering transform is: i) nonexpansive on $\mathbf{L}^2(\bM),$ ii) invariant to permutations of the vertices, up to a factor depending on the scale of the low-pass (for certain choices of $\bK$), and iii) stable to  graph perturbations. Similarly, we will show that the non-windowed scattering transform is i) Lipschitz continuous on $\mathbf{L}^2(\bM),$ ii) fully invariant to permutations, and iii) stable to graph perturbations. Importantly, we note that this is the first work to produce such theoretical guarantees for graph scattering transforms using asymmetric wavelets and is also the first to establish Lipshitz continuity of the non-windowed graph scattering transform.


In \cite{zou:graphCNNScat2018}, the authors  construct a family of wavelet convolutions using the spectral decomposition of the unnormalized graph Laplacian and define a windowed scattering transform as an iterative series of wavelet convolutions and nonlinearities. They then prove results analogous to Theorems \ref{thm: nonexpansive}, \ref{thm: conservation of energy}, and \ref{thm: permuatation invariance} of this  paper for their windowed scattering transform. They also introduce a notion of stability to graph perturbations. However, their notion of graph perturbations is significantly different than the one we consider in Section \ref{sec: stability}. 

In \cite{gama:diffScatGraphs2018}, the authors construct a family of wavelets from polynomials of $\bT_g,$ in the case where $g(t)=g_\star(t)=1- t/2,$ and showed that the resulting non-windowed scattering transform was stable to graph perturbations. This construction was generalized in \cite{gama:stabilityGraphScat2019}, where the authors introduced a more general class of graph convolutions, constructed from a class of symmetric matrices known as ``graph shift operators.'' 
The wavelet transform considered in \cite{gama:diffScatGraphs2018} is nearly identical to the $\cW^{(2)}_J$ introduced in Section \ref{sec: wavelets}, in the special case where $g(t) = g_\star(t)$ and $\bM=\bI,$ with the only difference being that our wavelet transform includes a low-pass filter. 

In \cite{gao:graphScat2018}, wavelets were constructed from the lazy random walk matrix $\bP=\bD^{1/2}\bT\bD^{-1/2}.$ These wavelets are essentially the same as the $\cW^{(2)}_J$ in the case where $g(t) = g_\star(t)$ and $\bM=\bD^{-1/2},$ although similarly to \cite{gama:diffScatGraphs2018}, the wavelets in \cite{gao:graphScat2018} do not use a low-pass filter. In all of these previous works, the authors carry out substantial numerical experiments and demonstrate that scattering transforms are effective for a variety of graph deep learning tasks. 
We also note that Chen, Cheng, and Mallat first introduced a substantially different version of the graph scattering transform in \cite{chen:scatHaar2014, cheng2016deep} using Haar wavelets. However, the construction and analysis 
considered there
differs substantially from the previously discussed works. 

In addition to helping us understand the stability and invariance properties of deep networks on graphs, the scattering transform also helps us investigate the important question: ``what sort of filters should be used in a graph neural network?" Most popular graph neural networks, such as \cite{kipf2016semi}, typically average information over one-hop neighborhoods in order to produce a smooth hidden representation of the vertices, which effectively corresponds to filtering out high-frequency information via a low-pass filter. Scattering, on the other hand, uses multiscale filters that can encode long-range dependencies and effectively capture high-frequency information. In \cite{min2020scattering,tong2020data}, it was shown that this allows for improved numerical performance in situations where high-frequency information is important. Moreover, graph scattering style networks also been shown to be effective for molecule generation \cite{zou:graphScatGAN2019} and solving combinatorial optimization problems \cite{min2022can}. In the former case, the large receptive field of the wavelets allows scattering to capture the global structure of the molecule and in the latter case the use of wavelets rather than low-pass filters allows one to distinguish a member of the clique from a node which is connected to to most, but not all nodes within the clique.

Here we shall focus on unifying and generalizing the theory of several of these previous constructions. Our introduction of the matrix $\bM$ allows us to obtain wavelets very similar to either \cite{gama:diffScatGraphs2018} or \cite{gao:graphScat2018} as special cases. Moreover, the  introduction of the tight wavelet frame $\cW^{(1)}_J$ allows us to produce a network with provable conservation of energy and nonexpansive properties analogous to \cite{zou:graphCNNScat2018}. To highlight the generality of our setup, we introduce both windowed and non-windowed versions of the scattering transform using general (wavelet) frames and provide a detailed theoretical analysis of both. 
%
 
 \subsection{Organization, contributions, and summary of main results} In Section \ref{sec: wavelets}, we will construct two family of graph wavelets $\mathcal{W}_J^{(1)}$ and $\mathcal{W}_J^{(2)}$. In Section \ref{sec: scattering}, we will introduce windowed and non-windowed versions of the graph scattering trasnform and analyze their continuity and invariance properties. Then, in Section \ref{sec: stability}, we will analyze the stability of the networks to perturbations. In Section \ref{sec: GNNS}, we will discuss the relationship between scattering and other GNNs, and, in Section \ref{sec: experiments}, we will present numerical experiments before offering a brief conclusion in Section \ref{sec: future}.
 
 The following is a summary of our main theoretical results. Unless otherwise stated, all results apply to both choices of wavelets, both the windowed and non-windowed scattering transform, and to general diffusion matrices $\mathbf{K}$.
 \begin{itemize}
     \item \textbf{Section \ref{sec: wavelets}:} Propositions \ref{prop: waveletisometries} and \ref{prop: nonexpansivewaveletframes} show that the wavelets $\mathcal{W}^{(1)}_J$ is an isometry and is $\mathcal{W}^{(2)}_J$ a non-expansive frame on $\mathbf{L}^2(\mathbf{M})$.
     \item \textbf{Section \ref{sec: scattering}:} Proposition \ref{prop: limit} shows that if $\bmu=\bu_0$, then the windowed scattering transform converges to the non-windowed scattering transform as $J\rightarrow\infty$. Theorem \ref{thm: nonexpansive}, shows that the windowed scattering transform is non-expansive and that the non-windowed scattering transform is Lipschitz continuous on $\mathbf{L}^2(\mathbf{M})$. Theorems \ref{thm: energydecay} shows that the energy in the $m$-th order scattering coefficients decays exponentially in $m$ and Theorem \ref{thm: conservation of energy} shows that therefore the scattering transform preserves all of the energy of the input signal if we use the wavelets $\mathcal{W}_J^{(1)}$. Theorems \ref{thm: equivariance} and  \ref{thm: permuationinvariancewindowed} show that the windowed scattering transform is equivariant and that the non-windowed scattering transform is invariant to permutations. Lastly, Theorem \ref{thm: permuatation invariance} shows that the windowed scattering transform is invariant in the limit at $J\rightarrow\infty$ if $\bK=\bP^T$.
     \item \textbf{Section \ref{sec: stability}:} Theorem \ref{thm: wavelet stability1} and \ref{thm: wavelet stability2} provide stability guarantees for  $\mathcal{W}^{(1)}$ and $\mathcal{W}^{(2)}$ in the special case where $\bK=\bT$. Theorem \ref{thm: transferTtoP} then allows these results to be extended to general $\bK$. Finally, Theorem \ref{thm: windowstability}
 and  \ref{thm: scattering stability no window} establish stability of the windowed and non-windowed scattering transforms. 
 \end{itemize}
 \subsubsection{Contributions}
As discussed in Section \ref{sec: related}, there has been a significant amount of work developing, analyzing, and applying different versions of the graph scattering transform. Therefore, in this section, for the sake of clarity, we will highlight  several aspects of our paper which are different than these previous works. 
 \begin{itemize}
     \item  The wavelet family $\mathcal{W}^{(2)}_J$ includes the wavelets of \cite{gama:diffScatGraphs2018} and \cite{gao:graphScat2018} as special cases, but it also includes many other wavelets, including in particular, a one family parameter of wavelets based on diffusion operators of the form $\bK=\bD^{\alpha}\bT\bD^{-\alpha}$, where the wavelets from \cite{gama:diffScatGraphs2018} and \cite{gao:graphScat2018} correspond to $\alpha=0$ and $\alpha=-.5$. Moreover, the wavelets  $\mathcal{W}^{(1)}_J$ are new and are not utilized in any previous version of the graph scattering transform. In our experiments in Section \ref{sec: experiments}, we consider scattering transforms which use wavelets $\mathcal{W}^{(\beta)}$, based on diffusion matrices $\mathbf{K}=\bD^{-\alpha}\bT\bD^\alpha$, for $\beta=1,2$ and $\alpha=-0.5,-0.25,0,0.25,0.5$. We  find that the optimal choice of wavelet varies significantly from one data set to another. Therefore, it is our recommendation that practitioners use a validation procedure to select the optimal wavelets for a given task. 
     Moreover, we note that the new wavelets, $\mathcal{W}_J^{(1)}$  out perform $\mathcal{W}_J^{(2)}$ on most data sets and that the intermediate values of $\alpha,$ i.e., $\alpha=\pm0.25$ often deliver superior performance to the values considered in previous work ($\alpha=0$ or $-0.5$).
     \item \cite{gama:diffScatGraphs2018} and \cite{gao:graphScat2018} considered only non-windowed versions of the scattering transform, where we consider both windowed and non-windowed versions. Importantly, unlike its non-windowed counterpart, the windowed scattering transform can be applied to vertex-level tasks such as node classification or combinatorial optimization problems. Additionally, utilizing both versions of the scattering transform is crucial to proving that the non-windowed scattering transform is Lipschitz continuous in Theorem \ref{thm: nonexpansive} and no analogous result for the non-windowed scattering transform exists in previous work.
     \item Our stability result for the non-windowed scattering transform, Theorem \ref{thm: scattering stability no window} considers perturbations both to the graph structure and to the input signal $\mathbf{x}$ whereas the analogous result in \cite{gama:diffScatGraphs2018} only considered perturbations to the graph structure. The proof of this result directly utilizes the Lipschitz continuity of the non-windowed scattering transform discussed above and therefore would have been  non-trivial for the authors of \cite{gama:diffScatGraphs2018} to prove since they did not consider a windowed scattering transform.
     \end{itemize}
 
\section{The Graph Wavelet Transform}
\label{sec: wavelets}

In this section, we will construct two families of  graph wavelet transforms based off of the  matrix $\bK=\bM^{-1}\bT\bM$ introduced in Section \ref{sec: notation}.  In the following sections, we provide a theoretical analysis of the scattering transforms constructed from each of these wavelet transforms.

Let $J\geq 0.$ For $0\leq j \leq J+1,$ let $p_j$ be the polynomial defined by
\begin{equation*}
    p_j(t) \coloneqq \begin{cases}
    1-t & \text{if }j=0\\
    t^{2^{j-1}}-t^{2^j} &\text{if }1\leq j \leq J\\
    t^{2^J}&\text{if }j=J+1
    \end{cases},
\end{equation*}
and let $q_j(t) \coloneqq p_j(t)^{1/2}$ for $0\leq t\leq 1.$ We note that by construction 
\begin{equation}\label{eqn: sum to 1}
    \sum_{j=0}^{J+1}p_j(t)=\sum_{j=0}^{J+1}q_j(t)^2=1\text{ for all }0\leq t\leq 1.
\end{equation}
Given these functions, we define two wavelet transforms by 
\begin{align*}
    \cW^{(1)}_J &\coloneqq \left\{\Psi^{(1)}_j, \Phi^{(1)}_J\right\}_{0 \leq j \leq J}, \quad \Psi^{(1)}_j \coloneqq q_j(\bK), \quad \Phi^{(1)}_J \coloneqq q_{J+1}(\bK),\quad\text{and}
\\
\cW^{(2)}_J &\coloneqq \left\{\Psi^{(2)}_j, \Phi^{(2)}_J\right\}_{0 \leq j \leq J}, \quad \Psi^{(2)}_j \coloneqq p_j(\bK), \quad \Phi^{(2)}_J \coloneqq p_{J+1}(\bK),
\end{align*}
where $q_j(\bK)$ is defined as in \eqref{eqn: defsquareroot}. The next two propositions show $\cW^{(1)}_J$ is an isometry and $\cW^{(2)}_J$ is a nonexpansive frame on $\mathbf{L}^2(\bM)$ (defined in \eqref{eqn: weighted inner product}). We provide proofs in Section \ref{sec: The proof of Proposition wavelet isometrices}.

\begin{proposition}\label{prop: waveletisometries}
$\cW^{(1)}_J$ is an isometry from $\mathbf{L}^2(\bM)$ to $\bm{\ell}^2(\mathbf{L}^2(\bM)).$ That is, 
\begin{equation*}
    \big\|\mathcal{W}^{(1)}_J\bx\big\|^2_{\bM}\coloneqq \sum_{j=0}^{J}\big\|\Psi^{(1)}_j\bx\big\|^2_\bM + \big\|\Phi^{(1)}_J\bx\big\|^2_\bM  = \|\bx\|^2_\bM\quad\text{for all }\bx \in\mathbf{L}^2(\bM).
\end{equation*}
\end{proposition}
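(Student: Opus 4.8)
The plan is to reduce everything to the spectral side via Lemma \ref{lem: polynomialproperties}, where the operators $\Psi^{(1)}_j = q_j(\bK)$ and $\Phi^{(1)}_J = q_{J+1}(\bK)$ become diagonal in the $\bv_i$ basis after conjugation by $\bM$. First I would apply \eqref{polysequivpw} to each term, writing $\|\Psi^{(1)}_j \bx\|_\bM^2 = \|q_j(\bT)\bM\bx\|_2^2$ and $\|\Phi^{(1)}_J \bx\|_\bM^2 = \|q_{J+1}(\bT)\bM\bx\|_2^2$. Setting $\by \coloneqq \bM\bx \in \mathbf{L}^2(G)$, the claim becomes the purely spectral identity $\sum_{j=0}^{J} \|q_j(\bT)\by\|_2^2 + \|q_{J+1}(\bT)\by\|_2^2 = \|\by\|_2^2$, and since $\|\bx\|_\bM = \|\bM\bx\|_2 = \|\by\|_2$ the right-hand side matches what we want.

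Next I would expand in the orthonormal eigenbasis $\{\bv_i\}$ of $\bT$. Writing $\by = \sum_i \widehat{\by}(i)\bv_i$ with $\widehat{\by}(i) = \langle \by, \bv_i\rangle_2$, Lemma \ref{lem: polynomialproperties} (specifically \eqref{eqn: Tpolys} and the definition \eqref{eqn: defsquareroot}) gives $q_j(\bT)\by = \sum_i q_j(\lambda_i)\widehat{\by}(i)\bv_i$, so by Parseval $\|q_j(\bT)\by\|_2^2 = \sum_i q_j(\lambda_i)^2 |\widehat{\by}(i)|^2$. Summing over $j$ from $0$ to $J+1$ and interchanging the finite sums yields $\sum_i \left(\sum_{j=0}^{J+1} q_j(\lambda_i)^2\right)|\widehat{\by}(i)|^2$. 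Now I invoke \eqref{eqn: sum to 1}: since each eigenvalue $\lambda_i = g(\omega_i)$ lies in $[0,1]$ (recall $1 = \lambda_0 > \lambda_1 \geq \cdots \geq \lambda_{n-1} \geq 0$), the inner sum equals $1$ for every $i$, leaving $\sum_i |\widehat{\by}(i)|^2 = \|\by\|_2^2$ by Parseval again.

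The only mild subtlety is bookkeeping: the wavelet transform $\cW^{(1)}_J$ is indexed by $0 \leq j \leq J$ for the $\Psi$'s plus the single low-pass $\Phi^{(1)}_J$, so the relevant polynomials are $q_0, \dots, q_J$ together with $q_{J+1}$ — exactly the full collection appearing in \eqref{eqn: sum to 1}. I would make sure to note explicitly that $q_{J+1}(\bK) = \Phi^{(1)}_J$ supplies the missing $j = J+1$ term so that the partition-of-unity identity applies verbatim. There is no real obstacle here; if anything, the one point requiring a line of justification is that every $\lambda_i \in [0,1]$, which is precisely where \eqref{eqn: sum to 1} is applicable — this follows from the hypotheses on $g$ (strictly decreasing, $g(0)=1$, $g(2)=0$) and $0 \leq \omega_i \leq 2$, both recorded in Section \ref{sec: notation}. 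So the proof is essentially: peel off $\bM$ via \eqref{polysequivpw}, apply Parseval in the $\bT$-eigenbasis, and conclude with the pointwise identity \eqref{eqn: sum to 1} evaluated at each $\lambda_i$.
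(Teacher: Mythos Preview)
Your proof is correct and follows essentially the same approach as the paper: both arguments conjugate by $\bM$ to reduce to the $\bT$-eigenbasis, then invoke the pointwise identity \eqref{eqn: sum to 1} at each eigenvalue $\lambda_i \in [0,1]$. The paper phrases the last step more abstractly as computing the frame bounds $\min_i Q(\lambda_i)$ and $\max_i Q(\lambda_i)$ for $Q(t)=\sum_{j=0}^{J+1} q_j(t)^2$, while you write out the Parseval expansion explicitly, but these are the same computation. One small technical remark: \eqref{polysequivpw} is stated for polynomials, and the $q_j$ are square roots of polynomials, so strictly speaking you should cite \eqref{eqn: defsquareroot} (which you do mention) to justify $\|q_j(\bK)\bx\|_\bM = \|q_j(\bT)\bM\bx\|_2$; the verification is identical to that of \eqref{polysequivpw}.
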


\begin{proposition}\label{prop: nonexpansivewaveletframes}
$\cW^{(2)}_J$ is a nonexpansive frame,   i.e., there exists a universal constant $c>0$, which in particular is independent of $\bM,$ $J$, or the eignenvalues of $\bT$,   such that
 \begin{equation*}
c\|\bx\|_\bM^2\leq    \big\|\mathcal{W}^{(2)}_J\bx\big\|^2_{\bM}\coloneqq \sum_{j=0}^{J}\big\|\Psi^{(2)}_j\bx\big\|^2_\bM + \big\|\Phi^{(2)}_J\bx\big\|^2_\bM  \leq  \|\bx\|^2_\bM \quad \text{for all } \bx\in\mathbf{L}^2(\bM).
\end{equation*}
\end{proposition}

\begin{remark}
If we omit 
the low-pass operator $\Phi_J^{(2)},$ we can repeat the arguments of  Proposition 4.1 of \cite{gama:diffScatGraphs2018} to show that $\big\{\Psi^{(2)}_j\big\}_{0 \leq j \leq J}$ is a non-expansive frame, with a lower bound depending on the geometry of $G$, when restricted to $\mathbf{x}$ such that $\langle\mathbf{x},\bu_0\rangle_\mathbf{M}=0.$ For certain tasks, this may be advantageous. For example, if  $\bK=\bP^T,$ we may check that $\bu_0$ is a constant vector and that $\Psi_j\bu_0=p_j(\bK)\bu_0=0$. Therefore, these restricted wavelets could be used to produce a representation of an input $\bx$ which is invariant to the addition of a constant vector. 
\end{remark}

\section{The Geometric Scattering Transform}\label{sec: scattering}

In this section, we will construct the scattering transform as a multilayered architecture built off of a  frame $\cW$ such as the  wavelet transforms  $\mathcal{W}_J^{(1)}$ and $\mathcal{W}_J^{(2)}$ introduced in Section \ref{sec: wavelets}. We shall see the scattering transform   is a continuous operator on $\mathbf{L}^2(\bM)$ whenever $\cW$ is nonexpansive. 
We shall also see that it has desirable conservation of energy bounds when $\cW=\cW^{(1)}_J$ due to the fact that $\cW^{(1)}_J$ is an isometry. On the other hand, we shall see in the following section that the scattering transform  has stronger stability guarantees, which are independent of the graph size, when $\cW=\cW^{(2)}_J.$ 

\subsection{Definitions}


We generalize the wavelet frames $\cW_J^{(1)}$ and $\cW_J^{(2)}$ defined in Section \ref{sec: wavelets}. Let $\mathcal{J}$ be any countable indexing set, and let
$    \mathcal{W} \coloneqq \{\Psi_j,\Phi\}_{j\in\mathcal{J}}
$ be a frame on $\mathbf{L}^2(\bM)$ with
\begin{equation*}
    c\|\bx\|^2_{\bM}\leq\|\mathcal{W}\bx\|^2_{\bM}\coloneqq \sum_{j\in\mathcal{J}}\|\Psi_j\bx\|^2_{\bM}+ \|\Phi\bx\|^2_{\bM}\leq C\|\bx\|^2_{\bM},
\end{equation*}
for some $0 < c \leq C < \infty.$    In this paper, we are primarily interested in the case where $\mathcal{J}=\{0,\ldots,J\}$ and $\cW$ is either $\cW^{(1)}_J$ or $\cW^{(2)}_J$ meaning that $\Psi_j = \Psi_j^{(i)}$ and $\Phi = \Phi_J^{(i)}$ for $i=1,2$, respectively.
If a result is specific to these cases, will write ``let $\cW$ be either $\cW_J^{(1)}$ or $\cW_J^{(2)}$,'' in which case we are implicitly assuming that $\cJ = \{0, \ldots, J \}$.
In general, it will be useful to think of the matrices $\Psi_j$ as wavelets and $\Phi$ as a low-pass filter, but we emphasize that this specification is not required for many of the results that follow. Indeed, we will define the geometric scattering transform for generic frames in order to highlight the relationship between  properties of the scattering transform and properties of the underlying frame.


We let $M$  be the pointwise modulus operator on, $M\bx \coloneqq ( |\bx(0)|, \ldots, |\bx(n-1)|)$ and  let
$    \bU[j]\bx \coloneqq M\Psi_j\bx$
 for $j\in\mathcal{J}.$ We view this transformation $\bU[j]$ as a hidden-layer of our network and will construct a multilayered architecture by iteratively applying this transformation at different values of $j$. Formally, for $m >  0,$ let $\mathcal{J}^m$ denote the $m$-fold Cartesian product of $\mathcal{J}$ with itself, and for an index path $\pathvar=(j_1,\ldots,j_m)\in\mathcal{J}^m$ let 
\begin{equation*}
     \bU[\pathvar]\bx \coloneqq \bU[j_m]\ldots\bU[j_1]\bx = M\Psi_{j_m} \cdots M\Psi_{j_1}\bx.
\end{equation*}
For $m=0$ we declare that $\mathcal{J}^0$ is the empty set and interpret $\bU[\pathvar_e]\bx=\bx$ when $\pathvar_e$ is the ``empty index.'' Next, we define the windowed scattering coefficients by 
\begin{equation*}
    \bS[\pathvar]\bx \coloneqq \Phi\bU[\pathvar]\bx.
\end{equation*}
In this definition, the final multiplication by the (low-pass) matrix $\Phi$ is interpreted as a local averaging, but one could choose a different type of matrix $\Phi$ so long as the frame condition still holds. We will also define non-windowed scattering coefficients which replaces this local averaging by a weighted global averaging. Specifically, we let $\bmu\in\mathbf{L}^2(\bM)$ be a weighting vector and define the non-windowed scattering coefficients by
\begin{equation*}
    \overline{\bS}[\pathvar]\bx \coloneqq \langle \bmu,\bU[\pathvar]\bx\rangle_\bM.
\end{equation*}
One natural choice is  $\bmu=\left(\bM^T\bM\right)^{-1}\mathbf{1},$ where $\mathbf{1}$ is the vector of all ones. In this case, one may verify that  $\overline{\bS}[\pathvar]\bx=\|\bU[\pathvar]\bx\|_1,$
and we recover a setup similar to \cite{gao:graphScat2018}. Another natural choice is $\bmu=\mathbf{u}_0 = \bM^{-1} \bv_0,$ in which case we recover a setup similar to \cite{gama:diffScatGraphs2018} if we set $\bM=\bI.$

Given these coefficients, we define an operator $\bU:\mathbf{L}^2(\bM)\rightarrow \bm{\ell}^2(\mathbf{L}^2(\bM)),$ by
\begin{equation*}
    \bU\bx \coloneqq \{\bU[\pathvar]\bx: m\geq 0, \, \pathvar=(j_1,\ldots,j_m) \in\mathcal{J}^{m}\}.
\end{equation*}
Next we define the windowed and nonwindowed scattering transform $\bS:\mathbf{L}^2(\bM)\rightarrow\bm{\ell}^2(\mathbf{L}^2(\bM))$ and $\overline{\bS}:\mathbf{L}^2(\bM)\rightarrow\bm{\ell}^2$ by 
\begin{equation*}
    \bS\bx \coloneqq \{\bS[\pathvar]\bx: m\geq 0, \, \pathvar \in\mathcal{J}^{m}\} \quad\text{and}\quad 
    \overline\bS\bx \coloneqq \{\overline\bS[\pathvar]\bx: m\geq 0, \, \pathvar \in\mathcal{J}^{m}\}.
\end{equation*}
When $\cW$ is either of the graph wavelet transforms $\cW^{(1)}_J$ or $\cW^{(2)}_J$ constructed in Section \ref{sec: wavelets} and $\cJ=\{0,\ldots,J\}$, we may write $\bS_J$ in place of $\bS$ if we want to emphasize the dependence on $J$. Similarly, when we want to emphasize dependence on $\bmu$ we will write $\overline{\bS}_{\bmu}$ in place of $\overline{\bS}.$  We will let $\mathbf{S}^\ell$ and $\overline{\bS}^\ell$ denote $\ell$-th layer of the windowed and nonwindowed scattering transform:
\begin{equation*}
    \bS^{(\ell)}\bx \coloneqq \{\bS[\pathvar]\bx:  \pathvar=(j_1,\ldots,j_\ell) \in\mathcal{J}^{\ell}\},\quad\text{and}\quad \overline{\bS}^{(\ell)}\bx \coloneqq \{\overline{\bS}[\pathvar]\bx:  \pathvar=(j_1,\ldots,j_\ell) \in\mathcal{J}^{\ell}\}.
\end{equation*}

When $\cW$ is either of the wavelet transforms $\cW^{(1)}_J$ or $\cW^{(2)}_J$ constructed in Section \ref{sec: wavelets} and $\bmu=\bu_0,$ we may view the non-windowed scattering transform as the limit of the windowed scattering transform as $J\rightarrow\infty.$ Formally, we can prove the following proposition.

\begin{proposition}\label{prop: limit}
Let $\cW$ be either of the wavelet transforms $\cW^{(1)}_J$ or $\cW^{(2)}_J$ constructed in Section \ref{sec: wavelets}, and let $\bmu=\bu_0.$ Then for all paths $\pathvar\in\mathcal{J}^m$ and all $\bx\in\mathbf{L}^2(\bM),$ 
\begin{equation*}
    \lim_{J\rightarrow\infty}\|\bS_J[\pathvar]\bx - (\overline{\bS}_{\bu_0}[\pathvar]\bx)\bu_0\|_\bM=0,
\end{equation*}
where, on the left-hand side, we  assume that $J$ is large enough such that $\bS_J[\pathvar]$ is well-defined.
\end{proposition}
For a proof of Proposition \ref{prop: limit}, please see Section \ref{sec: the proof of proposition limi}. 
 
\subsection{Continuity and Conservation of Energy Properties}

The following theorem shows that the windowed scattering transform $\bS$ is  nonexpansive and the non-windowed scattering transform  $\overline{\bS}$ is Lipschitz continuous whenever the underlying frame $\cW$ is nonexpansive. 

\begin{theorem}[Nonexpansiveness]\label{thm: nonexpansive}
If $\cW$ is a frame with $C\leq 1$ in \eqref{eqn: frameAB}, then 
\begin{equation}\label{eqn: nonexpanisvewindow}
    \|\bS\mathbf{x}-\bS\mathbf{y}\|_{\bM}\leq  \|\mathbf{x}-\mathbf{y}\|_{\bM}\quad \text{for all }\bx,\by\in\mathbf{L}^2(\bM).
\end{equation}
Furthermore, if $\cW$ is either of the wavelet transforms $\cW^{(1)}_J$ or $\cW^{(2)}_J$ constructed in Section \ref{sec: wavelets},  $\bmu=\bu_0,$ and $\min_i|\bu_0(i)|>0,$ then we have 
\begin{equation}\label{eqn: nonexpanisvenonwindow}
    \|\overline{\bS}_{\bu_0}\mathbf{x}-\overline{\bS}_{\bu_0}\mathbf{y}\|_{2}\leq  \frac{1}{\sqrt{n}}\frac{\| \bM^{-1} \|_2}{\min_i|\bu_0(i)|}\|\mathbf{x}-\mathbf{y}\|_{\bM}.
\end{equation}
\end{theorem}

The proof of \eqref{eqn: nonexpanisvewindow} is similar to analogous results in e.g., \cite{mallat:scattering2012} and  \cite{zou:graphCNNScat2018}. Equation \eqref{eqn: nonexpanisvenonwindow} is proved by using Proposition \ref{prop: limit} to view $\overline{\bS}$ as the rescaled limit of $\bS_J$ as $J\rightarrow \infty$ and  applying Fatou's lemma. We note that the scaling factor of $\frac{1}{\sqrt{n}}$ is a consequence of the fact that we have assumed $\bu_0$ to have unit norm on $\mathbf{L}^2(\bM).$ A full proof is provided in Section \ref{sec: The proof of thm: non expansive}.
\begin{remark} The proof of \eqref{eqn: nonexpanisvenonwindow} relies on \eqref{eqn: nonexpanisvewindow}, Proposition \ref{prop: limit}, and Fatou's Lemma. Therefore, we are not able to establish this inequality for general  $\bmu$ and $\cW$, which do not satisfy the assumptions of  Proposition \ref{prop: limit}. However, if $\Phi$ is invertible one can use the relationship $\bU \bx=\Phi^{-1}\bS \bx$ to show $\overline{\bS}_{\bm{\mu}}$ is still Lipschitz continuous since 
\begin{equation*}
    \|\overline{\bS}_{\bmu}\mathbf{x}-\overline{\bS}_{\bmu}\mathbf{y}\|_{2}\leq    \|\bmu\|_\bM\|\Phi^{-1}\|_\bM\|\bS\mathbf{x}-\bS\mathbf{y}\|_{\bM}\leq \|\bmu\|_\bM\|\Phi^{-1}\|_\bM\|\mathbf{x}-\mathbf{y}\|_{\bM}.
\end{equation*} 
\end{remark}
The next theorem shows that if $\mathcal{W}$ is either of the  wavelet transforms constructed in Section \ref{sec: wavelets}, then $\bU$ experiences rapid energy decay. This implies that it is possible to obtain a good representation of an input signal $\mathbf{x}$ using only a few layers. Our arguments use ideas similar to the proof of Proposition 3.3 in \cite{zou:graphCNNScat2018}, with minor modifications to account for the fact that our wavelet constructions are different. Please see Section \ref{sec: proof of energy decay} for a complete proof.

\begin{theorem}[Energy Decay]\label{thm: energydecay}
Let $\cW$ be either of the wavelet transforms $\cW^{(1)}_J$ or $\cW^{(2)}_J$ constructed in Section \ref{sec: wavelets}. Then for all $\bx\in\mathbf{L}^2( \bM)$ and all $m\geq 1$,
\begin{equation}\label{eqn: ratio}
    \sum_{\pathvar\in\mathcal{J}^{m+1}}\|\bU[\pathvar]\mathbf{x}\|_{\bM}^2\leq \left(1-\frac{\bd_{\min}}{\|\bd\|_1}\right) \sum_{\pathvar\in\mathcal{J}^m}\|\bU[\pathvar]\mathbf{x}\|_{\bM}^2.
\end{equation}
Therefore, for all $m\geq 0,$
\begin{equation}\label{eqn: decay}
    \sum_{\pathvar\in\mathcal{J}^{m+1}}\|\bU[\pathvar]\mathbf{x}\|_{\bM}^2\leq \left(1-\frac{\bd_{\min}}{\|\bd\|_1}\right)^{m} \| \bx \|_\bM^2.
\end{equation}
\end{theorem}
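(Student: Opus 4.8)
The plan is to prove the one-step energy-ratio bound \eqref{eqn: ratio} first, and then obtain \eqref{eqn: decay} by a trivial induction on $m$. For the inductive step, once \eqref{eqn: ratio} is established, one has
\[
\sum_{\mathbf{j}\in\mathcal{J}^{m+1}}\|\bU[\mathbf{j}]\bx\|_{\bM}^2 \leq \left(1-\frac{\bd_{\min}}{\|\bd\|_1}\right)\sum_{\mathbf{j}\in\mathcal{J}^{m}}\|\bU[\mathbf{j}]\bx\|_{\bM}^2 \leq \cdots \leq \left(1-\frac{\bd_{\min}}{\|\bd\|_1}\right)^m\sum_{\mathbf{j}\in\mathcal{J}^{1}}\|\bU[\mathbf{j}]\bx\|_{\bM}^2,
\]
and then I would bound $\sum_{\mathbf{j}\in\mathcal{J}^1}\|\bU[\mathbf{j}]\bx\|_{\bM}^2 = \sum_{j=0}^{J}\|M\Psi_j\bx\|_{\bM}^2 = \sum_{j=0}^{J}\|\Psi_j\bx\|_{\bM}^2$ (since the pointwise modulus $M$ is an isometry on $\mathbf{L}^2(G,\bM)$ — this uses that $\bM$ acts so that $\|M\bz\|_\bM = \|\bz\|_\bM$, which needs a short check, or one simply uses the frame upper bound applied after one layer) by the frame upper bound $B\le 1$ from Propositions \ref{prop: waveletisometries} and \ref{prop: nonexpansivewaveletframes}, giving $\le\|\bx\|_\bM^2$. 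That yields \eqref{eqn: decay}.

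The heart of the matter is \eqref{eqn: ratio}. The key structural fact is that for each wavelet transform, summing the squared norms over one extra layer index amounts to applying the operator $\sum_{j=0}^{J}(\Psi_j^{\mathrm{adjoint}}\Psi_j)$ inside the $\bM$-inner product, and since $M$ is a pointwise isometry we can peel it off. Concretely, for a fixed $\bj\in\mathcal{J}^m$ write $\bz = \bU[\bj]\bx$; then $\sum_{j=0}^{J}\|\bU[(\bj,j)]\bx\|_\bM^2 = \sum_{j=0}^{J}\|M\Psi_j\bz\|_\bM^2 = \sum_{j=0}^{J}\|\Psi_j\bz\|_\bM^2$. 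Using that the $\Psi_j$ are self-adjoint on $\mathbf{L}^2(G,\bM)$ and diagonalized in the basis $\{\bu_i\}$ with $\Psi_j\bu_i = r_j(\lambda_i)\bu_i$ where $r_j = q_j$ or $r_j = p_j$, and writing $\bz = \sum_i c_i \bu_i$, this equals $\sum_i |c_i|^2 \sum_{j=0}^J r_j(\lambda_i)^2$. Now $\sum_{j=0}^{J} q_j(t)^2 = 1 - q_{J+1}(t)^2 = 1 - t^{2^J}$ and $\sum_{j=0}^{J} p_j(t)^2 \le 1 - p_{J+1}(t)^2 = 1 - t^{2^{J+1}} \le 1 - t^{2^{J+1}}$; in either case the multiplier is $\le 1 - t^{2^{J+1}}$ (and for $\cW^{(1)}$ it is $\le 1 - t^{2^J} \le 1$ as well — I'd just use the common bound $1 - t^{2^{J+1}}$, or more simply the crude bound $1 - (\text{something})$; the cleanest is to note $\sum_{j=0}^J r_j(t)^2 \le 1 - t^{2^{J+1}}$ for both). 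Since $\bz = M\Psi_{j_m}\cdots$ is pointwise nonnegative, its component on $\bu_0$ is controlled: precisely, $\lambda_0 = 1$, and the multiplier at $i=0$ is $0$ for $\cW^{(1)}$ and $0$ for $\cW^{(2)}$ too (since $1 - 1^{2^{J+1}} = 0$), so the $i=0$ term drops out entirely; for $i\ge 1$ we have $\lambda_i \le \lambda_1 < 1$, but that only gives a $J$-dependent gap, which is not what we want.

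So the main obstacle — and the point where one must be careful — is getting the $J$-\emph{independent} factor $1 - \bd_{\min}/\|\bd\|_1$ rather than something depending on $\lambda_1$. The resolution must exploit that $\bz$ is \emph{entrywise nonnegative} (being the output of a modulus), which forces a definite amount of its energy onto the Perron eigenvector $\bu_0 = \bM^{-1}\bv_0 = \bM^{-1}\bd^{1/2}/\|\bd^{1/2}\|_2$. Quantitatively, I would estimate $\langle \bz, \bu_0\rangle_\bM = \langle \bM\bz, \bv_0\rangle_2 = \langle \bM\bz, \bd^{1/2}\rangle_2/\|\bd^{1/2}\|_2$. One checks that $\bM\bz$ is, for the relevant $\bM$, related to an entrywise-nonnegative vector paired against $\bd^{1/2}$, and Cauchy–Schwarz in the reverse direction together with the ratio $\bd_{\min}/\|\bd\|_1$ yields $|c_0|^2 = |\langle\bz,\bu_0\rangle_\bM|^2 \ge (\bd_{\min}/\|\bd\|_1)\|\bz\|_\bM^2$. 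Granting this, and using that the $i=0$ multiplier is $0$ while every $i\ge 1$ multiplier is $\le 1$, we get $\sum_{j=0}^J\|\Psi_j\bz\|_\bM^2 = \sum_{i\ge 1}|c_i|^2(\cdots) \le \sum_{i\ge1}|c_i|^2 = \|\bz\|_\bM^2 - |c_0|^2 \le (1 - \bd_{\min}/\|\bd\|_1)\|\bz\|_\bM^2$. Summing over $\bj\in\mathcal{J}^m$ gives \eqref{eqn: ratio}. I expect the delicate points to be (i) verifying $M$ is an $\bM$-isometry or otherwise justifying peeling off the modulus in the norm computation, and (ii) the precise Cauchy–Schwarz / nonnegativity argument bounding $|c_0|^2$ from below by $(\bd_{\min}/\|\bd\|_1)\|\bz\|_\bM^2$ uniformly in the layer — this lower bound on the low-frequency content is the real content of the theorem, and the rest is bookkeeping.
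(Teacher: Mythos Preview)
Your approach is correct and is essentially the same argument as the paper's, just packaged dually. The paper bounds the \emph{low-pass} energy from below, $\|\bS[\bj]\bx\|_\bM^2=\|\Phi\bz\|_\bM^2\ge|\langle\bv_0,\bM\bz\rangle_2|^2$, and then invokes the energy-level inequality (Lemma~\ref{lem: energylevels}) to get $\sum_{\cJ^{m+1}}\|\bU[\bj]\bx\|_\bM^2\le\sum_{\cJ^m}\|\bU[\bj]\bx\|_\bM^2-\sum_{\cJ^m}\|\bS[\bj]\bx\|_\bM^2$; you instead bound the \emph{wavelet} sum $\sum_{j\le J}\|\Psi_j\bz\|_\bM^2$ from above directly via the spectral multipliers, using that $\sum_{j\le J}r_j(1)^2=0$ kills the $i=0$ term. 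By the frame upper bound $B\le1$ these two routes are equivalent, so your version simply absorbs Lemma~\ref{lem: energylevels} into the direct spectral computation. The crucial inequality $|c_0|^2=|\langle\bv_0,\bM\bz\rangle_2|^2\ge(\bd_{\min}/\|\bd\|_1)\|\bz\|_\bM^2$, exploiting that $\bz=\bU[\bj]\bx$ is entrywise nonnegative, is exactly what the paper proves (via $\|\bM\bz\|_1\ge\|\bM\bz\|_2$ and the weighting by $\bd^{1/2}$). Regarding your flagged point~(i): the paper handles the modulus by using only that $M$ is \emph{nonexpansive} on $\mathbf{L}^2(G,\bM)$ (inside Lemma~\ref{lem: energylevels}), not that it is an isometry; nonexpansiveness already suffices for your inequality $\sum_j\|M\Psi_j\bz\|_\bM^2\le\sum_j\|\Psi_j\bz\|_\bM^2$, so you can weaken your hypothesis there. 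The base case and induction for \eqref{eqn: decay} are handled identically in both arguments.
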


The next theorem shows that if $\mathcal{W}=\mathcal{W}_J^{(1)},$ then the windowed graph scattering transform conserves energy on $\mathbf{L}^2(\bM).$ Its proof, is nearly identical to the proof of Theorem 3.1 in \cite{zou:graphCNNScat2018}. However, we give a full proof in Section \ref{sec: the proof of conservation of energy} for the sake of completeness.  

\begin{theorem}[Energy Conservation]\label{thm: conservation of energy}
If $\mathcal{W}=\mathcal{W}_J^{(1)}$, then  
$$\left\|\bS_J\bx\right\|_{\bM}=\|\bx\|_{\bM}\quad
\text{for all }\bx\in\mathbf{L}^2(\bM).$$
\end{theorem}

\subsection{Permutation Invariance and Equivariance}

In tasks such as graph classification, two graphs as equivalent if one is a permutation of the other. In this section, we will show that both $\bU$ and the windowed graph scattering transform  are equivariant with respect to permutations. As a consequence, we will show the non-windowed scattering transform is fully permutation invariant and the windowed-scattering transform, under certain assumptions, is permutation invariant up to a factor depending on the scale of the low-pass filter.
 
Let $S_n$ denote the permutation group on $n$ elements, and, for $\Pi\in S_n,$  let $\Pi(G)$ be the graph obtained by permuting the vertices of $G.$ 
If $G'=\Pi(G)$, we define $\bM',$  by
    $\bM' \coloneqq \Pi\bM\Pi^T$.
To motivate this definition, we note that if $\bM$ is the identity, then $\bM'$ is also the identity. Additionally, in the case where $\bM$ is the square-root degree matrix $\bD^{1/2}$, we note that the square-root degree matrix on $G'$ is given by
    $(\bD')^{1/2} = \Pi\bD^{1/2}\Pi^T$
and a similar formula holds when $\bM=\bD^{-1/2}.$ Therefore, in these three cases (which correspond to $\bK=\bT,\bP^T,$ and $\bP$, respectively, if $g=g_\star$) we may  view $\bM'$ as the analog of $\bM$ associated to $G'$. Similarly, we consider the analogs of $\cW$ and $\bmu$ on $G',$ given by
\begin{equation}\label{eqn: dWprime}
    \cW'\coloneqq\Pi\cW\Pi^T\coloneqq\{\Pi\Psi_j\Pi^T,\Pi\Phi\Pi^T\}_{j\in\cJ}\quad\text{and}\quad  \bmu' \coloneqq \Pi\bmu,
\end{equation} 
We also let $\bU',\bS'$ and $\overline{\bS}'$ denote  analogs of $\bU,\bS,$ and $\overline{\bS}$ on $G'$  constructed from $\cW'$ and $\bmu'.$

To further understand the definition of $\cW'$, we note that  the natural analog of $\bT$ on $G'$ is given by
 $   \bT' \coloneqq \Pi\bT\Pi^T.$
Therefore, Lemma \ref{lem: polynomialproperties} implies that  for any polynomial $p,$
\begin{align*}
    p\left((\bM')^{-1}\bT'\bM'\right) = (\bM')^{-1}p\left(\bT'\right)\bM' &= \left(\Pi\bM\Pi^T\right)^{-1}p\left(\Pi\bT\Pi^T\right)\left(\Pi\bM\Pi^T\right) \\ 
    &=\Pi \bM^{-1}p(\bT)\bM\Pi^T\\&=\Pi p\left(\bM^{-1}\bT\bM\right)\Pi^T
\end{align*}
with a similar formula holding for $q\coloneqq p^{1/2}.$ Thus, if $\cW$ is either of the wavelet transforms $\cW_J^{(1)}$ or $\cW_J^{(2)},$ then $\cW'$ is the analogous wavelet transform constructed from $\bK'\coloneqq(\bM')^{-1}\bT'\bM'.$ 
Given our definitions, it is now straightforward to prove the following equivariance theorem.

\begin{theorem}[Equivariance]\label{thm: equivariance}
Let $\Pi\in S_n$ be a permutation, let $G'=\Pi(G)$ and  let $\cW'$ be the wavelet transform on $G'$  defined as in \eqref{eqn: dWprime}. Then, for all $\bx\in\mathbf{L}^2(\bM)$,
\begin{equation*}
     \bU'\Pi\bx=\Pi\bU\bx\quad\text{and}\quad \bS'\Pi\bx=\Pi\bS\bx.
\end{equation*}
\end{theorem}
\noindent 
The following result shows that the nonwindowed scattering transform is permuation invariant.

\begin{theorem}[Invariance for the Nonwindowed Scattering Transform]\label{thm: permuationinvariancewindowed}
Let $\Pi\in S_n$ be a permutation, $G'=\Pi(G)$ and,  let $\cW'$ be the wavelet transform on $G'$  defined as in \eqref{eqn: dWprime}. Then
\begin{equation*}
    \overline{\bS}'\Pi\bx=\overline{\bS}\bx \quad\text{ for all }\bx\in\mathbf{L}^2(\bM).
\end{equation*}
\end{theorem}

For proofs of Theorems \ref{thm: equivariance} and \ref{thm: permuationinvariancewindowed}, please see Section \ref{sec: invariance proofs}.
Next, we will use Theorem \ref{thm: equivariance} to show that if $\cW$ is either $\cW_J^{(1)}$ or $\cW_J^{(2)}$ and $\bM = \bD^{1/2}$ then the windowed scattering transform is invariant on $\mathbf{L}^2(\bD^{1/2})$ up to a factor depending on the scale of the low-pass filter. We note that $0<\lambda_1<1.$ Therefore, $\lambda_1^t$ decays exponentially fast as $t\rightarrow\infty,$ and so if $J$ is large, the right hand side of equation \eqref{eqn: partial invariance} below will be nearly zero. We also recall that if our spectral function is given by  $g(t)=g_\star(t)$ then this choice of $\bM$ will imply that $\bK=\bP^T.$ 

\begin{figure}
	\centering
	\begin{subfigure}{0.4\textwidth} 
		\includegraphics[width=\textwidth]{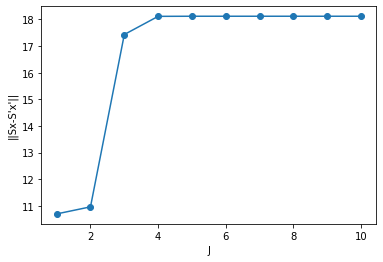}
		\caption{$\bK=\bT$} 
	\end{subfigure}
	\vspace{1em} 
	\begin{subfigure}{0.4\textwidth} 
		\includegraphics[width=\textwidth]{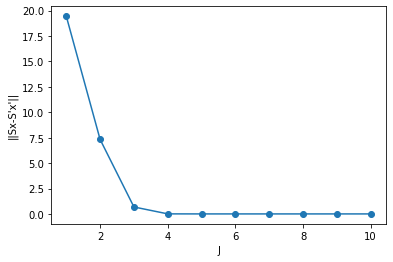}
		\caption{$\bK=\bP^T$} 
	\end{subfigure}
	\caption{Using $\cW_J^{(2)}$, we compare the windowed scattering coefficients $\bS_J \bx$ of a signal $\bx$ on a graph $G$ to the windowed scattering coefficients $\bS_J \Pi \bx$ of the permuted signal $\Pi\bx$ on the graph $G'=\Pi(G)$ for increasing $J$ and with $\bK$ equal to both $\bT$ and $\bP^T$. As we can see, the difference rapidly converges to zero when $\bK=\bP,$  but not when $\bK=\bT$. Here, we took $G$ to be an Erd\H{o}s-Reny random graph with $100$ vertices and the probability of connecting two vertices with an edge as $p=0.7$. The entries of $\bx$ are i.i.d. standard Gaussian random variables.}
	\label{fig: KPvKT}
\end{figure}

\begin{theorem}[Invariance for the Windowed Scattering Transform]\label{thm: permuatation invariance} 
Let $\cW$ be either $\cW^{(1)}_J$ or $\cW^{(2)}_J,$ 
let $\Pi\in S_n$, $G'=\Pi(G)$, and let $\cW'$ be the wavelet transform on $G'$ defined as in \eqref{eqn: dWprime}. Assume $\bM=\bD^{1/2}$ so $\bK=\bP^T$. Let $t=2^{J-1}$ if $\cW=\cW^{(1)}_J$ and $t=2^J$ if $\cW=\cW^{(2)}.$ Then, 
\begin{equation}\label{eqn: partial invariance}
    \|\bS'_J\Pi\bx-\bS_J\bx\|_{\bD^{1/2}} \leq \lambda_1^{t}\|\Pi-\bI\|_{\bD^{1/2}} \left(1 + \frac{\| \bd \|_1}{\bd_{\min}}\right)^{1/2} \|\bx\|_{\bD^{1/2}},\; \text{for all } \bx\in\mathbf{L}^2(\bD^{1/2}).
\end{equation}
\end{theorem}

For a proof of Theorem \ref{thm: permuatation invariance}, please see Section \ref{sec: Invariance Windowed}.
We note that while the term right-hand side of \eqref{eqn: partial invariance} does depend on the permutation $\Pi$, one may use the triangle inequlity to bound it uniformly by $\|\Pi-\mathbf{I}\|_{\mathbf{D}^{1/2}}\leq 1 + \sqrt{  \frac{\|\mathbf{d}\|_\infty}{\mathbf{d}_{\min}}  }.$ In particular, this implies that the right-hand side converges to zero as $J\rightarrow\infty.$ It is also important to note, as illustrated by Figure \ref{fig: KPvKT}, that the windowed scattering transform is not invariant for general $\bK$. Indeed, inspecting the proof, we see the invariance of the windowed scattering transform is a result of the invariance of the low-pass filter $\Phi_J.$ For large values of $J,$ this operator essentially projects the signal $\bx$ onto the bottom eigenvector $\bu_0.$ Thus, the invariance of the low-pass filter is a result of $\bu_0$ being a constant vector. Since $\bu_0=\bM^{-1}\bd^{1/2},$ this only occurs when $\bM=\bD^{1/2}.$ We also note that in \cite{zou:graphCNNScat2018}, the scattering transform is constructed from the unnormalized Laplacian, whose bottom eigenvector is always constant, which is key to the invariance result obtained there.

\section{Stability to Graph Perturbations}\label{sec: stability}

In this section let $G=(V,E,W)$ and $\widetilde G=(\widetilde V,\widetilde E ,\widetilde W)$ be  weighted, connected graphs with $|V|=|\widetilde V|=n,$ and let $\bM$ and $\widetilde \bM$  be invertible matrices. Throughout this section, for any object $X$ associated to $G$ or $\mathbf{L}^2(\bM),$ we will let $\widetilde X$ denote the analogous object on $\widetilde  G$ or $\mathbf{L}^2(\widetilde \bM)$, so e.g., $\widetilde \bd$ is the degree vector on $\widetilde G.$

Our analysis here is motivated by two problems in the machine learning literature on graphs. The first problem is graph classification and the second problem is graph alignment. In graph classification problems, one wishes to assign a label to a graph $G$. In this setting, one requires a representation of $G$ that is invariant to permutations of the vertex indices, since this operation does not change the underlying graph. Relatedly, graphs that are ``similar'' often have similar labels (or even the same label). Thus, the representation should also be stable (or at least continuous) to small perturbations of the graph $G$. On the other hand, one desires a representation that is able to distinguish between graphs that are ``dissimilar,'' indicating that it should retain as much information about $G$ as possible.

The graph alignment problem involves trying to find the ``best'' correspondence between the vertices of two graphs $G$ and $\widetilde{G}$ with  the same number of vertices. In this setting one requires a representation of each vertex in the graph, so that the representation of each vertex in $G$ may be compared to the representation of each vertex in $\widetilde{G}$. In this case, permutation invariance is not required, and in fact would be a detriment, as a permutation invariant representation would not be able to align a graph with a permutation of itself. Instead, one requires a permutation equivariant representation. Additionally, one may want the representation to be locally quasi-invariant, in the sense that permutations of small numbers of vertices (in practice located near each other in the graph) will not affect the representation too much. As with the graph classification problem, the representation should be stable to perturbations of the graph and retain as much information about the structure of the graph as possible. 

Following \cite{gama:diffScatGraphs2018}, as well as \cite{coifman:diffusionMaps2006} and \cite{nadler:dmDynamic2006}, our first measure of the distance between two graphs $G$ and $\widetilde{G}$ will be the  ``diffusion distances''  given by
    $\mathrm{dist} (\bT, \widetilde{\bT}) \coloneqq \|\bT-\widetilde \bT\|_2.$ 
This distance measure it not permutation invariant in the sense that if $\widetilde{G}$ is a permutation of $G$, then $\| \bT - \widetilde{\bT} \|_2$ is not zero in general. 
However, since $\mathbf{T}$ is a diffusion operator, the distance measure provided by $\| \mathbf{T} - \widetilde{\mathbf{T}} \|$ will be stable with respect to relatively small perturbations of the edge weights. Such a distance measure is useful for equivariant models, used in e.g., in graph alignment. 
For invariant problems, though, such as the graph classification task, if $\widetilde{G}$ is equal to $G$ up to a permutation of its vertex/edge indices, then $\widetilde{G}$ is the same graph as $G$. As such, the following permutation invariant graph distance is preferred:
\begin{equation*}
    \mathrm{dist}_{inva} (G, \widetilde{G}) = \min_{\substack{\Pi \in S_n \\ G' = \Pi (G)}} \| \mathbf{T}' - \widetilde{\mathbf{T}} \|_2 \, .
\end{equation*}

We will show the geometric wavelet transform, which is a permutation equivariant transform, is stable with respect to $\| \bT - \widetilde{\bT} \|_2$. We will then use this result to prove the non-windowed geometric scattering transform, which is a permutation invariant representation of the graph $G$, is stable with respect to $\mathrm{dist}_{inva} (G, \widetilde{G})$. Additionally, we show the windowed geometric scattering transform, which is an equivariant transform, is nevertheless stable with respect to $\mathrm{dist}_{inva} (G, \widetilde{G})$ plus a term that measures the size of the permutation required to align $G$ with $\widetilde{G}$ but which is dampened by the inverse of the scale of the low-pass filter. 


In order to carry out this analysis, we introduce additional terms which measure the difference between the matrices $\bM$ and $\widetilde \bM$. Specifically, we  let  
\begin{equation*}
    \bR_1 \coloneqq
    \bR_1(\bM,\widetilde{\bM}) \coloneqq\bM^{-1}\widetilde \bM\quad\text{and}\quad \bR_2 \coloneqq
    \bR_2(\bM,\widetilde{\bM})\coloneqq \widetilde \bM\bM^{-1},
\end{equation*}
and consider the quantities $\kappa(\bM,\widetilde \bM)$ and $R(\bM,\widetilde \bM)$ defined by 
\begin{align*}
    \kappa(\bM,\widetilde \bM)&\coloneqq \max_{i=1,2}\{\max\{\|\bI-\bR_i\|_{2},\|\bI-\bR_i^{-1}\|_{2}\}\}\quad\text{and},\\
    R(\bM,\widetilde \bM) &\coloneqq \max_{i=1,2}\{\max\{\|\bR_i\|_2,\|\bR_i^{-1}\|_2\}\}. 
\end{align*}
When we choose  $\bK$ to be  $\bP$ or $\bP^T$, we have $\bM=\bD^{\pm1/2}$ which implies 
     $\bR_1=\bR_2=\text{diag}\left(\frac{\widetilde{d_i}}{d_i}\right)^{\pm 1/2}$.
Therefore,  $\kappa(\bM,\widetilde \bM)$ measures how different the degree vectors $\bd$ and $\widetilde \bd$ are. More generally, if $\mathbf{M}$ and $\widetilde{\mathbf{M}}$ are diagonal matrices, then $\mathbf{L}^2(\mathbf{M})$ and $\mathbf{L}^2(\widetilde{\mathbf{M}})$ can be viewed as a weighted versions of the standard $\mathbf{L}^2$ space and and $\kappa(\mathbf{M},\widetilde{\mathbf{M}})$ and ${R}\kappa(\mathbf{M},\widetilde{\mathbf{M}})$ measure how different these weightings are.

We note that 
by construction we have $1\leq R(\bM,\widetilde \bM)\leq \kappa(\bM,\widetilde \bM)+1.$ Thus, if $\bM\approx\widetilde \bM$, we will have   $\kappa(\bM,\widetilde \bM)\approx 0$ and consequently $R(\bM,\widetilde \bM)\approx 1.$ We also note that we will have $\kappa(\bM,\widetilde \bM)=0$ and $R(\bM,\widetilde \bM)=1,$ if either  $\bM=\bI$ (so that $\bK=\bT$) or if $\bM=\bD^{\pm1/2}$ and the graphs $G$ and $\widetilde G$ have the same degree vector. The latter situation occurs if e.g. $G$  and $\widetilde G$ are regular graphs of the same degree. 
Furthermore, we note that if $\bM$ is diagonal, then $\bR_1=\bR_2.$ We only need two separate matrices $\bR_1,\bR_2$ when $\bM$ is not a diagonal matrix. However, in our prototypical examples we have $\bM=\bI$ or $\bM=\bD^{\pm 1/2},$ all of which are diagonal.
 
\subsection{Stability of the Wavelet Transforms}\label{sec: Wavelet Stabilty}

In this section, we  analyze the stability of the wavelets $\mathcal{W}_J^{(1)}$ and $\mathcal{W}_J^{(2)}$ constructed in Section \ref{sec: wavelets}.  Our first two theorems provide stability bounds for $\bK=\bT.$ These results will be extended to general $\bK$ by Theorem \ref{thm: transferTtoP}.  

\begin{theorem}[Stability of $\mathcal{W}_J^{(1)}$ with $\bK=\bT$] \label{thm: wavelet stability1} 
Suppose $G=(V,E,W)$ and $\widetilde G=(\widetilde V,\widetilde E,\widetilde W)$ are weighted, connected graphs with $|V|=|\widetilde V|=n.$ Let $\lambda_1^*=\max\{\lambda_1,\widetilde \lambda_1\},$ 
and let $\bM=\bI$ so that $\bK=\bT.$ Let $\mathcal{W}^{(1)}_J$ be the wavelets  constructed  from $\bT$ in Section \ref{sec: wavelets}, and let $\widetilde \cW^{(1)}_J$ be the corresponding wavelets on $\widetilde G$ constructed from $\widetilde \bT.$ Then there exists a constant $C_{\lambda^*_1},$ depending only on $\lambda_1^*$, such that  
\begin{align*}
  \|\cW^{(1)}_J-\widetilde{\cW}^{(1)}_J\|_2^2 
     &\leq C_{\lambda_1^*} 2^J n \|\bT-\widetilde{\bT}\|_2.
\end{align*}
\end{theorem}

The proof of Theorem \ref{thm: wavelet stability1} is in Section \ref{sec: proof of wavelet stability 1}. Our next result provides stability bounds for $\cW^{(2)}_J$ when $\bM=\bI$ (i.e.  $\bK=\bT$). The proof, given in Section \ref{sec: The Proof of wavelet 2 stability}, is closely modeled after the proofs of Lemmas 5.1 and 5.2 in \cite{gama:diffScatGraphs2018}. However, due to a small change in the derivation, our result appears in a slightly different form. 

\begin{theorem}[Stability of $\mathcal{W}_J^{(2)}$ with $\bK=\bT$]\label{thm: wavelet stability2}
Suppose $G=(V,E,W)$ and $\widetilde G=(\widetilde V,\widetilde E,\widetilde W)$ are weighted, connected graphs with $|V|=|\widetilde V|=n.$ Let $\lambda_1^*=\max\{\lambda_1,\widetilde \lambda_1\},$ 
and let $\bM=\bI$ so that $\bK=\bT.$  Let $\mathcal{W}^{(2)}_J$  be  the wavelets constructed  from $\bT$ in Section \ref{sec: wavelets}, let $\widetilde{\mathcal{W}}^{(2)}_J$ be the corresponding wavelets constructed from $\widetilde \bT.$ Then
\begin{equation*}
    \left\|\mathcal{W}_J^{(2)}- \widetilde{\mathcal{W}}_J^{(2)}\right\|_{2}^2 \leq 
    C_{\lambda_1^*}\|\bT-\widetilde \bT\|.
\end{equation*}
\end{theorem}

Comparing Theorems \ref{thm: wavelet stability1} and \ref{thm: wavelet stability2}, we note that while $\cW^{(1)}_J$ has the advantage of being a tight frame, $\cW^{(2)}_J$ has the advantage of possessing  stronger stability guarantees. Numerical experiments indicate that $\cW_J^{(1)}$ is indeed less stable to minor graph perturbations than $\cW_J^{(2)}$. In Figure \ref{fig: wavelet stability}, we plot the stability of both $\cW_J^{(1)}$ and $\cW_J^{(2)}$ on Erd\H{o}s-R\'{e}nyi, Watts-Strogatz (small world) and Barab\'{a}si-Albert (preferential attachment) random graphs. For each random graph $G,$ we obtained a perturbed graph $\widetilde{G}$ by adding mean-zero Guassian noise to each of the edge weights at noise level $\sigma=0.1$. For all three random graphs, the operator norm of $\cW^{(1)}_J-\widetilde{\cW}^{(1)}_J$ was significantly larger than $\cW^{(2)}_J-\widetilde{\cW}^{(2)}_J$. We also note the bounds in Theorem \ref{thm: wavelet stability1} increase exponentially in $J$, but we do not observe this behavior in practice. 

\begin{figure}[htp] 
\begin{subfigure}{0.41\textwidth}
\includegraphics[width=\linewidth]{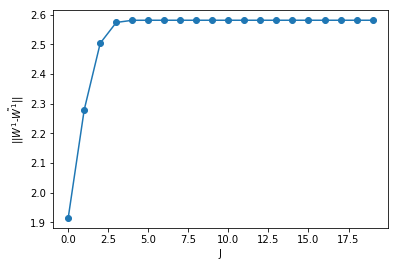}
\caption{$\mathcal{W}^{(1)}_J$ on Erd\H{o}s R\'{e}nyi} \label{fig:a}
\end{subfigure}\hspace*{\fill}
\begin{subfigure}{0.41\textwidth}
\includegraphics[width=\linewidth]{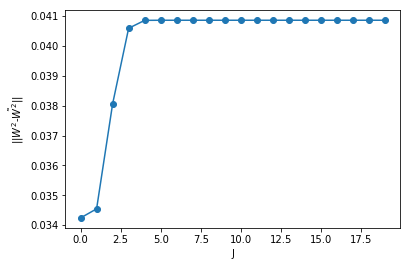}
\caption{$\mathcal{W}^{(2)}_J$ on Erd\H{o}s R\'{e}nyi} \label{fig:b}
\end{subfigure}

\medskip
\begin{subfigure}{0.41\textwidth}
\includegraphics[width=\linewidth]{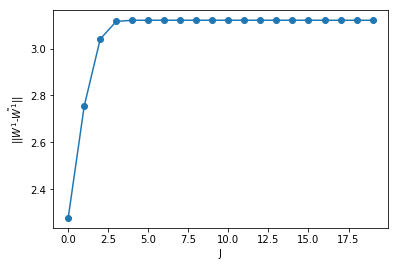}
\caption{$\mathcal{W}^{(1)}_J$ on Watts Strogatz} \label{fig:c}
\end{subfigure}\hspace*{\fill}
\begin{subfigure}{0.41\textwidth}
\includegraphics[width=\linewidth]{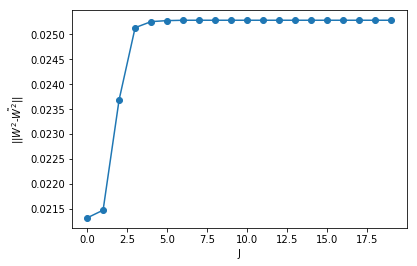}
\caption{$\mathcal{W}^{(2)}_J$ on Watts Strogatz} \label{fig:d}
\end{subfigure}

\medskip
\begin{subfigure}{0.41\textwidth}
\includegraphics[width=\linewidth]{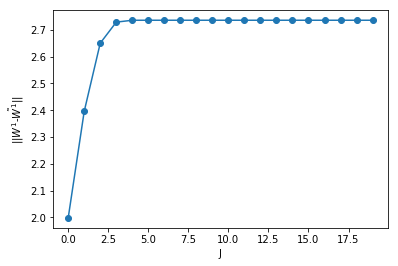}
\caption{$\mathcal{W}^{(1)}_J$ on Barab\'{a}si Albert} \label{fig:e}
\end{subfigure}\hspace*{\fill}
\begin{subfigure}{0.41\textwidth}
\includegraphics[width=\linewidth]{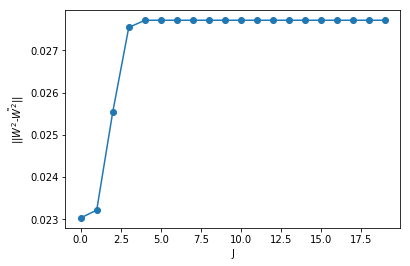}
\caption{$\mathcal{W}^{(2)}_J$ on Barab\'{a}si Albert} \label{fig:f}
\end{subfigure}

\caption{We plot the operator norm of $\mathcal{W}_J^{(1)}-\widetilde{W}_J^{(1)}$ and $\mathcal{W}_J^{(2)}-\widetilde{W}_J^{(2)}$ as a function of $J$  on Erd\H{o}s-R\'{e}nyi ($n=100, p=.1$), Watts-Strogatz ($n=100, k=20, p=.1$) and Barab\'{a}si-Albert ($n=100, m=10$) random graphs.  The perturbed graphs $\widetilde{G}$ were obtained by adding Gaussian noise ($\sigma=.1$) to the edge weights of $G$. Observe that for all three random graphs the wavelets $\mathcal{W}^{(2)}_J$ are more stable than $\mathcal{W}^{(1)}.$ However, it appears that the operator norm of both $\mathcal{W}_J^{(1)}-\widetilde{W}_J^{(1)}$ and $\mathcal{W}^{(2)}-\widetilde{W}_J^{(2)}$ can be taken to be independent of $J.$ In these experiments, we chose $\bK=\bT$ and did not include the low-pass filter $\Phi_J$ in our wavelets. \label{fig: wavelet stability}}
\end{figure}

Theorems \ref{thm: wavelet stability1} and \ref{thm: wavelet stability2} show that  the wavelets $\cW^{(1)}_J$ and $\cW^{(2)}_J$ are stable on $\mathbf{L}^2$ in the special case that $\bK=\bT.$ Our next theorem extends this analysis to general $\bK$ and to more general functions of $\bK$. In particular, it can be applied to any situation where $\cW^{\bT} = \{r_j(\bT)\}_{j\in\cJ}$ and $\cW^{\widetilde{\bT}} = \{r_j(\widetilde \bT)\}_{j\in\cJ}$ form frames on the unweighted $\mathbf{L}^2$ space, where $\cJ$ is some indexing set, and  each of the functions $r_j$ is a polynomial or the square root of a polynomial. We note that in the case where $\bM$ is close to $\widetilde{\bM}$ we have $\kappa(\bM,\widetilde{\bM})\approx 0.$ Therefore, Theorem \ref{thm: transferTtoP} will imply 
$\left\|\cW^{\bK}-\cW^{\widetilde \bK}\right\|_{\bM}^2 \lesssim 6\left\|\cW^{\bT}-\cW^{\widetilde \bT}\right\|_{2}^2$. For a proof, please see Section \ref{sec: proof of general K}.

\begin{theorem}[Wavelet Stability for General $\bK$]\label{thm: transferTtoP}
Suppose $G=(V,E,W)$ and $\widetilde G=(\widetilde V,\widetilde E,\widetilde W)$ are weighted, connected graphs with $|V|=|\widetilde V|=n,$ and let $\bM$ and $\widetilde \bM$ be invertible matrices. Let $\mathcal{J}$ be an indexing set,  and for $j\in\cJ,$ let $r_j(\cdot)$ be either a polynomial or the square root of a polynomial. 
Suppose that $\mathcal{W}^{\bT}=\{r_j(\bT)\}_{j\in\mathcal{J}}$ and $\mathcal{W}^{\widetilde \bT} =\{r_j(\widetilde \bT)\}_{j\in\mathcal{J}}$ are frames  on the unweighted $\mathbf{L}^2$ space   with $C\leq 1$ in \eqref{eqn: frameAB} for both $\cW^{\bT}$ and $\cW^{\widetilde \bT}.$  
Let $\bK=\bM^{-1}\bT\bM$ and let $\cW^{\bK}$ and $\cW^{\widetilde \bK}$ be the frames defined by   $\{r_j(\bK)\}_{j\in\mathcal{J}}$ and $\{r_j(\widetilde \bK)\}_{j\in\mathcal{J}}.$ 
Then,
\begin{equation*}
    \left\|\cW^{\bK}-\cW^{\widetilde \bK}\right\|_{\bM}^2 \leq 6\left(\left\|\cW^{\bT}-\cW^{\widetilde \bT}\right\|_{2}^2+\kappa(\bM,\widetilde \bM)^2\big(\kappa(\bM,\widetilde \bM\big)+1)^2 \right).
\end{equation*}
\end{theorem}

The following corollary is an immediate consequence of Theorem \ref{thm: transferTtoP} combined with  Theorems \ref{thm: wavelet stability1} and \ref{thm: wavelet stability2}. As noted prior to the statement of Theorem \ref{thm: transferTtoP}, in the case where $\bM$ is close to $\widetilde{\bM},$ the terms involving $\kappa(\bM,\widetilde{\bM})$ will be small.

\begin{corollary}[Stability of $\mathcal{W}_J^{(1)}$ and  $\mathcal{W}_J^{(2)}$ with General $\bK$]\label{cor: stabilityK1}
Suppose $G=(V,E,W)$ and $\widetilde G=(\widetilde V,\widetilde E,\widetilde W)$ are weighted, connected graphs with  $|V|=|\widetilde V|=n.$  Let $\bM$ and $\widetilde \bM$ be invertible matrices, and let $\lambda_1^*=\max\{\lambda_1,\widetilde \lambda_1\}.$ 
Let $\mathcal{W}^{(1)}_J$ and $\mathcal{W}^{(2)}_J$  be the wavelet transforms  constructed  from $\bK$ in Section \ref{sec: wavelets}, and let $\widetilde{\mathcal{W}}^{(1)}_J$ and $\widetilde{\mathcal{W}}^{(2)}_J$ be the corresponding wavelet transforms constructed from $\widetilde\bK.$ Then,
\begin{align*}
    \left\|\mathcal{W}^{(1)}_J- \widetilde{\mathcal{W}}^{(1)}_J\right\|_{2}^2 &\leq  
    C_{\lambda_1^*}\left(2^J n\|\bT-\widetilde{\bT}\| +\kappa(\bM,\widetilde{\bM})^2(\kappa(\bM,\widetilde{\bM})+1)^2\right),\:\text{and}\\
    \left\|\mathcal{W}^{(2)}_J- \widetilde{\mathcal{W}}^{(2)}_J\right\|_2^2 &\leq  
    C_{\lambda_1^*}\left(\|\bT-\widetilde{\bT}\|_2+\kappa(\bM,\widetilde \bM)^2(\kappa(\bM,\widetilde \bM)+1)^2\right).
\end{align*}
\end{corollary}
\begin{remark}
Inspecting Corollary \ref{cor: stabilityK1}, we see that our wavelets may become less stable as $\kappa(\bM,\widetilde{\bM})$ increases. In particular, in the case where $\bM=\bD^{\alpha}$ and $\widetilde{\bM}=\widetilde{\bD}^{\alpha}$  we have 
\begin{equation*}
    \kappa(\bM,\widetilde{\bM})=\max\left\{\left|1-\left(\frac{d_i}{\tilde{d}_i}\right)^\alpha\right|,\left|1-\left(\frac{\tilde{d}_i}{d_i}\right)^\alpha\right|\right\}.
\end{equation*}
Therefore, values of $\alpha$ closer to zero lead to tighter stability bounds.
\end{remark}

One might also wish to replace Corollary  \ref{cor: stabilityK1} with an inequality written in terms of $\|\bK-\widetilde \bK\|_\bM$ rather than $\|\bT-\widetilde \bT\|_2.$ This can be done by the following proposition. Recall that if $\bM\approx\widetilde{\bM}$ then
$\kappa(\bM,\widetilde \bM) \approx0$ and $R(\bM,\widetilde \bM)\approx1.$ Therefore, \eqref{eqn: TdisttoPdist} implies 
$\|\bT-\widetilde \bT\|_2\lesssim \|\bK-\widetilde \bK\|_\bM.$ For a proof of Proposition 
\ref{prop: Tdist to Pdist}, please see Section \ref{sec: proofofTtoK}.
\begin{proposition}
\label{prop: Tdist to Pdist}
Suppose $\:G=(V,E,W)$ and $\widetilde G=(\widetilde V,\widetilde E,\widetilde W)$ are are weighted, connected graphs with  $|V|=|\widetilde V|=n.$ Then,
\begin{equation}\label{eqn: TdisttoPdist}
    \|\bT-\widetilde \bT\|_2\leq\kappa(\bM,\widetilde \bM)\left(1+R(\bM,\widetilde \bM)^{3}\right) + R(\bM,\widetilde \bM)\|\bK-\widetilde \bK\|_\bM.
\end{equation}
\end{proposition}

\subsection{Stability of the Geometric Scattering Transform}
\label{sec: scattering stability}

In this section, we will prove  stability bounds for the windowed and non-windowed scattering transform. We will state these results in terms of the stability bound of the underlying wavelet transform and also the upper frame bound of  $\widetilde{\cW}$ when considered as on operator on $\mathbf{L}^2(\bM).$   We do this both to emphasize that the stability of the scattering transform is a consequence of the stability of the underlying frame and so our result can be applied to scattering transforms built onto of other graph wavelet constructions. We will assume that $G=(V,E,W)$ and $\widetilde G=(\widetilde V,\widetilde E,\widetilde W)$ are weighted, connected graphs such that $|V|=|\widetilde V|=n,$ let $\bM$ and $\widetilde \bM$ be $n\times n$ invertible matrices, and assume that $\mathcal{W}=\{\Psi_j,\Phi\}_{j\in\mathcal{J}}$ and $\widetilde{\mathcal{W}}=\{\widetilde{\Psi}_j,\widetilde{\Phi}\}_{j\in\mathcal{J}}$ are frames on $\mathbf{L}^2(\bM)$ and $\mathbf{L}^2(\widetilde \bM)$ such that $C\leq 1$ in \eqref{eqn: frameAB}. 
For a family of matrices $\Gamma = (\bB_i)_{i \in \mathcal{I}}$, where $\mathcal{I}$ is a countable index set, we define the operator norm of $\Gamma : \bL^2 (G, \bM) \rightarrow \bm{\ell}^2 (\bL^2 (G, \bM))$ as
 $    \| \Gamma \|_{\bM} \coloneqq \sup_{\| x \|_{\bM} = 1} \| \Gamma \bx \|_{\bM} \, .
$
If $\Pi$ is a permutation and  $\widetilde{G}'=\Pi(\widetilde{G}),$ then
we will let $\widetilde{\cW}'\coloneqq\Pi\widetilde{\mathcal{W}}\Pi^T= \{\Pi\widetilde{\Psi}_j\Pi^T,\Pi\widetilde{\Phi}\Pi^T\}_{j\in\mathcal{J}}$ denote the corresponding permuted wavelet frame. 


Theorem \ref{thm: windowstability} provides stability guarantees for the windowed  scattering transform with bounds that 
are functions of 
$\|\cW- \widetilde{\cW}\|_{\bM}$ and $\|\widetilde{\cW}\|_{\bM}$.  Similarly, Theorem \ref{thm: scattering stability no window} guarantees stability for the nonwindowed transform. 
By Theorems \ref{thm: wavelet stability1}, \ref{thm: wavelet stability2}, and \ref{thm: transferTtoP} as well as Proposition \ref{thm: Cstability}, these results imply that the scattering transforms constructed  from $\cW^{(1)}_J$ or $\cW^{(2)}_J$ are stable in the sense  that if $G$ is similar to $\widetilde G$ and $\bM$ is similar to $\widetilde \bM$,  then the scattering transforms $\bS$ and $\widetilde \bS$ will produce similar representations of an inputted signal $\bx.$ Many of the ideas in the proof of Theorems \ref{thm: windowstability} and \ref{thm: scattering stability no window} are similar to those used to prove Theorem 5.3 in \cite{gama:diffScatGraphs2018}. The primary difference is Lemma \ref{lem: nonexpansiveU} which is needed because $\widetilde \cW$ is  not (in general) a non-expansive frame on $\mathbf{L}^2(\bM),$ and therefore our results will involve terms related to the operator norm of $\widetilde{\mathcal{W}}$ on $\mathbf{L}^2(\bM),$ which can be controlled by Proposition \ref{thm: Cstability}. We also note that both Theorem \ref{thm: windowstability} and \ref{thm: scattering stability no window} consider pertubations to both the input signal $\mathbf{x}$ an to the graph structure whereas Theorem 5.3 of \cite{gama:diffScatGraphs2018} only considered perturbations to the graph structure.

\begin{theorem}[Stability for the Windowed Scattering Transform]\label{thm: windowstability}

Let $G=(V,E,W)$ and $\widetilde G=(\widetilde V,\widetilde E,\widetilde W)$ be weighted, connected graphs with $|V|=|\widetilde V|=n,$  let $\bM$ and $\widetilde \bM$ be invertible $n\times n$ matrices, 
and let $\mathcal{J}$ be an indexing set. Let  $\mathcal{W}=\{\Psi_j,\Phi\}_{j\in\mathcal{J}}$ and  $\widetilde{\mathcal{W}}=\{\widetilde{\Psi}_j,\widetilde{\Phi}\}_{j\in\mathcal{J}}$ be frames on $\mathbf{L}^2(\bM)$ and $\mathbf{L}^2(\widetilde{\bM}),$ such that $C\leq 1$ in \eqref{eqn: frameAB}. Let $\bS^\ell$ and  $\widetilde{\bS^\ell}$ be the $\ell$-th layers of the windowed scattering transforms on $G$ and $\widetilde{G}$.  
Further assume that 
$\bS^{\ell}$ is approximately permutation invariant up to a factor of $\cB$ in the sense that
\begin{equation}\label{eqn: permutation assumption}
\left\|\Pi\bS^\ell\bx-\bS^\ell\bx\right\|_{\bM}\leq \mathcal{B}\|\bx\|_{\bM}
\end{equation}
for all $\bx\in\mathbf{L}^2(\bM)$ and $\Pi\in S_n$. Then for all $\bx\in\mathbf{L}^2(\bM)$ and $\widetilde{\bx}\in\mathbf{L}^2(\widetilde{\bM})$
\begin{align}\label{eqn: stability bound for windowed scattering transform}
    &\left\|\bS^\ell\bx-\widetilde{\bS^\ell}\widetilde{\bx}\right\|_{2}\\\leq&     R(\bM,\bI)^2\inf_{\substack{\Pi\in S_n \\ G'=\Pi(G)}}\left(\mathcal{B}\|\bx\|_2 + \|\Pi\bx-\widetilde{\bx}\|_2 + \sqrt{2} \|\cW'-\widetilde{\cW}\|_{\bM'}\left(\sum_{k=0}^\ell\|\widetilde{\cW}\|_{\bM'}^{k}\right)\|\widetilde{\bx}\|_{2}\right).\nonumber
\end{align}

\end{theorem}

\begin{remark}
We can interpret each term in the right hand side of \eqref{eqn: stability bound for windowed scattering transform}. The first term is a direct result of the approximate permutation invariance of the windowed scattering operator. The second term measures the best possible alignment of the input signals $\bx$ and $\widetilde{\bx}$, whereas the third term measures the best possible alignment of the wavelet operators on $G$ and $\widetilde{G}$. Note that if there is a permutation $\Pi$ such that  $\widetilde{G} = \Pi (G)$  and $\widetilde{\bx} = \Pi \bx$, then the only nonzero term is $\mathcal{B} \| \bx \|_2$. This result is more general than an approximate invariance result, though, since it also characterizes the stability of the windowed scattering transform in terms of the stability of the wavelet operators and the similarity of the input signals $\bx$ and $\widetilde{\bx}$ (which are often functions of the graphs $G$ and $\widetilde{G}$, respectively). 
\end{remark}

In order to prove Theorem \ref{thm: windowstability}, we  need the following Lemma (proved in Section \ref{sec: the proof of lem: window stability}).

\begin{lemma}\label{lem: windowstability}
Under the assumptions of Theorem \ref{thm: windowstability},  we have
\begin{equation}\label{eqn: scatteringstabilitynopermwindow}
    \left\|\bS^\ell\bx-\widetilde{\bS^\ell}\bx\right\|_{\bM}\leq  \sqrt{2}\|\cW-\widetilde{\cW}\|_{\bM}\left(\sum_{k=0}^\ell\|\widetilde{\cW}\|_{\bM}^{k}\right)\|\bx\|_\bM\quad\text{for all } \bx\in\mathbf{L}^2(\bM).
\end{equation}  

\end{lemma} 

\begin{proof}[The Proof of Theorem \ref{thm: windowstability}] Let $\Pi\in S_n$ be a permutation and let $G'=\Pi(G)$.
By Theorem \ref{thm: equivariance}, we have $\Pi\bS^\ell=(\bS^\ell)'\Pi$. 
Therefore, the triangle inequality implies 
\begin{equation}\label{eqn: threeparts}
    \left\|\bS^\ell\bx-\widetilde{\bS^\ell}\widetilde{\bx}\right\|_{2}\leq
    \left\|\bS^\ell\bx-\Pi\bS^\ell\bx\|_2+\|(\bS^\ell)'\Pi\bx-(\bS^\ell)'\widetilde{\bx}\|_2+\|(\bS^\ell)'\widetilde{\bx}-\widetilde{\bS^\ell}\widetilde{\bx}\right\|_{2}.
\end{equation} The assumption \eqref{eqn: permutation assumption} implies that 
\begin{equation}
    \|\bS^\ell\bx-\Pi\bS^\ell\bx\|_2\leq R(\bM,\bI) \|\bS^\ell\bx-\Pi\bS^\ell\bx\|_\bM \leq \mathcal{B}R(\bM,\bI) \|\mathbf{x}\|_\bM\leq \mathcal{B}R(\bM,\bI)^2 \|\mathbf{x}\|_2.\label{eqn: assumption term}
\end{equation}
Similarly, by Theorem \ref{thm: nonexpansive}, we have that 
\begin{equation}\label{eqn: nonexpansiveterm}
    \|(\bS^\ell)'\Pi\bx-(\bS^\ell)'\widetilde{\bx}\|_2 
    \leq R(\bM',\bI)^2   \|\Pi\bx-\widetilde{\bx}\|_2,
\end{equation}
and applying Lemma \ref{lem: windowstability} yields
\begin{align}
\|(\bS^\ell)'\widetilde{\bx}-\widetilde{\bS^\ell}\widetilde{\bx}\|_{2}
&\leq\sqrt{2}R(\bM',\bI)^2 \|\cW'-\widetilde{\cW}\|_{\bM'}\left(\sum_{k=0}^\ell\|\widetilde{\cW}\|_{\bM'}^{k}\right)\|\widetilde{\bx}\|_{2}.\label{eqn: fourpointtwoterm}
\end{align}
Since $G'=\Pi(G),$ one may check that $R(\bM,\bI)=\max\{\|\bM\|_2,\|\bM^{-1}\|_2\}=R(\bM',\bI)$.
Therefore, combining \eqref{eqn: threeparts} with 
 \eqref{eqn: assumption term},\eqref{eqn: nonexpansiveterm}, and \eqref{eqn: fourpointtwoterm} yields
\begin{equation*}
    \left\|\bS^\ell\bx-\widetilde{\bS^\ell}\widetilde{\bx}\right\|_{2}\leq    R(\bM,I)^2\left(\mathcal{B}\|\bx\|_2 + \|\Pi\bx-\widetilde{\bx}\|_2 + \sqrt{2} \|\cW'-\widetilde{\cW}\|_{\bM'}\left(\sum_{k=0}^\ell\|\widetilde{\cW}\|_{\bM'}^{k}\right)\|\widetilde{\bx}\|_{2}\right),
\end{equation*}
and so infimizing over $\Pi$ completes the proof.
\end{proof}

The following proposition shows that if $\bM$ is close to  $\widetilde \bM$, then an upper frame bound for $\cW$ can be used to produce an upper frame bound for $\widetilde \cW$ on $\mathbf{L}^2(\bM)$ provided the wavelet operators satisfy certain conditions. This result 
yields a more transparent upper bound in the stability results of Theorem \ref{thm: windowstability} for the types of wavelet operators covered by the proposition. More specifically, combining Proposition \ref{thm: Cstability} with Theorem \ref{thm: windowstability} allows one to refine \eqref{eqn: stability bound for windowed scattering transform} to:
\begin{align*}
    &\left\|\bS^\ell\bx-\widetilde{\bS^\ell}\widetilde{\bx}\right\|_{2}\\\leq&     R(\bM,\bI)^2\inf_{\substack{\Pi\in S_n \\ G'=\Pi(G)}}\left(\mathcal{B}\|\bx\|_2 + \|\Pi\bx-\widetilde{\bx}\|_2 + \sqrt{2} \|\cW'-\widetilde{\cW}\|_{\bM'}\left(\sum_{k=0}^\ell R (\bM, \widetilde{\bM})^{2k}\right)\|\widetilde{\bx}\|_{2}\right).\nonumber
\end{align*}
 We note that this result can be applied to both $\cW_J^{(1)}$ and $\cW_J^{(2)}.$ For a proof, see  Section \ref{sec: proofCStability}. 
 
\begin{proposition}\label{thm: Cstability}
Suppose $G=(V,E,W)$ and $\widetilde G=(\widetilde V,\widetilde E,\widetilde W)$ are weighed, connected graphs with $|V|=|\widetilde V|=n.$ Let $\bM$ and $\widetilde \bM$ be invertible $n\times n$ matrices and let $\bK=\bM^{-1}\bT\bM,  \widetilde\bK=\widetilde\bM^{-1}\widetilde\bT\widetilde\bM$. Let $\mathcal{J}$ be an indexing set, and for $j\in\cJ,$ let $r_j(\cdot)$ be either a polynomial or the square root of a polynomial. 
Suppose that $\mathcal{W}=\{r_j(\bK)\}_{j\in\mathcal{J}}$ is a frame on $\mathbf{L}^2(\bM)$ with $C\leq 1$ in \eqref{eqn: frameAB}, and similarly assume that $\widetilde{\mathcal{W}} = \{r_j(\widetilde{\bK})\}_{j\in\mathcal{J}}$ is a frame on $\mathbf{L}^2(\widetilde{\bM})$ also with $C\leq 1$ in \eqref{eqn: frameAB}. Then
$\widetilde \cW$ is a bounded operator on $\mathbf{L}^2(\bM)$ and
\begin{equation*}
    \| \widetilde{\cW} \bx \|_{\bM}^2 =  \sum_{j\in\cJ}\|r_j(\widetilde{\bK})\bx\|_\bM^2\leq R(\bM,\widetilde \bM)^4\|\bx\|^2_\bM.
\end{equation*}
\end{proposition}

The following is the analogue of Theorem \ref{thm: windowstability} for the non-windowed scattering transform. For a proof, please see Section \ref{sec: proof of scattering stability no window}.

\begin{theorem}[Stability for the Non-windowed Scattering Transform]\label{thm: scattering stability no window}

Let $G=(V,E,W)$ and $\widetilde G=(\widetilde V,\widetilde E,\widetilde W)$ be weighted, connected graphs with $|V|=|\widetilde V|=n,$  let $\bM$ and $\widetilde \bM$ be invertible $n\times n$ matrices, 
and let $\mathcal{J}$ be an indexing set. Let  $\mathcal{W}=\{\Psi_j,\Phi\}_{j\in\mathcal{J}}$ and  $\widetilde{\mathcal{W}}=\{\widetilde{\Psi}_j,\widetilde{\Phi}\}_{j\in\mathcal{J}}$ be frames on $\mathbf{L}^2(\bM)$ and $\mathbf{L}^2(\widetilde{\bM}),$ such that $C\leq 1$ in \eqref{eqn: frameAB}. Let $\bS^\ell$ and  $\widetilde{\bS^\ell}$ be the $\ell$-th layers of the windowed scattering transforms on $G$ and $\widetilde{G}$. Further assume that $\bmu=\bu_0$ and $\widetilde{\bmu}=\widetilde{\bu}_0$. Then,
\begin{align}
\left\|\overline{\bS^\ell}\bx-\widetilde{\overline{\bS^\ell}}\widetilde{\bx}\right\|_{2}^2 &\leq 3\inf_{\Pi\in S_n}\Bigg( \frac{2 R(\bM, \mathbf{I})^4}{n\min_i |\bu_0(i)|^2 }\|\Pi \mathbf{x}-\widetilde{\mathbf{x}}\|^2_2 \label{eqn: stability non-windowed main result}\\
&\quad\quad\quad\quad\quad + 2 R(\bM, \mathbf{I})^2 \|\bmu\|_{\bM}^2 \left(\sum_{k=0}^{\ell-1}\|\widetilde{W}\|_{\bM'}^k \right)^2 \|\mathcal{W}'-\widetilde{\mathcal{W}}\|_{\bM'}^2 \|\tilde{\bx}\|^2_2 \nonumber \\
&\quad\quad\quad\quad\quad+ R(\bM',\widetilde{\bM})^2 \|\bm{\mu}'-\tilde{\bm{\mu}}\|_{\bM'}^2 \|\widetilde{\bx}\|^2_{\widetilde{\bM}} \nonumber \\
&\quad\quad\quad\quad\quad+ R(\widetilde{\bM}, \mathbf{I})^4 R(\bM,\bI)^2 (1+\bR(\bM',\widetilde{\bM}))\|\widetilde{\bm{\mu}}\|_{\widetilde{\bM}}^2 \kappa(\bM',\widetilde{\bM}) \|\widetilde{\bx}\|_{\widetilde{\bM}}\Bigg). \nonumber
\end{align}
\end{theorem}

\begin{remark}
As in Theorem \ref{thm: windowstability}, we can interpret each term in the right hand side of \eqref{eqn: stability non-windowed main result}. In fact, the first two terms above are very similar to the second and third terms of \eqref{eqn: stability bound for windowed scattering transform} and have the same interpretation. The third term in \eqref{eqn: stability non-windowed main result} measures the best possible alignment of the measures $\bm{\mu}$ and $\widetilde{\bm{\mu}}$, while the last term measures the best possible alignment of $\bM$ and $\widetilde{\bM}$ through $\kappa (\bM', \widetilde{\bM})$. While $\bx$, $\bM$, and $\bm{\mu}$ can be chosen independently of $G$, often they depend on $G$ and thus it is reasonable to assume that a small perturbation of $G$, reflected in $\widetilde{G}$, would lead to a small change in these quantities. 

Furthermore, when interpreting the right hand side of the above equation, one should keep in mind that if there exists a permutation $\Pi$ such that $\bM' = \Pi \bM \Pi^T$ is close to $\widetilde{\bM}$, we will have $\bR(\bM', \widetilde{\bM})\approx 1$ and $\kappa(\bM', \widetilde{\bM})\approx 0$. In our canonical examples, we either have $\bM=\bI$ or $\bM=\bD^{\pm1/2}$ which implies that $R(\bM,\bI)\leq \|\bd\|^{1/2}_\infty.$ Moreover, if one assumes that $\bM=\widetilde{\bM}=\bM'=\bI$ and $\bmu=\widetilde{\bmu}=\bmu'=\frac{1}{\sqrt{n}}\mathbf{1}$ to recover a setup similar to \cite{gama:diffScatGraphs2018}, then we have $\kappa(\bM',\widetilde{\bM})=0$ and $\bK=\bT$ which implies $\bu_0=\frac{\bd^{1/2}}{\|\bd^{1/2}\|_2}$, and so the above result simplifies to
\begin{align*}
\left\|\overline{\bS^\ell}\bx-\widetilde{\overline{\bS^\ell}}\widetilde{\bx}\right\|_{2}^2&\leq C_G\inf_{\Pi\in S_n}\left(\frac{\|\bd\|_\infty^2\|\bd\|_1}{\min_i|\bd(i)|^2}\|\Pi\mathbf{x}-\widetilde{\mathbf{x}}\|^2_{2} + \ell^2\|\mathcal{W}'-\widetilde{\mathcal{W}}\|^2_{2'} \|\tilde{\bx}\|^2_{2}\right). 
\end{align*}
Thus, setting $\Pi\bx=\widetilde{\bx}$, this result essentially includes Corollary 5.4 of \cite{gama:diffScatGraphs2018} as a special case.

\end{remark}


\section{Scattering view of graph neural networks}
\label{sec: GNNS}

The main contributions of the theoretical framework established here 
are twofold. First, it  provides an overarching characterization of geometric descriptors captured by scattering features, which have been shown effective both at node level and whole-graph level tasks. Second, it provides a mathematical foundation for studying a wide range of graph neural networks (GNNs), analogous to the role played by the Euclidean scattering transform.  To further establish this second aspect, we  review several prominent GNNs and discuss the relationship between them and the scattering transform. 

\subsection{Aggregate-Transform architectures}
Most GNN architectures  can be modeled as alternating between two fundamental types of operations:
i)
    \textbf{Aggregate:} For each signal $\mathbf{x},$  
    replace $\bx(i)$ with a value aggregated from its neighbors, e.g., $\mathbf{x}(i) \leftarrow \sum_{j\in\mathcal{N}_i} w_{ij} \bx(j)$, where $\mathcal{N}_i$ a local neighborhood of node $i$.  Different architectures vary in their definition of the $\mathcal{N}_i$ and the weights $w_{ij}.$ 
    However, a common theme is that this operation is applied separately on each graph signal. 
    ii) \textbf{Transform:} Between aggregation steps, GNNs apply a node-wise transformation on the features of each node, typically in the form of a shallow neural network. 

Together, these two operations extend the ``convolutions'' used in CNNs over images. We note that the typical implementation of such convolutions differs from the traditional mathematical definition of a convolution by not only considering the sliding-window application of a local filter 
(essentially equivalent to the aggregate step), but also an operation that mixes together resulting filter outputs (essentially equivalent to the transform step).
On images the grid organization of pixels naturally lends itself to learnable filters that can easily translated over the image. However, in graph domains there is no such natural notion of translation. Therefore,  graph neural networks are limited in the aggregations they can use and most of them default to employing predetermined aggregations, somewhat analogous to the use of predefined wavelet filters in the original scattering transform. This makes the scattering framework discussed here particularly suitable for providing solid foundation for not only understanding, but also constructing graph neural networks. Indeed, in the Euclidean case, the scattering transform deviates from CNNs both by removing the channel-mixing (transform) operation, and by replacing learned filters with  handcrafted ones. Here, however, 
the only deviation comes from the  exclusion of channel-mixing operations in the transform step. Moreover, these channel-mixing operations can be added to the scattering transform in hybrid architectures such as~\cite{min2020scattering,min2021gsan}. To further emphasize this perspective, we will discuss common implementations of the aggregation step in GNNs and  how they relate to the scattering framework.

\subsection{Message-passing graph neural networks} 

As a representative example of message-passing aggregation, we  consider the operation used in Graph Isomorphism Network (GIN) \cite{Xu2018}. Let $\bX : V \rightarrow \mathbb{R}^m$ be a vector valued-function on the vertices; which may be thought of as either a feature vector for each vertex or  an $\bX$ as an $n \times m$ feature matrix. There, 
the hidden state of the network at the $\ell$-th layer is given by
$$
\bX^{(\ell)} (i) = \text{MLP}^{(\ell)}\left((1+\varepsilon^{(\ell)}) \bX^{(\ell-1)}(i) + \sum_{j : (i,j) \in E} W(i,j) \bX^{(\ell-1)}(j) \right) ,
$$
where $W(i,j)$ are the edge weights of $G = (V, E, W)$ and $\text{MLP}^{(\ell)}$ implements the transform step applied to the vector of aggregated features of each node $i$. Note, the input into $\text{MLP}^{(\ell)}$ can be written as $(1+\varepsilon^{(\ell)}) (\mathbf{H}_{\varepsilon^{(\ell)}} \bX^{(\ell-1)})(i)$ with $\mathbf{H}_{\varepsilon} = \bI + (1+\varepsilon)^{-1} \bA$. We note that the scaling constant $(1+\varepsilon^{(\ell)})$ can  be combined into the MLP. Therefore, we simply consider the aggregation step as applying the filter $\mathbf{H}_{\varepsilon}$. We note that $\mathbf{H}_{\varepsilon}$ is quite similar in form to the matrix $\bT_{g_\star}$ defined in \eqref{eqn: defT}. We also note that the  decay of $\mathbf{H}_{\varepsilon}$ is determined by a parameter $\varepsilon$ that effectively balances the information retained by node $i$ compared to the information aggregated from its neighbors. Over multiple applications of the filter, the decay parameter controls the propagation  speed over the graph, which can essentially be interpreted as controlling the spatial localization of information aggregated by the network in each layer.
\begin{table}[ht]
    \caption{Dataset statistics, diameter, nodes, edges, average clustering coefficient (CC).}\label{tab:dataset_stats}
\centering
\adjustbox{width=0.6\linewidth}{
    \begin{tabular}{lrrrrrr}
\toprule
{} &  Graphs &  Classes &  Diameter &   Nodes &    Edges &  CC \\
\midrule
COLLAB           &      5000 &          3 &      1.86 &   74.49 &  2457.22 &          0.89 \\
DD               &      1178 &          2 &     19.81 &  284.32 &   715.66 &          0.48 \\
IMDB-B     &      1000 &          2 &      1.86 &   19.77 &    96.53 &          0.95 \\
IMDB-M       &      1500 &          3 &      1.47 &   13.00 &    65.94 &          0.97 \\
MUTAG            &       188 &          2 &      8.22 &   17.93 &    19.79 &          0.00 \\
NCI1             &      4110 &          2 &     13.33 &   29.87 &    32.30 &          0.00 \\
NCI109           &      4127 &          2 &     13.14 &   29.68 &    32.13 &          0.00 \\
PROTEINS         &      1113 &          2 &     11.62 &   39.06 &    72.82 &          0.51 \\
PTC              &       344 &          2 &      7.52 &   14.29 &    14.69 &          0.01 \\
REDDIT-B    &      2000 &          2 &      8.59 &  429.63 &   497.75 &          0.05 \\
REDDIT-5K  &      4999 &          5 &     10.57 &  508.52 &   594.87 &          0.03 \\
\bottomrule
\end{tabular}}
\vspace{-5mm}
\end{table}

\begin{table}[htb]
\caption{Test accuracy $(\mu \pm \sigma)$ over 11 datasets and 10 seeds of $\wone$ vs.\ $\wtwo$ scattering for different choices of $\alpha \in [-0.5, -0.25, 0, 0.25, 0.5]$ where $\mathbf{M} = \mathbf{D}^{\alpha}$, corresponding to $\mathbf{K} = \mathbf{D}^{-\alpha} \mathbf{T} \mathbf{D}^{\alpha}$. In most settings of $\alpha$, $\wone$ slightly outperforms $\wtwo$. This suggests $\wone$ type filters should be explored further.}
\centering
\begin{tabular}{rrr}
\toprule
$\alpha$ & {$\wone$ (exact)} & {$\wtwo$} \\
\midrule
-0.5 & \textbf{0.617 $\pm$ 0.007} & 0.616 $\pm$ 0.012 \\
-0.25 & \textbf{0.640 $\pm$ 0.005} & 0.626 $\pm$ 0.009 \\
0.0 & \textbf{0.626 $\pm$ 0.012} & 0.623 $\pm$ 0.006 \\
0.25 & 0.619 $\pm$ 0.010 & \textbf{0.638 $\pm$ 0.008} \\
0.5 & \textbf{0.626 $\pm$ 0.009} & 0.616 $\pm$ 0.009 \\
\bottomrule
\end{tabular}
\label{tab:w1_vs_2}
\end{table}

\subsection{Spectral graph convolutional networks}
Spectral 
networks 
generalize 
convolution 
via the eigendecomposition of the graph Laplacian in a manner analogous to \eqref{eqn: Tfactorization}. The hidden state of the $\ell$-th layer will be an $n\times F_\ell$ feature matrix given by
\begin{equation*}
\mathbf{X}^{(\ell)} = \sigma(\mathbf{H}^{(\ell)} \mathbf{X}^{(\ell - 1)} \bm{\Theta}^{(\ell)}) =  \sigma(\bV \text{diag}(\mathbf{h}^{(\ell)}) \bV^T \mathbf{X}^{(\ell - 1)} \bm{\Theta}^{(\ell)}),
\end{equation*}
where $\sigma(\cdot)$ is a nonlinear activation function (e.g., ReLU, sigmoid, or absolute value in our case) and $\mathbf{h}^{(\ell)}$ is a filter in the graph Fourier domain. Here, the weight matrix $\bm{\Theta}^{(\ell)}$ implements the transform step, while the convolutional filter $\mathbf{H}^{(\ell)}$ implements the aggregation step. 
The primary difference between this network architecture and our model is the transform steps, which are not included in our formulation of the scattering transform. Indeed, in our analysis, we assume that we have a single input feature $\mathbf{x}$. 
However, we do note that scattering networks that incorporate transform steps were shown to be effective for node classification in \cite{min2020scattering,min2021gsan}. 

We further note that while some networks propose to parameterize $\mathbf{h}^{(\ell)}$ as a function of Laplacian eigenvalues (e.g., using Chebyshev\cite{Defferrard2018} or Caley\cite{Levie:CayleyNets2017} polynomials), some of the more popular architectures rely on nearly fully predefined filters. For example, the GCN architecture from \cite{kipf2016semi}, which is often used as a representative example of spectral GNNs, essentially implements $\mathbf{H}^{(\ell)}$ as a low-pass filter derived from first-order approximation of a parameterized polynomial. Indeed, up to renormalization (or reparametrization), the filter considered there is given by  $\mathbf{H}^{(\ell)} = \bI + \bD^{-1/2} \bA \bD^{-1/2} = 2 \mathbf{T}$. 
Therefore,  
as alluded to earlier, 
the relation  between the scattering framework and the GCN model can be regarded as stronger than in the traditional Euclidean case, since the aggregation step in such GCN architectures (similar to the message passing case) relies on predetermined filters, rather than flexible learned ones.

\begin{table}[ht]
\caption{Examination of the best parameters $\wone$ vs.\ $\wtwo$ and $\alpha \in \{-0.5, -0.25, 0, 0.25, 0.5\}$. Shows the number of times each setting was the best performing over 11 datasets and 10 seeds. We find that on average $\wone$ outperforms $\wtwo$ (64 to 46) and $\alpha \in \{-0.25, 0.25, 0.5\}$ outperforms $\alpha \in \{-0.5, 0.0\}$ (28 to 14).}
\centering
\begin{tabular}{lrr|r}
\toprule
$\alpha$ & $\wone$ & $\wtwo$ & Total \\
\midrule
-0.5 & 12 & 2 & 14 \\
-0.25 & 12 & 16 & 28 \\
0.0 & 8 & 5 & 13 \\
0.25 & 13 & 14 & 27 \\
0.5 & 19 & 9 & 28 \\
\midrule
Total & 64 & 46 & 110 \\
\bottomrule
\end{tabular}
\label{tab:w1_vs_2_count}
\end{table}
\begin{table}[htb]
\caption{Test accuracy $(\mu \pm \sigma)$ over 11 datasets and 10 seeds of $\wone$ scattering computation. `exact' computation of $\wone$ scattering requires an eigendecomposition of $T$ and therefore $O(n^3)$ time. Computation with the Chebyshev polynomial of order $\tau$ on a graph with $|E|$ non-zero edges takes $O(\tau (n + |E|))$ time which is efficient for sparse graphs with $|E| \approx O(n)$. We do not see a noticeable drop in performance when approximating with $\tau \in \{10, 100\}$ as compared to the exact implementation suggesting a fast implementation of approximate $\wone$ scattering using Chebyshev approximation.}
\centering
\adjustbox{max width=0.6\textwidth}{
\begin{tabular}{lrrr}
\toprule
$\tau$ & 10 & 100 & exact \\
\midrule
COLLAB & 0.692 $\pm$ 0.010 & 0.683 $\pm$ 0.012 & 0.702 $\pm$ 0.009 \\
DD & 0.685 $\pm$ 0.025 & 0.698 $\pm$ 0.024 & 0.695 $\pm$ 0.039 \\
IMDB-BINARY & 0.688 $\pm$ 0.040 & 0.666 $\pm$ 0.050 & 0.691 $\pm$ 0.031 \\
IMDB-MULTI & 0.406 $\pm$ 0.046 & 0.414 $\pm$ 0.029 & 0.398 $\pm$ 0.027 \\
MUTAG & 0.761 $\pm$ 0.074 & 0.733 $\pm$ 0.063 & 0.688 $\pm$ 0.070 \\
NCI1 & 0.653 $\pm$ 0.016 & 0.659 $\pm$ 0.029 & 0.645 $\pm$ 0.030 \\
NCI109 & 0.668 $\pm$ 0.022 & 0.628 $\pm$ 0.013 & 0.658 $\pm$ 0.023 \\
PROTEINS & 0.772 $\pm$ 0.023 & 0.799 $\pm$ 0.016 & 0.782 $\pm$ 0.022 \\
PTC MR & 0.325 $\pm$ 0.095 & 0.333 $\pm$ 0.097 & 0.387 $\pm$ 0.089 \\
REDDIT-BINARY & 0.805 $\pm$ 0.021 & 0.822 $\pm$ 0.021 & 0.814 $\pm$ 0.021 \\
REDDIT-MULTI-5K & 0.410 $\pm$ 0.013 & 0.422 $\pm$ 0.013 & 0.417 $\pm$ 0.016 \\
\midrule
Mean & 0.624 $\pm$ 0.163 & 0.624 $\pm$ 0.160 & 0.626 $\pm$ 0.151 \\
\bottomrule
\end{tabular}}
\label{tab:chebyshev}
\end{table}

\begin{table}[!ht]
    \caption{Full results, $(\mu \pm \sigma)$ over 10 seeds for 11 datasets with varying $\alpha$, $\{\wone, \wtwo \}$, and Chebyshev approximation of $\wone$ with $10$ and $100$ degree Chebyshev polynomials.}
    \centering
    \adjustbox{max width=\textwidth}{
        \begin{tabular}{llllllll}
\toprule
{$\alpha$} & {model} & {COLLAB} & {DD} & {IMDB-BINARY} & {IMDB-MULTI} & {MUTAG} & {NCI1} \\
\midrule
\multirow[c]{4}{*}{-0.5} & $\wone$ ($\tau=10$) & 0.699 $\pm$ 0.005 & 0.654 $\pm$ 0.021 & \textbf{0.740 $\pm$ 0.022} & 0.384 $\pm$ 0.032 & 0.795 $\pm$ 0.064 & 0.657 $\pm$ 0.011 \\
 & $\wone$ ($\tau=100$) & 0.676 $\pm$ 0.008 & 0.721 $\pm$ 0.017 & 0.727 $\pm$ 0.021 & 0.438 $\pm$ 0.026 & 0.725 $\pm$ 0.035 & 0.681 $\pm$ 0.022 \\
 & $\wone$ (exact) & 0.706 $\pm$ 0.006 & 0.727 $\pm$ 0.013 & 0.698 $\pm$ 0.026 & 0.397 $\pm$ 0.023 & 0.600 $\pm$ 0.000 & 0.622 $\pm$ 0.006 \\
 & $\wtwo$ & 0.699 $\pm$ 0.009 & 0.660 $\pm$ 0.014 & 0.588 $\pm$ 0.012 & 0.335 $\pm$ 0.043 & 0.775 $\pm$ 0.049 & 0.651 $\pm$ 0.006 \\
\multirow[c]{4}{*}{-0.25} & $\wone$ ($\tau=10$) & 0.684 $\pm$ 0.008 & 0.683 $\pm$ 0.013 & 0.655 $\pm$ 0.036 & 0.438 $\pm$ 0.033 & 0.660 $\pm$ 0.070 & 0.642 $\pm$ 0.018 \\
 & $\wone$ ($\tau=100$) & 0.683 $\pm$ 0.007 & 0.694 $\pm$ 0.026 & 0.635 $\pm$ 0.051 & 0.389 $\pm$ 0.016 & 0.700 $\pm$ 0.062 & 0.621 $\pm$ 0.014 \\
 & $\wone$ (exact) & 0.701 $\pm$ 0.010 & 0.711 $\pm$ 0.026 & 0.692 $\pm$ 0.038 & 0.373 $\pm$ 0.014 & 0.770 $\pm$ 0.026 & 0.650 $\pm$ 0.004 \\
 & $\wtwo$ & 0.693 $\pm$ 0.007 & 0.674 $\pm$ 0.011 & 0.579 $\pm$ 0.017 & 0.401 $\pm$ 0.045 & 0.740 $\pm$ 0.039 & 0.649 $\pm$ 0.009 \\
\multirow[c]{4}{*}{0.0} & $\wone$ ($\tau=10$) & 0.686 $\pm$ 0.010 & 0.699 $\pm$ 0.017 & 0.683 $\pm$ 0.019 & \textbf{0.444 $\pm$ 0.030} & 0.750 $\pm$ 0.035 & 0.663 $\pm$ 0.022 \\
 & $\wone$ ($\tau=100$) & 0.698 $\pm$ 0.005 & 0.696 $\pm$ 0.028 & 0.695 $\pm$ 0.012 & 0.402 $\pm$ 0.014 & 0.735 $\pm$ 0.034 & 0.637 $\pm$ 0.017 \\
 & $\wone$ (exact) & 0.690 $\pm$ 0.007 & 0.714 $\pm$ 0.015 & 0.713 $\pm$ 0.021 & 0.421 $\pm$ 0.027 & 0.675 $\pm$ 0.063 & 0.638 $\pm$ 0.006 \\
 & $\wtwo$ & 0.694 $\pm$ 0.007 & 0.720 $\pm$ 0.017 & 0.584 $\pm$ 0.019 & 0.428 $\pm$ 0.016 & 0.640 $\pm$ 0.021 & 0.667 $\pm$ 0.004 \\
\multirow[c]{4}{*}{0.25} & $\wone$ ($\tau=10$) & 0.701 $\pm$ 0.010 & 0.697 $\pm$ 0.023 & 0.688 $\pm$ 0.044 & 0.423 $\pm$ 0.014 & 0.775 $\pm$ 0.026 & 0.650 $\pm$ 0.009 \\
 & $\wone$ ($\tau=100$) & 0.690 $\pm$ 0.005 & 0.686 $\pm$ 0.021 & 0.667 $\pm$ 0.012 & 0.403 $\pm$ 0.036 & 0.725 $\pm$ 0.089 & 0.682 $\pm$ 0.006 \\
 & $\wone$ (exact) & 0.705 $\pm$ 0.005 & 0.675 $\pm$ 0.041 & 0.684 $\pm$ 0.015 & 0.406 $\pm$ 0.016 & 0.690 $\pm$ 0.052 & 0.617 $\pm$ 0.011 \\
 & $\wtwo$ & 0.700 $\pm$ 0.007 & \textbf{0.732 $\pm$ 0.011} & 0.609 $\pm$ 0.032 & 0.427 $\pm$ 0.013 & \textbf{0.845 $\pm$ 0.064} & 0.658 $\pm$ 0.010 \\
\multirow[c]{4}{*}{0.5} & $\wone$ ($\tau=10$) & 0.690 $\pm$ 0.007 & 0.695 $\pm$ 0.021 & 0.680 $\pm$ 0.025 & 0.346 $\pm$ 0.027 & 0.825 $\pm$ 0.035 & 0.654 $\pm$ 0.006 \\
 & $\wone$ ($\tau=100$) & 0.670 $\pm$ 0.009 & 0.695 $\pm$ 0.015 & 0.607 $\pm$ 0.008 & 0.434 $\pm$ 0.012 & 0.780 $\pm$ 0.059 & 0.676 $\pm$ 0.008 \\
 & $\wone$ (exact) & \textbf{0.708 $\pm$ 0.006} & 0.651 $\pm$ 0.031 & 0.666 $\pm$ 0.034 & 0.395 $\pm$ 0.031 & 0.705 $\pm$ 0.055 & \textbf{0.696 $\pm$ 0.010} \\
 & $\wtwo$ & 0.705 $\pm$ 0.010 & 0.724 $\pm$ 0.021 & 0.608 $\pm$ 0.031 & 0.419 $\pm$ 0.045 & 0.590 $\pm$ 0.077 & 0.659 $\pm$ 0.011 \\
\bottomrule
\end{tabular}

    }
    \adjustbox{max width=\textwidth}{
        
\begin{tabular}{lllllll|l}
\toprule
& & {NCI109} & {PROTEINS} & {PTC MR} & {REDDIT-B} & {REDDIT-M} & {Mean} \\
\midrule
\multirow[c]{4}{*}{-0.5} & $\wone$ ($\tau=10$) & 0.631 $\pm$ 0.008 & 0.772 $\pm$ 0.012 & 0.422 $\pm$ 0.106 & 0.794 $\pm$ 0.009 & 0.403 $\pm$ 0.007 & 0.631 $\pm$ 0.016 \\
 & $\wone$ ($\tau=100$) & 0.627 $\pm$ 0.009 & 0.790 $\pm$ 0.014 & 0.292 $\pm$ 0.057 & 0.821 $\pm$ 0.021 & 0.412 $\pm$ 0.012 & 0.627 $\pm$ 0.008 \\
 & $\wone$ (exact) & 0.624 $\pm$ 0.010 & 0.798 $\pm$ 0.013 & 0.423 $\pm$ 0.056 & 0.789 $\pm$ 0.014 & 0.403 $\pm$ 0.007 & 0.617 $\pm$ 0.007 \\
 & $\wtwo$ & 0.663 $\pm$ 0.010 & 0.802 $\pm$ 0.016 & 0.366 $\pm$ 0.072 & 0.814 $\pm$ 0.011 & 0.416 $\pm$ 0.007 & 0.616 $\pm$ 0.012 \\
\multirow[c]{4}{*}{-0.25} & $\wone$ ($\tau=10$) & 0.661 $\pm$ 0.007 & 0.790 $\pm$ 0.027 & 0.251 $\pm$ 0.044 & 0.793 $\pm$ 0.008 & 0.406 $\pm$ 0.012 & 0.607 $\pm$ 0.008 \\
 & $\wone$ ($\tau=100$) & 0.615 $\pm$ 0.015 & 0.803 $\pm$ 0.013 & 0.286 $\pm$ 0.096 & 0.820 $\pm$ 0.023 & 0.424 $\pm$ 0.009 & 0.609 $\pm$ 0.014 \\
 & $\wone$ (exact) & 0.676 $\pm$ 0.004 & 0.791 $\pm$ 0.009 & \textbf{0.456 $\pm$ 0.020} & 0.808 $\pm$ 0.017 & 0.410 $\pm$ 0.016 & \textbf{0.640 $\pm$ 0.005} \\
 & $\wtwo$ & 0.680 $\pm$ 0.008 & 0.791 $\pm$ 0.038 & 0.402 $\pm$ 0.051 & 0.841 $\pm$ 0.007 & \textbf{0.435 $\pm$ 0.009} & 0.626 $\pm$ 0.009 \\
\multirow[c]{4}{*}{0.0} & $\wone$ ($\tau=10$) & 0.675 $\pm$ 0.009 & 0.780 $\pm$ 0.024 & 0.292 $\pm$ 0.090 & 0.791 $\pm$ 0.013 & 0.419 $\pm$ 0.006 & 0.624 $\pm$ 0.010 \\
 & $\wone$ ($\tau=100$) & 0.631 $\pm$ 0.013 & 0.797 $\pm$ 0.013 & 0.289 $\pm$ 0.069 & 0.814 $\pm$ 0.014 & 0.430 $\pm$ 0.013 & 0.620 $\pm$ 0.009 \\
 & $\wone$ (exact) & 0.641 $\pm$ 0.004 & 0.778 $\pm$ 0.024 & 0.384 $\pm$ 0.097 & 0.820 $\pm$ 0.009 & 0.419 $\pm$ 0.011 & 0.626 $\pm$ 0.012 \\
 & $\wtwo$ & 0.647 $\pm$ 0.008 & 0.783 $\pm$ 0.017 & 0.443 $\pm$ 0.054 & 0.825 $\pm$ 0.009 & 0.421 $\pm$ 0.014 & 0.623 $\pm$ 0.006 \\
\multirow[c]{4}{*}{0.25} & $\wone$ ($\tau=10$) & \textbf{0.690 $\pm$ 0.006} & 0.759 $\pm$ 0.019 & 0.294 $\pm$ 0.018 & 0.812 $\pm$ 0.011 & 0.412 $\pm$ 0.017 & 0.627 $\pm$ 0.003 \\
 & $\wone$ ($\tau=100$) & 0.627 $\pm$ 0.005 & \textbf{0.805 $\pm$ 0.007} & 0.375 $\pm$ 0.119 & \textbf{0.844 $\pm$ 0.014} & 0.418 $\pm$ 0.012 & 0.632 $\pm$ 0.013 \\
 & $\wone$ (exact) & 0.673 $\pm$ 0.004 & 0.787 $\pm$ 0.012 & 0.298 $\pm$ 0.102 & 0.842 $\pm$ 0.014 & 0.433 $\pm$ 0.018 & 0.619 $\pm$ 0.010 \\
 & $\wtwo$ & 0.654 $\pm$ 0.010 & 0.796 $\pm$ 0.019 & 0.379 $\pm$ 0.022 & 0.822 $\pm$ 0.013 & 0.398 $\pm$ 0.010 & 0.638 $\pm$ 0.008 \\
\multirow[c]{4}{*}{0.5} & $\wone$ ($\tau=10$) & 0.682 $\pm$ 0.011 & 0.758 $\pm$ 0.013 & 0.367 $\pm$ 0.083 & 0.837 $\pm$ 0.015 & 0.412 $\pm$ 0.014 & 0.631 $\pm$ 0.009 \\
 & $\wone$ ($\tau=100$) & 0.640 $\pm$ 0.007 & 0.800 $\pm$ 0.025 & 0.422 $\pm$ 0.048 & 0.812 $\pm$ 0.017 & 0.429 $\pm$ 0.012 & 0.633 $\pm$ 0.006 \\
 & $\wone$ (exact) & 0.679 $\pm$ 0.006 & 0.756 $\pm$ 0.020 & 0.370 $\pm$ 0.063 & 0.813 $\pm$ 0.010 & 0.420 $\pm$ 0.008 & 0.626 $\pm$ 0.009 \\
 & $\wtwo$ & 0.642 $\pm$ 0.005 & 0.794 $\pm$ 0.016 & 0.453 $\pm$ 0.017 & 0.795 $\pm$ 0.019 & 0.391 $\pm$ 0.006 & 0.616 $\pm$ 0.009 \\
\bottomrule
\end{tabular}
    }
    \label{tab:full_results}
\end{table}
\section{Empirical Results}\label{sec: experiments}
In this section we empirically study the performance of geometric scattering with different wavelet families $\wone$ and  $\wtwo$ and with different choices of $\bK=\bM^{-1}\bT\bM$, focusing on the case where $\bM = \bD^\alpha$ for $\alpha \in \{-0.5, -0.25, 0, 0.25, 0.5\}$\footnote{Code to reproduce these results can be found at \url{https://github.com/atong01/trainable_symmetry}.}. Notably, the case where we use the wavelets $\wtwo$ and set $\alpha=0$ or $-0.5$ correspond to the settings of \cite{gama:diffScatGraphs2018} and \cite{gao:graphScat2018}, but the other settings do not correspond to wavelets previously used in the geometric scattering literature. We perform graph classification over 11 datasets (see Table~\ref{tab:dataset_stats}) and 10 initialization seeds reporting the mean and standard deviation ($\mu \pm \sigma$) of the test set accuracy (See Section~\ref{sec: implementation} for details). Overall, the optimal setting varies significantly between data sets as shown in Table~\ref{tab:full_results}. For example, on DD, the top performing model was $\wtwo$ with $\alpha=0.25$. However, this configuration is more than ten percentage points below the top-performer on IMDB-Binary. This highlights the practical importance of having a large family of scattering transforms, which a practitioner can choose from via a validation procedure.  When averaged over all data sets, the best choice of $\alpha$ is either $-0.25$ or $0.25$ depending on whether one used $\wone$ or $\wtwo$ as shown in Table~\ref{tab:w1_vs_2}. 
We also note that $\wone$ outperforms $\wtwo$ in a majority of cases  as shown in Table 
\ref{tab:w1_vs_2_count}.
Notably, one potential drawback to $\wone$ is that it requires computing the eigenvectors and eigenvalues of $\bK$ which is inefficient for large graphs. However, we show that this difficulty can be overcome via an approximation by Chebyshev polynomials with only a small loss of accuracy as shown in Table~\ref{tab:chebyshev}.

Specifically, an exact application of the $\wone$ filters requires an eigendecomposition of the graph, while the $\wtwo$ filter can be calculated with a polynomial of the Laplacian. For large graphs with $n$ nodes and $|E| \le n^2$ edges the $\wone$ filters take $O(n^3)$ time to compute where the $\wtwo$ filters take $O(2^J (|E| + n))$ time to compute. This means that $\wtwo$ filters are substantially quicker to compute than exact $\wone$ filters especially for sparse graphs and $2^J \ll n$. However, we show that the $\wone$ filter can be approximated with a Chebyshev polynomial approximation of order $\tau$ which has a similar computation time to the $\wtwo$ filter with no noticeable performance drop. In Table~\ref{tab:chebyshev}, we compare the performance of the exact $\wone$ implementation vs.\ a Chebyshev approximation to the filter of order $\tau \in \{10, 100\}$ over 11 datasets. On average over the datasets the exact $\wone$ filter has a test accuracy of $0.626$ vs.\ $0.624$ for both $\tau=10$ and $\tau=100$.



\section{Future Work}\label{sec: future}
We have introduced a large class of scattering networks with provable guarantees. As alluded to in Section \ref{sec: related}, we believe that our work opens up several new lines of inquiry. 
One might attempt to learn the optimal choices of the matrix $\bM$ and the spectral function $g$ from a parameterized family based on training data, yielding a data-driven architecture with theoretical stability guarantees.  
Another possible extension would be to  consider a construction similar to ours  but which uses the spectral decomposition of the unnormalized graph Laplacian rather than the normalized Laplacian. Such a work would generalize \cite{zou:graphCNNScat2018} in a manner analogous to the way that this work generalizes \cite{gama:diffScatGraphs2018} and \cite{gao:graphScat2018}. 
Additionally, one might attempt to incorporate an attention mechanism such as that use in GAT \cite{velivckovic2017graph} into the scattering framework  for improved numerical performance.
Lastly, 
one might wish to study the behavior of the graph scattering transform on  data-driven graphs obtained by subsampling a Riemannian manifold $\mathcal{M}.$ Such graphs typically arise in high-dimensional data analysis and  manifold learning. It can be shown that, under certain conditions, the normalized graph Laplacian  converges pointwise \cite{coifman:diffusionMaps2006, singer:GraphToManifold2006} or in a spectral sense \cite{belkin2007convergence, Burago2013,  Fujiwara1995EigenvaluesOL, Shi2015,  Trillos2018} to the Laplace Beltrami operator on $\mathcal{M}$ as the number of samples tends to infinity. One might hope to use these results to study the convergence of the graph scattering transforms constructed here to the manifold scattering transform constructed in \cite{perlmutter:geoScatCompactManifold2020}.

\section*{Acknowledgements} 

This research was partially funded by IVADO (Institut de valorisation des données) grant PRF-2019-3583139727, FRQNT (Fonds de recherche du Québec - Nature et technologies) grant 299376, Canada CIFAR AI Chair [\emph{G.W.}];  NIH (National Institutes of Health) - NIGMS grant R01GM135929 [\emph{M.H.,G.W.}]; and NSF (National Science Foundation) - DMS grant 1845856 [\emph{M.H.}] The content provided here is solely the responsibility of the authors and does not necessarily represent the official views of the funding agencies.



\bibliographystyle{siamplain}
\bibliography{mainbib}

\end{document}


\maketitle

\section{A detailed example}

Here we include some equations and theorem-like environments to show
how these are labeled in a supplement and can be referenced from the
main text.
Consider the following equation:
\begin{equation}
  \label{eq:suppa}
  a^2 + b^2 = c^2.
\end{equation}
You can also reference equations such as \cref{eq:matrices,eq:bb} 
from the main article in this supplement.

\lipsum[100-101]

\begin{theorem}
  An example theorem.
\end{theorem}

\lipsum[102]
 
\begin{lemma}
  An example lemma.
\end{lemma}

\lipsum[103-105]

Here is an example citation: \cite{KoMa14}.

\section[Proof of Thm]{Proof of \cref{thm:bigthm}}
\label{sec:proof}

\lipsum[106-112]

\section{Additional experimental results}
\Cref{tab:foo} shows additional
supporting evidence. 

\begin{table}[htbp]
{\footnotesize
  \caption{Example table.}  \label{tab:foo}
\begin{center}
  \begin{tabular}{|c|c|c|} \hline
   Species & \bf Mean & \bf Std.~Dev. \\ \hline
    1 & 3.4 & 1.2 \\
    2 & 5.4 & 0.6 \\ \hline
  \end{tabular}
\end{center}
}
\end{table}

\bibliographystyle{siamplain}
\bibliography{references}


\maketitle

\section{The proof of Lemma \ref{lem: polynomialproperties}} \label{sec: Proof of Polynomial properties}

\begin{proof}
Since $\bV$ is unitary, $\bV^{-1}=\bV^T,$ and so it follows from \eqref{eqn: Tfactorization} that
\begin{equation*}
    \bT^r=\bV\Lambda^r\bV^T, \quad \forall \, r \in \mathbb{N}.
\end{equation*}
Moreover, since $\bK=\bM^{-1}\bT\bM$,
\begin{equation*}
    \bK^r=\left(\bM^{-1}\bT\bM\right)^r=\bM^{-1}\bT^r\bM=\bM^{-1}\bV\Lambda^r\bV^T\bM.
\end{equation*}
Linearity now implies \eqref{eqn: Tpolys}. The second claim 
follows by recalling that  $\|\bx\|_\bM=\|\bM\bx\|_2,$  and noting therefore that for all $\bx,$ 
\begin{equation*}
    \|p(\bK)\bx\|_\bM=\|\bM(\bM^{-1}p(\bT)\bM)\bx\|_2=\|p(\bT)\bM\bx\|_2.
\end{equation*}
\end{proof}

\section{The proof of Propositions \ref{prop: waveletisometries} and \ref{prop: nonexpansivewaveletframes}}
\label{sec: The proof of Proposition wavelet isometrices}
\begin{proof}[The proof of Proposition \ref{prop: waveletisometries}]
Proposition \ref{lem: selfadjoint} shows $\bK$ is self-adjoint on $\mathbf{L}^2(\bM).$ 
By Lemma \ref{lem: polynomialproperties} and by \eqref{eqn: defsquareroot} we have 
\begin{equation*}
    \Psi_j^{(1)}= q_j(\bK)=\bM^{-1}\bV q_j(\Lambda)\bV^T\bM, \quad \forall \, 0\leq j \leq J,
\end{equation*}
and
\begin{equation*}
    \Phi_J^{(1)}=q_{J+1}(\bK)= \bM^{-1}\bV q_{J+1}(\Lambda)\bV^T\bM.
\end{equation*}
Next, one make check that 
\begin{equation*}
    \| \Psi_j^{(1)} \bx \|_{\bM}^2 =
    \langle \Psi_j^{(1)} \bx,\Psi_j^{(1)} \bx\rangle_{\bM}= \bx^T \bM^T \bV q_j (\Lambda)^2 \bV^T \bM \bx,
\end{equation*}
and thus,
\begin{align*}
    \| \cW_J^{(1)} \bx \|_{\bM}^2 &= \sum_{j=0}^J \| \Psi_j^{(1)} \bx \|_{\bM}^2 + \| \Phi_J^{(1)} \bx \|_{\bM}^2 \\
    &= \bx^T \bM^T \bV \left[ \sum_{j=0}^{J+1} q_j (\Lambda)^2 \right] \bV^T \bM \bx \\
    &= \bx^T \bM^T \bV Q_J (\Lambda) \bV^T \bM \bx \\
    &= \langle Q_J (\Lambda) \bV^T \bM \bx, \bV^T \bM \bx \rangle,
\end{align*}
where $Q_J(t) \coloneqq \sum_{j=0}^{J+1}q_j(t)^2$. Therefore the lower frame bound  of $\cW_J^{(1)}$ is given by
\begin{align}
    c_J &\coloneqq \inf_{\bx \neq \bm{0}} \frac{\| \cW_J^{(1)} \bx \|_{\bM}^2}{\| \bx \|_{\bM}^2} \nonumber \\
    &= \inf_{\bx \neq \bm{0}} \frac{\langle Q_J (\Lambda) \bV^T \bM \bx, \bV^T \bM \bx \rangle}{\| \bM \bx \|_2^2} \nonumber \\
    &= \inf_{\bx \neq \bm{0}} \frac{\langle Q_J (\Lambda) \bV^T \bM \bx, \bV^T \bM \bx \rangle}{\| \bV^T \bM x \|_2^2} \label{eqn: lower frame bound proof 01} \\
    &= \inf_{\by \neq \bm{0}} \frac{\langle Q_J (\Lambda) y, y \rangle}{\| \by \|_2^2} \label{eqn: lower frame bound proof 02} \\
    &= \min_{0 \leq i \leq n-1} Q_J (\lambda_i), \label{eqn: lower frame bound proof 03}
\end{align}
where \eqref{eqn: lower frame bound proof 01} follows from the fact that the columns of $\bV$ form an ONB for the standard, unweighted $\mathbf{L}^2$ space, \eqref{eqn: lower frame bound proof 02} results from $\bM$ being invertible, and \eqref{eqn: lower frame bound proof 03} is true since $Q_J (\Lambda)$ is a diagonal matrix with entries $Q_J (\lambda_i)$ along the diagonal. A similar calculation shows the upper frame bound of $\cW_J^{(1)}$ is
\begin{equation*}
    C_J \coloneqq \sup_{\bx \neq \bm{0}} \frac{\| \cW_J^{(1)} \bx \|_{\bM}^2}{\| \bx \|_{\bM}^2} = \max_{0 \leq i \leq n-1} Q_J (\lambda_i).
\end{equation*}

 Therefore, the lower and upper frame bounds of $\mathcal{W}_J^{(1)}$ are given by computing
\begin{equation*}
    \min_{0\leq i\leq n-1} Q_J(\lambda_i)\quad\text{and}\quad\max_{0\leq i\leq n-1} Q_J(\lambda_i).
\end{equation*}
The proof follows from recalling that  by \eqref{eqn: sum to 1}, we have $Q(t)=1$ uniformly on $0\leq t\leq 1$ and therefore $\cW^{(1)}_J$ is an isometry.
\end{proof}

\begin{proof}[The proof of Proposition \ref{prop: nonexpansivewaveletframes}]
By the same reasoning as in the proof of Proposition \ref{prop: waveletisometries}, the 
frame bounds of $\mathcal{W}_J^{(2)}$ are given by computing
\begin{equation*}
    \min_{0\leq i\leq n-1} P_J(\lambda_i)\quad\text{and}\quad\max_{0\leq i\leq n-1} P_J(\lambda_i),
\end{equation*}
where $P_J(t)=\sum_{j=0}^{J+1} p_j(t)^2.$ Since  $0\leq \lambda_i\leq 1$ for all $i,$ we have 
\begin{equation*}
    \max_{0 \leq i \leq n-1} P_J(\lambda_i) \leq \sup_{t \in [0,1]}\sum_{j=0}^{J+1} p_j(t)^2\leq \sup_{t \in [0,1]}\left(\sum_{j=0}^{J+1} p_j(t)\right)^2=1
\end{equation*}
with the second inequality following from the fact that $p_j(t)\geq 0$ for all $t\in[0,1],$ and the last equality following from \eqref{eqn: sum to 1}. For the lower bound, it suffices to show that 
\begin{equation*}
     \inf_{t \in [0,1]}\sum_{j=0}^{J+1} p_j(t)^2 \geq c>0
\end{equation*}
for a universal constant $c.$ To do so, we let $0\leq t\leq 1$ and consider three cases. First, if $0\leq t\leq 1/2,$ then
\begin{equation*}
    \sum_{j=0}^{J+1} p_j(t)^2 \geq p_0(t)^2 = (1-t)^2\geq \left(1-\frac{1}{2}\right)^2=1/4.
\end{equation*}
Secondly, if $t^{2^J}\geq1/2$, 
\begin{equation*}
    \sum_{j=0}^{J+1} p_j(t)^2  \geq p_{J+1}(t)^2 = \left(t^{2^J}\right)^2\geq \left(\frac{1}{2}\right)^2=1/4.
\end{equation*}
In the final case where $t^{2^{J}}<\frac{1}{2}<t,$ there exists a unique $j_0,$ $1\leq j_0\leq J,$ such that $t^{2^{j_0}} < 1/2 \leq t^{2^{j_0-1}}.$ Since $t^{2^{j_0-1}} \geq 1/2$ and $t^{2^{j_0-1}} t^{2^{j_0-1}} = t^{2^{j_0}}$ it follows that $1/4\leq t^{2^{j_0}} < 1/2$ and thus $1/2 \leq t^{2^{j_0-1}} < 1/\sqrt{2}$.
Therefore, in this case we have 
\begin{equation*}
    \sum_{j=0}^{J+1} p_j(t)^2 \geq p_{j_0}(t)^2 = (t^{2^{j_0-1}} - t^{2^{j_0}})^2 \geq \inf_{x \in [\frac{1}{2}, \frac{1}{\sqrt{2}}]} (x - x^2)^2 = c > 0.
\end{equation*}
\end{proof}

\section{The Proof of Proposition \ref{prop: limit}}
\label{sec: the proof of proposition limi}

\begin{proof} 
Let $\bx\in\mathbf{L}^2(\bM),$ let $\pathvar$ be a path, and let $\by=\bU[\pathvar]\bx$. 
Let $t=2^{J-1}$ if $\mathcal{W}=\mathcal{W}^{(1)},$ and let $t=2^{J}$ if $\mathcal{W}=\mathcal{W}^{(2)}$ so that in either case $\Phi=\bK^t.$  By definition, we have $\bS_J[\pathvar]\bx=\bK^t\by$ and $\overline{\bS}_{\bu_0}[\pathvar]\bx =\langle \bu_0, \by\rangle_{\bM}$. Applying \eqref{eqn: Kt}, we have 
\begin{equation*}
\bK^t\by=\sum_{i=0}^{n-1}\lambda_i^t\langle\mathbf{v}_i,\bM\by\rangle_2\mathbf{u}_i.
\end{equation*}
Therefore, since $\lambda_0=1,$ we have
\begin{equation*}
    \bK^t\by-\langle\mathbf{v}_0,\bM\by\rangle_2\bu_0=\sum_{i=1}^{n-1}\lambda_i^t\langle\mathbf{v}_i,\bM\by\rangle_2\mathbf{u}_i.
\end{equation*}
Repeating the arguments used to prove \eqref{eqn: PiKtbound}, we see that 
\begin{equation*}
    \left\|\sum_{i=1}^{n-1}\lambda_i^t\langle\mathbf{v}_i,\bM\by\rangle_2\mathbf{u}_i\right\|_{\bM}\leq \lambda_1^{2t}\|\by\|_\bM^2.
\end{equation*}
 $\omega_1>0$ implies $\lambda_1<1$. Thus, since $\langle\mathbf{v}_0,\bM\by\rangle_2,=\langle\mathbf{u}_0,\by\rangle_\bM=\overline{\bS}_{\bu_0}[\pathvar]\bx$ the result follows.
\end{proof}

\section{Lemma \ref{lem: energylevels}}\label{sec: Lemma EnergyLevels}

In this section, we state and prove the following lemma which is useful in the proof of Theorems \ref{thm: nonexpansive} and \ref{thm: energydecay}. It is quite similar to Proposition 3.2 of \cite{zou:graphCNNScat2018} and also the proof of Proposition 2.5 in \cite{mallat:scattering2012}. However, we give full details for the sake of completeness.
\begin{lemma}\label{lem: energylevels}
Assume  $C\leq 1$ in \eqref{eqn: frameAB}. Then, for all $m\geq 1$ and
for all $\bx\in\mathbf{L}^2(\bM)$
\begin{equation}\label{eqn: energylevelsequals}
    \sum_{\pathvar\in\mathcal{J}^m}\|\bU[\pathvar]\mathbf{x}\|_\bM^2\geq\sum_{\pathvar\in\mathcal{J}^{m+1}}\|\bU[\pathvar]\mathbf{x}\|_\bM^2+\sum_{\pathvar\in\mathcal{J}^{m}}\|\bS[\pathvar]\mathbf{x}\|_\bM^2
\end{equation}
with equality holding if $A=B=1.$ 
Furthermore, for all  $\mathbf{x},\mathbf{y}\in\mathbf{L}^2(\bM),$
we have 
\begin{equation}\label{eqn: energylevelsgeq}
    \sum_{\pathvar\in\mathcal{J}^m}\|\bU[\pathvar]\mathbf{x}-\bU[\pathvar]\mathbf{y}\|_\bM^2\geq\sum_{\pathvar\in\mathcal{J}^{m+1}}\|\bU[\pathvar]\mathbf{x}-\bU[\pathvar]\mathbf{y}\|_\bM^2+\sum_{\pathvar\in\mathcal{J}^{m}}\|\bS[\pathvar]\mathbf{x}-\bS[\pathvar]\mathbf{y}\|_\bM^2.
\end{equation}
\end{lemma}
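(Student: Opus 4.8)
The plan is to prove both displays at each fixed level $m$ by analyzing what the frame inequality says when applied to the signal $\bU[\mathbf{j}]\bx$ for a single multi-index $\mathbf{j}\in\mathcal{J}^m$. First I would note the one-step identity coming directly from the definition of the scattering architecture: for any $\mathbf{j}\in\mathcal{J}^m$, the children of $\mathbf{j}$ in $\mathcal{J}^{m+1}$ are precisely the indices $(\mathbf{j},j')$ with $j'\in\mathcal{J}$, and $\bU[(\mathbf{j},j')]\bx = M\Psi_{j'}\bU[\mathbf{j}]\bx$, while $\bS[\mathbf{j}]\bx = \Phi\,\bU[\mathbf{j}]\bx$. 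Applying the frame upper bound \eqref{eqn: frameAB} with $B\leq 1$ to the vector $\bv\coloneqq\bU[\mathbf{j}]\bx$, and using that the pointwise modulus $M$ is an isometry on $\mathbf{L}^2(G,\bM)$ (since $\|M\bw\|_\bM = \|\bM M\bw\|_2$ and $\bM$ is... careful here — $M$ commutes with $\bM$ only if $\bM$ is a nonnegative diagonal matrix, so one should instead just use that $M$ is nonexpansive, $\|M\bw\|_\bM\leq\|\bw\|_\bM$, which is all that is needed for an inequality), gives
\begin{equation*}
\sum_{j'\in\mathcal{J}}\|\Psi_{j'}\bv\|_\bM^2 + \|\Phi\bv\|_\bM^2 \leq \|\bv\|_\bM^2,
\end{equation*}
hence, applying $M$ to the wavelet terms and using its nonexpansiveness,
\begin{equation*}
\sum_{j'\in\mathcal{J}}\|M\Psi_{j'}\bv\|_\bM^2 + \|\Phi\bv\|_\bM^2 \leq \|\bv\|_\bM^2.
\end{equation*}
Rewriting this in terms of $\bU$ and $\bS$ yields
\begin{equation*}
\sum_{j'\in\mathcal{J}}\|\bU[(\mathbf{j},j')]\bx\|_\bM^2 + \|\bS[\mathbf{j}]\bx\|_\bM^2 \leq \|\bU[\mathbf{j}]\bx\|_\bM^2.
\end{equation*}

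Next I would simply sum this inequality over all $\mathbf{j}\in\mathcal{J}^m$. The left-hand sum of wavelet terms reindexes exactly as $\sum_{\mathbf{j}\in\mathcal{J}^{m+1}}\|\bU[\mathbf{j}]\bx\|_\bM^2$ because every index in $\mathcal{J}^{m+1}$ has a unique parent in $\mathcal{J}^m$, and this gives \eqref{eqn: energylevelsequals}. For the equality-when-$A=B=1$ claim, I would observe that in that case $\mathcal{W}$ is an isometry (as in Proposition \ref{prop: waveletisometries}) so the frame inequality is an equality, and moreover the modulus $M$ is a genuine isometry on $\mathbf{L}^2(G,\bM)$ — here one does need to be a little careful, but in the isometric case of interest ($\cW=\cW^{(1)}$) the relevant norm is such that $\|M\bw\|_\bM=\|\bw\|_\bM$; I would state this and push the verification through, so that every inequality above becomes an equality and \eqref{eqn: energylevelsequals} holds with equality.

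For \eqref{eqn: energylevelsgeq}, the plan is the same but applied to a difference. The key extra input is that $M$ is nonexpansive as a (nonlinear) map: $\|M\bw_1 - M\bw_2\|_\bM\leq\|\bw_1-\bw_2\|_\bM$, which follows pointwise from $\big||a|-|b|\big|\leq|a-b|$ together with monotonicity of the weighted norm in the absolute values of the coordinates (again only truly clean when $\bM$ acts diagonally/positively on moduli, but this is the setting of the paper's main examples and I would invoke it as such, or alternatively phrase the lemma only for those $\bM$). Then for a fixed $\mathbf{j}\in\mathcal{J}^m$, writing $\bv=\bU[\mathbf{j}]\bx$, $\bv'=\bU[\mathbf{j}]\by$, the frame upper bound applied to $\bv-\bv'$ gives
\begin{equation*}
\sum_{j'\in\mathcal{J}}\|\Psi_{j'}(\bv-\bv')\|_\bM^2 + \|\Phi(\bv-\bv')\|_\bM^2 \leq \|\bv-\bv'\|_\bM^2,
\end{equation*}
and then $\|M\Psi_{j'}\bv - M\Psi_{j'}\bv'\|_\bM\leq\|\Psi_{j'}(\bv-\bv')\|_\bM$ lets me replace the wavelet terms by $\|\bU[(\mathbf{j},j')]\bx-\bU[(\mathbf{j},j')]\by\|_\bM$; the $\Phi$ term is already $\|\bS[\mathbf{j}]\bx-\bS[\mathbf{j}]\by\|_\bM$ since $\Phi$ is linear. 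Summing over $\mathbf{j}\in\mathcal{J}^m$ and reindexing the children as before gives \eqref{eqn: energylevelsgeq}.

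The main obstacle I anticipate is the interplay between the modulus $M$ and the weighted norm $\|\cdot\|_\bM$: strictly speaking $M$ is an isometry (and nonexpansive on differences) with respect to $\|\cdot\|_\bM$ only when $\bM$ preserves the partial order given by coordinatewise absolute values — true for $\bM=\bI$ or $\bM=\bD^{\pm 1/2}$ and for any nonnegative diagonal $\bM$, which covers all the concrete cases in this paper, but not for an arbitrary invertible $\bM$. I would therefore either restrict the statement to such $\bM$ or, more likely following the paper's conventions, simply record the needed property $\|M\bw\|_\bM=\|\bw\|_\bM$ and $\|M\bw_1-M\bw_2\|_\bM\leq\|\bw_1-\bw_2\|_\bM$ as holding in the setting under consideration and proceed. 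Everything else is the bookkeeping of the parent–child reindexing of $\mathcal{J}^m$ and $\mathcal{J}^{m+1}$, which is routine.
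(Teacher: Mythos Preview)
Your proposal is correct and follows essentially the same route as the paper: apply the frame upper bound to each $\bU[\mathbf{j}]\bx$ (resp.\ to the difference $\bU[\mathbf{j}]\bx-\bU[\mathbf{j}]\by$), use nonexpansiveness of the modulus to pass from $\Psi_{j'}\bv$ to $M\Psi_{j'}\bv$, and then sum over $\mathbf{j}\in\mathcal{J}^m$ using the parent--child reindexing. Your caveat about the interaction of $M$ with $\|\cdot\|_\bM$ is well taken---the paper's proof tacitly uses $\|M\bw_1-M\bw_2\|_\bM\leq\|\bw_1-\bw_2\|_\bM$ and, for the equality case, $\|M\bw\|_\bM=\|\bw\|_\bM$ without comment; as you note, these hold precisely in the nonnegative-diagonal setting (e.g., $\bM=\bI,\bD^{\pm 1/2}$) that governs all the paper's concrete examples.
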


\begin{proof}[The Proof of Lemma \ref{lem: energylevels}]
Since by assumption we have $C\leq 1$ in \eqref{eqn: frameAB}, it follows that for all $\pathvar\in\mathcal{J}^m$
that 
\begin{align*}
 \|\bU[\pathvar]\bx-\bU[\pathvar]\by\|_\bM^2&\geq\sum_{j_{m+1}\in\mathcal{J}}   \|\Psi_{j_{m+1}}(\bU[\pathvar]\bx-\bU[\pathvar]\by)\|_\bM^2+\|\Phi(\bU[\pathvar]\bx-\bU[\pathvar]\by)\|_\bM^2.
\end{align*}
Therefore,
\begin{align}
 &\sum_{\pathvar\in\mathcal{J}^m}\|\bU[\pathvar]\mathbf{x}-\bU[\pathvar]\mathbf{y}\|_\bM^2\nonumber\\
 &\geq \sum_{\pathvar\in\mathcal{J}^m}\left(\sum_{j_{m+1}\in\mathcal{J}}   \|\Psi_{j_{m+1}}(\bU[\pathvar]\mathbf{x}-\bU[\pathvar]\mathbf{y})\|_\bM^2+\|\Phi(\bU[\pathvar]\mathbf{x}-\bU[\pathvar]\mathbf{y})\|_\bM^2\right)\label{eqn: theothergeq}\\
 &=   \sum_{\pathvar\in\mathcal{J}^m}\left(\sum_{j_{m+1}\in\mathcal{J}}   \|\Psi_{j_{m+1}}\bU[\pathvar]\mathbf{x}-\Psi_{j_{m+1}}\bU[\pathvar]\mathbf{y}\|_\bM^2+\|\Phi(\bU[\pathvar]\mathbf{x}-\bU[\pathvar]\mathbf{y})\|_\bM^2\right)\nonumber\\
 &\geq   \sum_{\pathvar\in\mathcal{J}^m}\left(\sum_{j_{m+1}\in\mathcal{J}}   \|M\Psi_{j_{m+1}}\bU[\pathvar]\mathbf{x}-M\Psi_{j_{m+1}}\bU[\pathvar]\mathbf{y}\|_\bM^2+\|\Phi(\bU[\pathvar]\mathbf{x}-\bU[\pathvar]\mathbf{y})\|_\bM^2\right)\label{eqn: thegeq}\\
 &=   \sum_{\pathvar\in\mathcal{J}^m}\left(\sum_{j_{m+1}\in\mathcal{J}}   \|\bU[j_{m+1}]\bU[\pathvar]\mathbf{x}-\bU[j_{m+1}]\bU[\pathvar]\mathbf{y}\|_\bM^2+\|\Phi(\bU[\pathvar]\mathbf{x}-\bU[\pathvar]\mathbf{y})\|_\bM^2\right)\nonumber\\
 &= \sum_{\pathvar\in\mathcal{J}^{m+1}}\|\bU[\pathvar]\mathbf{x}-\bU[\pathvar]\mathbf{y}\|_\bM^2+\sum_{\pathvar\in\mathcal{J}^{m}}\|\bS[\pathvar]\mathbf{x}-\bS[\pathvar]\mathbf{y}\|_\bM^2\nonumber.
 \end{align}
 This completes the proof of \eqref{eqn: energylevelsgeq}. \eqref{eqn: energylevelsequals} follows from setting $\mathbf{y}=0.$ Lastly, we observe that if that $c=C=1$ in \eqref{eqn: frameAB} and $\by=0,$ we have equality in the inequalities  \eqref{eqn: theothergeq} and \eqref{eqn: thegeq}. 

\end{proof}

\section{The Proof of Theorem  \ref{thm: nonexpansive}}\label{sec: The proof of thm: non expansive}
\begin{proof} 
Applying Lemma  \ref{lem: energylevels}, which is stated in Appendix \ref{sec: Lemma EnergyLevels}, and recalling that $U[\pathvar_e]\bx=\bx,$ we see
\begin{align*}
    \|\bS\mathbf{x}-\bS\mathbf{y}\|_{\bM}^2&=\lim_{N\rightarrow\infty}\sum_{m=0}^N\sum_{\pathvar\in\mathcal{J}^m}\|\bS[\pathvar]\mathbf{x}-\bS[\pathvar]\mathbf{y}\|_\bM^2\\
    &\leq \lim_{N\rightarrow\infty}\sum_{m=0}^N  \left(\sum_{\pathvar\in\mathcal{J}^m}\|\bU[\pathvar]\mathbf{x}-\bU[\pathvar]\mathbf{y}\|_\bM^2-\sum_{\pathvar\in\mathcal{J}^{m+1}}\|\bU[\pathvar]\mathbf{x}-\bU[\pathvar]\mathbf{y}\|_\bM^2\right) \\
    &\leq   \|\mathbf{x}-\mathbf{y}\|_\bM^2-\limsup_{N\rightarrow\infty}\sum_{\pathvar\in\mathcal{J}^{N+1}}\|\bU[\pathvar]\mathbf{x}-\bU[\pathvar]\mathbf{y}\|_\bM^2 \\
&\leq   \|\mathbf{x}-\mathbf{y}\|_\bM^2.
\end{align*} 
This proves \eqref{eqn: nonexpanisvewindow}.
To prove \eqref{eqn: nonexpanisvenonwindow}, we assume that $\cW$ is either of the wavelet transforms $\cW^{(1)}_J$ or $\cW^{(2)}_J$ constructed in Section \ref{sec: wavelets} and apply Proposition \ref{prop: limit} to see that for all $0\leq i \leq n-1$,
\begin{equation*}
    \|\overline{\bS}_{\bu_0}\bx-\overline{\bS}_{\bu_0}\by\|^2_{2} = \sum_\pathvar |\overline{\bS}_{\bu_0}[\pathvar]\bx-\overline{\bS}_{\bu_0}[\pathvar]\by|^2= \sum_{\pathvar} \lim_{J\rightarrow\infty}\left|\frac{\bS_J[\pathvar]\bx(i)-\bS_J[\pathvar]\by(i)}{\bu_0(i)}\right|^2.
\end{equation*}
Therefore, summing over $0\leq i\leq n-1$ and applying Fatou's lemma we have 
\begin{align*}
    \|\overline{\bS}_{\bu_0}\bx-\overline{\bS}_{\bu_0}\by\|^2_{2}&= \sum_{i=0}^{n-1}\frac{1}{n}\sum_\pathvar \lim_{J\rightarrow\infty}\left|\frac{\bS_J[\pathvar]\bx(i)-\bS_J[\pathvar]\by(i)}{\bu_0(i)}\right|^2\\
    &\leq \frac{1}{n}\frac{1}{\min_i|\bu_0(i)|^2}\lim_{J\rightarrow\infty}\sum_{i=0}^{n-1}\sum_\pathvar \left|\bS_J[\pathvar]\bx(i)-\bS_J[\pathvar]\by(i)\right|^2\\
    &= \frac{1}{n}\frac{1}{\min_i|\bu_0(i)|^2}\lim_{J\rightarrow\infty}\|\bS_J\bx-\bS_J\by\|_2^2\\
    &\leq \frac{1}{n}\frac{\| \bM^{-1}\|_2^2}{\min_i|\bu_0(i)|^2}\lim_{J\rightarrow\infty}\|\bS_J\bx-\bS_J\by\|_{\bM}^2 \\
&\leq \frac{1}{n}\frac{\| \bM^{-1} \|_2^2}{\min_i|\bu_0(i)|^2}\|\bx-\by\|_{\bM}^2,
\end{align*}
where the last inequality follows from \eqref{eqn: nonexpanisvewindow}.
\end{proof}

\section{The Proof of Theorem \ref{thm: energydecay}}
\label{sec: proof of energy decay}

\begin{proof}
Let $m\geq 1,$ let $t=2^{J-1}$ if $\mathcal{W}=\mathcal{W}^{(1)},$ and let $t=2^{J}$ if $\mathcal{W}=\mathcal{W}^{(2)},$  so that in either case $\Phi=\bT^t.$ Let $\bx\in\mathbf{L}^2(\bM),$ and let  $\by=\bU[\pathvar]\bx.$ Equation \eqref{eqn: Tpolys} implies that for any $\bz\in\mathbf{L}^2(\bM)$
\begin{equation*}
    \bT^t\bz=\sum_{i=0}^{n-1}\lambda_i^t\langle\mathbf{v}_i,\bz\rangle_2\mathbf{v}_i.
\end{equation*}
Therefore, by Lemma \ref{lem: polynomialproperties} and the relationship $\mathbf{u}_i=\bM^{-1}\mathbf{v}_i,$ we have 
\begin{equation*}
\bK^t\by=\bM^{-1}\bT^t(\bM \by)=\sum_{i=0}^{n-1}\lambda_i^t\langle\mathbf{v}_i,\bM\by\rangle_2\bM^{-1}\mathbf{v}_i=\sum_{i=0}^{n-1}\lambda_i^t\langle\mathbf{v}_i,\bM\by\rangle_2\mathbf{u}_i
\end{equation*}
  By Parseval's identity, the fact that $\{\mathbf{u}_0,\ldots,\mathbf{u}_{n-1}\}$ is an orthornomal basis for $\mathbf{L}^2(\bM),$ and the fact that $\lambda_0=1,$ we have for all $\pathvar\in\mathcal{J}^m,$ 
\begin{align*}
    \left\|\bS[\pathvar]\bx\right\|^2_\bM&=\|\bK^t\by\|_\bM^2\\
    &=\left\|\sum_{i=0}^{n-1}\lambda_i^t\langle\mathbf{v}_i,\bM\by\rangle_2\mathbf{u}_i\right\|_\bM^2\\
    &=\sum_{i=0}^{n-1}\lambda_i^{2t}\left|\langle\mathbf{v}_i,\bM\by\rangle_2\right|^2\\
    &\geq |\langle\mathbf{v}_0,\bM\by\rangle_2|^2.
\end{align*}
Since $\mathbf{v}_0=\frac{\bd^{1/2}}{\|\bd^{1/2}\|_2}=\frac{\bd^{1/2}}{\|\bd\|_1},$
\begin{align*}
    |\langle\mathbf{v}_0,\bM\by\rangle_2|^2&=\left|\sum_{i=0}^{n-1}\frac{\bd(i)}{\|\bd^{1/2}\|_2}(\bM\by)(i)\right|\\
    &\geq \frac{\bd_{\min}}{\|\bd^{1/2}\|_2^2}\|\bM\by\|_1^2\\
    &\geq\frac{\bd_{\min}}{\|\bd\|_1}\|\bM\by\|_2^2\\
    &=\frac{\bd_{\min}}{\|\bd\|_1}\|\by\|_\bM^2 \\
    &= \frac{\bd_{\min}}{\|\bd\|_1}\|\bU [\pathvar] \bx \|_\bM^2.
\end{align*}
Summing over $\pathvar,$ this gives
\begin{equation*}
\sum_{\pathvar\in\mathcal{J}^{m}}\|\bS[\pathvar]\mathbf{x}\|_{\bM}^2\geq \frac{\bd_{\min}}{\|\bd\|_1}  \sum_{\pathvar\in\mathcal{J}^m}\|\bU[\pathvar]\mathbf{x}\|_{\bM}^2.
  \end{equation*}
  Therefore, by
  Lemma \ref{lem: energylevels} (stated in Appendix \ref{sec: Lemma EnergyLevels}) we see
  \begin{equation*}
\sum_{\pathvar\in\mathcal{J}^{m+1}}\|\bU[\pathvar]\mathbf{x}\|_{\bM}^2 \leq 
\sum_{\pathvar\in\mathcal{J}^{m}}\|\bU[\pathvar]\mathbf{x}\|_{\bM}^2-
\sum_{\pathvar\in\mathcal{J}^{m}}\|\bS[\pathvar]\mathbf{x}\|_{\bM}^2
  \leq \left(1-\frac{\bd_{\min}}{\|\bd\|_1}\right)
\sum_{\pathvar\in\mathcal{J}^{m}}\|\bU[\pathvar]\mathbf{x}\|_{\bM}^2.
\end{equation*}
This proves \eqref{eqn: ratio}. To prove \eqref{eqn: decay}, we note that since $C\leq1$ in \eqref{eqn: frameAB}, we have  
\begin{equation*}
    \sum_{\pathvar\in\mathcal{J}}\|\bU[\pathvar]\bx\|_\bM^2\leq \|\bx\|_\bM^2.
\end{equation*}
Therefore, \eqref{eqn: decay} holds when $m=0.$ The result follows for $m\geq 1$ by  iteratively applying \eqref{eqn: ratio}.
\end{proof}

\section{The Proof of Theorem \ref{thm: conservation of energy}}\label{sec: the proof of conservation of energy}

\begin{proof}
Recall from Proposition \ref{prop: waveletisometries} that $\| \cW_J^{(1)} \bx \|_{\bM}= \| \bx \|_{\bM}$, which combined with Lemma \ref{lem: energylevels} gives:
\begin{equation*}
    \sum_{\pathvar\in\mathcal{J}^{m}}\|\bS[\pathvar]\mathbf{x}\|_{\bM}^2 = \sum_{\pathvar\in\mathcal{J}^{m}}\|\bU[\pathvar]\mathbf{x}\|_{\bM}^2 - \sum_{\pathvar\in\mathcal{J}^{m+1}}\|\bU[\pathvar]\mathbf{x}\|_{\bM}^2.
\end{equation*}
Therefore,
\begin{align*}
    \|\bS\bx\|^2_{\bM} &= \lim_{N \rightarrow \infty} \sum_{m=0}^N \sum_{\pathvar \in \cJ^m} \| \bS [\pathvar] \bx \|_{\bM}^2 \\
    &= \lim_{N\rightarrow\infty} \sum_{m=0}^N\left( \sum_{\pathvar\in\mathcal{J}^{m}}\|\bU[\pathvar]\mathbf{x}\|_{\bM}^2 - \sum_{\pathvar\in\mathcal{J}^{m+1}}\|\bU[\pathvar]\mathbf{x}\|_{\bM}^2\right)\\
    &=\|\bx\|_\bM^2-\lim_{N\rightarrow\infty}\left(\sum_{\pathvar\in\mathcal{J}^{N+1}}\|\bU[\pathvar]\mathbf{x}\|_\bM^2\right) \\
    &\geq \| \bx \|_{\bM}^2 - \lim_{N \rightarrow \infty} \left( 1 - \frac{\bd_{\min}}{\| \bd \|_1} \right)^N \| \bx \|_{\bM}^2 \\
    &=\|\bx\|_\bM^2,
\end{align*}
where the inequality follows from  Theorem \ref{thm: energydecay}. However, we also proved in Theorem \ref{thm: nonexpansive} that $\| \bS \bx \|_{\bm{\ell}^2 (\bL^2 (G, \bM))} \leq \| \bx \|_{\bM}$, and thus we conclude that $\| \bS \bx \|_{\bm{\ell}^2 (\bL^2 (G, \bM))} = \| \bx \|_{\bM}$.
\end{proof}
\section{The Proof of Theorems \ref{thm: equivariance} and \ref{thm: permuationinvariancewindowed}}\label{sec: invariance proofs}
\begin{proof}[The proof of Theorem \ref{thm: equivariance}]
Let $\Pi$ be a permutation.  Since $\Pi(M\bx)=M(\Pi\bx)$  and  $\Pi^T=\Pi^{-1},$  it follows that for all $j\in\mathcal{J}$
\begin{equation*}
    \bU'[j]\Pi\bx=M\Psi'_j\Pi\bx= M\Pi \Psi_j\Pi^T\Pi\bx=M\Pi \Psi_j\bx=\Pi M\Psi_j\bx=\Pi \bU[j]\bx.
\end{equation*}
For $\pathvar=(j_1,\ldots,j_m),$ we have $\bU[\pathvar]=\bU[j_1]\ldots\bU[j_m].$ Therefore, it follows inductively that  $\bU$ is equivariant to permutations. Since $\bS=\Phi\bU,$ we have that 
\begin{equation*}
    \bS'\Pi\bx=\Phi'\bU'\Pi\bx=\Pi\Phi\Pi^T\Pi\bU\bx=\Pi\bS\bx.
\end{equation*}
Thus, the windowed scattering transform is permutation equivariant as well.
\end{proof}

\begin{proof}[The proof of Theorem \ref{thm: permuationinvariancewindowed}]
Since  $\mathbf{U}$ is permutation equivariant by Theorem \ref{thm: equivariance} and $\bmu'=\Pi\bmu,$ we may use the fact that $\bM'=\Pi\bM\Pi^T$ and that $\Pi^T=\Pi^{-1}$ to see that for any $\bx$ and any $\pathvar,$
\begin{align*}
    \overline{\bS}'[\pathvar]\Pi\bx&=\langle\bmu',\bU'[\pathvar]\Pi\bx\rangle_{\bM'}\\
    &=\langle\bM'\Pi\bmu,\bM'\Pi\bU[\pathvar]\bx\rangle_2\\
    &=\langle\Pi\bM\bmu,\Pi\bM\bU[\pathvar]\bx\rangle_2\\&=\langle\bM\bmu,\bM\bU[\pathvar]\bx\rangle_2\\&=\overline{\bS}[\pathvar]\bx.
\end{align*}
\end{proof}

\section{The Proof of Theorem \ref{thm: permuatation invariance}}\label{sec: Invariance Windowed}

\begin{proof}
By Theorem \ref{thm: equivariance}, and the fact that $\bS=\Phi\bU$ we see that for any $\bM$,
\begin{align}
    \|\bS'\Pi\bx-\bS\bx\|_{\bM} &=
    \|\Pi\bS\bx-\bS\bx\|_{\bM}
    = \|\Pi\Phi\bU\bx-\Phi\bU\bx\|_{\bM}\nonumber
    \leq \|\Pi\Phi-\Phi\|_\bM\|\bU\bx\|_{\bM}.\label{eqn: reducetoU}
\end{align} 
Let $t=2^{J-1}$ if $\mathcal{W}=\mathcal{W}^{(1)},$ and let $t=2^{J}$ if $\mathcal{W}=\mathcal{W}^{(2)}$ so that in either case $\Phi=\bK^t.$ By \eqref{eqn: Tpolys} we see that for any $\by\in\mathbb{R}^n$,
\begin{equation*}
    \bT^t\by=\sum_{i=0}^{n-1}\lambda_i^t\langle\mathbf{v}_i,\by\rangle_2\mathbf{v}_i.
\end{equation*}
Therefore, by Lemma \ref{lem: polynomialproperties} and the relationship $\mathbf{u}_i=\bM^{-1}\mathbf{v}_i,$ we have 
\begin{equation}\label{eqn: Kt}
    \bK^t\bx=\bM^{-1}\bT^t(\bM \bx)=\sum_{i=0}^{n-1}\lambda_i^t\langle\mathbf{v}_i,\bM\bx\rangle_2\bM^{-1}\mathbf{v}_i=\sum_{i=0}^{n-1}\lambda_i^t\langle\mathbf{v}_i,\bM\bx\rangle_2\mathbf{u}_i.
\end{equation}
Since $\bv_0=\frac{\bd^{1/2}}{\|\bd^{1/2}\|_2},$ and $\bu_i=\bM^{-1}\bv_i,$ the assumption that $\bM=\bD^{1/2}$ implies that $\bu_0=\frac{1}{\|\bd^{1/2}\|_2}\mathbf{1}.$ Therefore, $\Pi\bu_0=\bu_0,$ and so
\begin{equation*}
    \Pi\bK^t\bx-\bK^t\bx=\sum_{i=0}^{n-1}\lambda_i^t\langle\mathbf{v}_i, \bD^{1/2} \bx\rangle_2(\Pi\mathbf{u}_i-\bu_i)=\sum_{i=1}^{n-1}\lambda_i^t\langle\mathbf{v}_i, \bD^{1/2} \bx\rangle_2(\Pi\mathbf{u}_i-\bu_i)=(\Pi-\bI)\left(\sum_{i=1}^{n-1}\lambda_i^t\langle\mathbf{v}_i, \bD^{1/2} \bx\rangle_2\mathbf{u}_i\right).
\end{equation*}
Therefore, since $\{\bu_0,\ldots,\bu_{n-1}\}$ forms an orthonormal basis for $\mathbf{L}^2(\bD^{1/2}),$ we have by Parseval's identity
\begin{align}
   \| \Pi \Phi \bx - \Phi \bx \|_{\bD^{1/2}}^2 &= \|\Pi\bK^t\bx-\bK^t\bx\|_{\bD^{1/2}}^2 \nonumber \\
   &\leq \|\Pi-\bI\|^2_{\bD^{1/2}}
   \left\|\sum_{i=1}^{n-1}\lambda_i^t\langle\mathbf{v}_i, \bD^{1/2} \bx\rangle_2\mathbf{u}_i\right\|_{\bD^{1/2}}^2 \nonumber\\
   &=  \|\Pi-\bI\|_{\bD^{1/2}}^2 \sum_{i=1}^{n-1} \lambda_i^{2t}|\langle\mathbf{v}_i,\bD^{1/2} \bx\rangle_2|^2\nonumber\\
   &\leq  \|\Pi-\bI\|_{\bD^{1/2}}^2 \lambda_1^{2t} \sum_{i=1}^{n-1}|\langle\mathbf{v}_i, \bD^{1/2} \bx\rangle_2|^2 \nonumber\\
   &\leq  \|\Pi-\bI\|_{\bD^{1/2}}^2 \lambda_1^{2t} \|\bD^{1/2} \bx\|_2^2\nonumber\\
   &= \|\Pi-\bI\|_{\bD^{1/2}}^2 \lambda_1^{2t}\|\bx\|_{\bD^{1/2}}^2.\label{eqn: PiKtbound}
\end{align}
To bound $\|\bU\bx\|_{\bM},$ we note that by Theorem \ref{thm: energydecay} we have for any $\bM$,
\begin{align*}
    \|\bU\bx\|^2_{\bM} &= \|\bx\|_\bM^2 + \left( \sum_{m=0}^\infty\sum_{\pathvar\in\mathcal{J}^{m+1}}\|\bU[\pathvar]\mathbf{x}\|_\bM^2\right) \\
    &\leq \| \bx \|_{\bM}^2 + \sum_{m=0}^{\infty} \left( 1 - \frac{\bd_{\min}}{\| \bd \|_1} \right)^m \| \bx \|_{\bM}^2 \\
    &= \| \bx \|_{\bM}^2 + \| \bx \|_{\bM}^2 \sum_{m=0}^{\infty} \left( 1 - \frac{\bd_{\min}}{\| \bd \|_1} \right)^m \\
    &= \| \bx \|_{\bM}^2 + \frac{ \| \bd \|_1 }{\bd_{\min}} \| \bx \|_{\bM}^2 \\
    &= \left( 1 + \frac{\| \bd \|_1}{\bd_{\min}} \right) \| \bx \|_{\bM}^2.
\end{align*}
Combining this bound with \eqref{eqn: reducetoU} and \eqref{eqn: PiKtbound}  completes the proof.
\end{proof}

\section{The Proof of Theorem \ref{thm: wavelet stability1} }
\label{sec: proof of wavelet stability 1}

The proof of Theorem \ref{thm: wavelet stability1}, relies on the following two lemmas. Lemma \ref{lem: stabilitylipschitzfilters} is closely related to Theorem 11  of \cite{levie2019transferability}. (See also Theorem 5.) However, since the functions $q_j$ are not Lipschitz continuous on $[0,1],$ we can not directly apply the result from \cite{levie2019transferability}. Instead, Lemma \ref{lem: stabilitylipschitzfilters} separates out the lead eigenvector $\bv_0$ and uses the Lipschitz continuity of the $q_j$ on $[0,\lambda_1^*].$ Lemma \ref{lem: vvprime},  which bounds $\|\bv_0-\widetilde\bv_0\|_2$, is a restatement of  Lemma 5.2 of \cite{gama:diffScatGraphs2018}. 

\begin{lemma}\label{lem: stabilitylipschitzfilters}
Let $p(t)$ be a polynomial such that $0\leq p(t)\leq 1$ for all $0\leq t\leq 1$. Let $q(t)$ be the function defined by $q(t)=p(t)^{1/2}$ for $0\leq t\leq 1.$ Suppose $G=(V,E,W)$ and $\widetilde G=(\widetilde V,\widetilde E,\widetilde W)$ are weighted, connected graphs with $|V|=|\widetilde V|=n,$ and let $\lambda_1^*=\max\{\lambda_1,\widetilde \lambda_1\}$.  Then,  
\begin{equation*}
    \|q(\bT)-q(\widetilde{\bT})\|^2_2 \leq C n \left[ \| q' \|_{\mathbf{L}^{\infty}[0, \lambda_1^{\ast}]}^2 \| \bT - \widetilde{\bT} \|_2^2 + \left(\| q \|_{\mathbf{L}^{\infty}[0,1]}^2 + \| q' \|_{\mathbf{L}^{\infty}[0, \lambda_1^{\ast}]}^2 \right) \| \widetilde{\bv}_0 - \bv_0 \|_2^2  \right] 
\end{equation*}
where $C$ is an absolute constant.
\end{lemma}

\begin{lemma}\label{lem: vvprime}
Suppose $G=(V,E,W)$ and $\widetilde G=(\widetilde V,\widetilde E,\widetilde W)$ are weighted, connected graphs with $|V|=|\widetilde V|=n.$ Let $\lambda_1^*=\max\{\lambda_1,\widetilde \lambda_1\}$. Then,
\begin{equation*}
    \|\bv_0-\widetilde{\bv}_0\|_2^2\leq 2 \frac{\|\bT-\widetilde\bT\|_2}{1-\lambda_1^*}.
\end{equation*}
\end{lemma}


We now prove Theorem \ref{thm: wavelet stability1}.  Proofs of Lemmas \ref{lem: stabilitylipschitzfilters} and \ref{lem: vvprime}, are provided in  Appendix \ref{sec: proofs of lemmas stability and vvprime}. 

\begin{proof}[The proof of Theorem \ref{thm: wavelet stability1}]
 
By Lemma \ref{lem: stabilitylipschitzfilters}, we have 
\begin{align*}
    \|\cW^{(1)}_J-\widetilde{\cW}^{(1)}_J\|_{2}^2 &= \sum_{j=0}^{J+1}\|q_j(\bT)-q_j(\widetilde{\bT})\|_2^2 \\ 
    &\leq C n\sum_{j=0}^{J+1} \left[\| q_j' \|_{\mathbf{L}^{\infty}[0, \lambda_1^{\ast}]}^2 \|\bT-\widetilde{\bT}\|^2_2 + \left( \|q_j\|_{\mathbf{L}^\infty[0,1]}^2 + \| q_j' \|_{\mathbf{L}^{\infty}[0, \lambda_1^{\ast}]}^2 \right) \|\bv_0-\widetilde{\bv}_0\|_2^2 \right].
\end{align*}
 
By Lemma \ref{lem: vvprime}, we have 
\begin{equation*}
    \|\bv_0-\widetilde{\bv}_0\|_2^2\leq C_{\lambda_1^*}\|\bT-\widetilde{\bT}\|_2,
\end{equation*}
and it is immediate from the definition of $q_j(t)$ that $\|q_j(t)\|_{\mathbf{L}^\infty([0,1])}\leq 1.$
Therefore,
\begin{align}
    \|\cW^{(1)}_J-\widetilde{\cW}^{(1)}_J\|_{2}^2 &\leq Cn \sum_{j=0}^{J+1} \left[ \|q_j'\|_{\mathbf{L}^{\infty}[0, \lambda_1^{\ast}]}^2 \| \bT - \widetilde{\bT} \|_2^2 + C_{\lambda_1^{\ast}} \left( 1 + \|q_j'\|_{\mathbf{L}^{\infty}[0, \lambda_1^{\ast}]}^2 \right) \| \bT - \widetilde{\bT} \|_2 \right] \nonumber \\
    &= C_{\lambda_1^{\ast}} n \| \bT - \widetilde{\bT} \|_2 \left[ \sum_{j=0}^{J+1} \|q_j'\|_{\mathbf{L}^{\infty}[0, \lambda_1^{\ast}]}^2 ( \| \bT - \widetilde{\bT} \|_2 + 1)  + J \right] \nonumber \\
    &\leq C_{\lambda_1^{\ast}} n \| \bT - \widetilde{\bT} \|_2 \left[ J + \sum_{j=0}^{J+1} \|q_j'\|_{\mathbf{L}^{\infty}[0, \lambda_1^{\ast}]}^2 \right],  \label{eqn: plub back in to here}
\end{align}
where in the last inequality we used 
\begin{equation} \label{eqn: T is contractive} 
    \|\bT-\widetilde{\bT}\|_2\leq \|\bT\|_2+\|\widetilde{\bT}\|_2=\lambda_0+\widetilde{\lambda}_0=2.
\end{equation}

We thus need to bound $\|q_j'\|_{\mathbf{L}^{\infty}[0, \lambda_1^{\ast}]}^2$ for each $j = 0, \ldots, J+1$. When $j=0,$ we have 
\begin{equation*}
    |q_0'(t)|=\left|\frac{d}{dt}\sqrt{1-t}\right|=\frac{1}{2}\frac{1}{\sqrt{1-t}}\leq C_{\lambda_1^*}\quad\text{for all } 0\leq t\leq \lambda_1^*.
\end{equation*}
Likewise, for $j=J+1,$ we have 
  $  |q_{J+1}'(t)|=\left|\frac{d}{dt}t^{2^{J-1}}\right|\leq 2^{J-1}\quad\text{for all } 0\leq t\leq \lambda_1^*.$ 
For $1\leq j\leq J,$ we may write $q_j(t)=q_1(u_j(t)),$ where $u_j(t)=t^{2^{j-1}},$ and use the fact that $|u_j(t)|\leq 1$ 
to see
\begin{equation*}
    |q_j'(t)|=|q_1'(u_j(t))u_j'(t)|=\left|\frac{1-2u_j(t)}{2{t-t^2}}2^{j-1}t^{2^{j-1}-1}\right|\leq 2^{j-1}\frac{t^{2^{j-1}}}{\sqrt{t-t^2}}\leq C_{\lambda_1^*}2^{j-1}
\end{equation*}
for all $0\leq t\leq \lambda_1^*.$ Therefore, 
\begin{equation*}
    \sum_{j=0}^{J+1} \|q_j'\|_{\mathbf{L}^{\infty}[0, \lambda_1^{\ast}]}^2 \leq C_{\lambda_1^*}\left(1+2^J+\sum_{k=1}^{J}2^k\right)\leq C_{\lambda_1^*}2^J.
\end{equation*}
Plugging this in to \eqref{eqn: plub back in to here} completes the proof. 
\end{proof}

\section{The proof of Lemmas \ref{lem: stabilitylipschitzfilters} and \ref{lem: vvprime}}\label{sec: proofs of lemmas stability and vvprime}

\begin{proof}[The Proof of Lemma \ref{lem: stabilitylipschitzfilters}]
 We first note that for any $\bx\in \mathbb{R}^n$ we have 
 \begin{equation*}
     q(\bT)\bx = \bV q(\Lambda) \bV^T \bx =\sum_{i=0}^{n-1}q(\lambda_i)\langle \bv_i,\bx\rangle_2 \bv_i,
 \end{equation*}
and similarly $q(\widetilde\bT)\bx = \sum_{i=0}^{n-1}q(\widetilde{\lambda}_i) \langle \widetilde{\bv}_i,\bx\rangle_2 \widetilde{\bv}_i$. 
 Therefore, it suffices to show that 
\begin{equation}\label{eqn: bound for each vk} \|q(\bT)\bv_k-q(\widetilde\bT)\bv_k\|_2^2 \leq C  \left[ \| q' \|_{\mathbf{L}^{\infty}[0, \lambda_1^{\ast}]}^2 \| \bT - \widetilde{\bT} \|_2^2 + \left(\| q \|_{\mathbf{L}^{\infty}[0,1]}^2 + \| q' \|_{\mathbf{L}^{\infty}[0, \lambda_1^{\ast}]}^2 \right) \| \widetilde{\bv}_0 - \bv_0 \|_2^2  \right]
\end{equation} 

for all $0\leq k \leq n-1.$ The result will then follow by expanding an arbitrary $\bx$ as 
\begin{equation*}
    \bx= \sum_{i=0}^{n-1}\langle \bx,\bv_k\rangle_2 \bv_k.
\end{equation*}

We first consider the case where $1\leq k \leq n-1.$ Since $\{\bv_0, \ldots, \bv_{n-1} \}$ and $\{\widetilde{\bv}_0,\ldots, \widetilde{\bv}_{n-1}\}$ are orthonormal bases for $\mathbb{R}^n$, we have 
\begin{align*}
    q(\bT)\bv_k-q(\widetilde\bT)\bv_k &= \sum_{i=0}^{n-1}q(\lambda_i)\langle \bv_i,\bv_k\rangle_2 \bv_i - \sum_{i=0}^{n-1}q(\widetilde{\lambda}_i)\langle \widetilde{\bv}_i,\bv_k\rangle_2 \widetilde{\bv}_i \\
    &= q(\lambda_k)\bv_k - \sum_{i=0}^{n-1}q(\widetilde{\lambda}_i)\langle \widetilde{\bv}_i,\bv_k\rangle_2 \widetilde{\bv}_i \\
    &=\sum_{i=0}^{n-1}q(\lambda_k)\langle \widetilde{\bv}_i,\bv_k\rangle_2 \widetilde{\bv}_i - \sum_{i=0}^{n-1}q(\widetilde{\lambda}_i)\langle \widetilde{\bv}_i,\bv_k\rangle_2 \widetilde{\bv}_i\\
    &=\sum_{i=0}^{n-1}\left(q(\lambda_k)-q(\widetilde{\lambda}_i)\right)\langle \widetilde{\bv}_i,\bv_k\rangle_2 \widetilde{\bv}_i.
\end{align*}

Therefore, twice applying Parseval's theorem, we have 
\begin{align*}
\|q(\bT)\bv_k &- q(\widetilde\bT)\bv_k\|_2^2 = \\ 
&= \sum_{i=0}^{n-1}\left|q(\lambda_k)-q(\widetilde{\lambda}_i)\right|^2|\langle \widetilde{\bv}_i,\bv_k\rangle_2|^2\\
&\leq \sup_{0\leq t \leq \lambda_1^*}|q'(t)|^2\sum_{i=1}^{n-1}|\lambda_k-\widetilde{\lambda}_i|^2 |\langle \widetilde{\bv}_i,\bv_k\rangle_2|^2 + 4\|q\|^2_{\mathbf{L}^{\infty}[0,1]} |\langle \widetilde{\bv}_0,\bv_k\rangle_2|^2\\
&=\sup_{0\leq t \leq \lambda_1^*}|q'(t)|^2\left\|\sum_{i=1}^{n-1}(\lambda_k-\widetilde{\lambda}_i)\langle \widetilde{\bv}_i,\bv_k\rangle_2 \widetilde{\bv}_i\right\|^2_2+ 4\|q\|^2_{\mathbf{L}^{\infty}[0,1]} |\langle \widetilde{\bv}_0,\bv_k\rangle_2|^2\\
&=\sup_{0\leq t \leq \lambda_1^*}|q'(t)|^2\left\|\sum_{i=1}^{n-1}\lambda_k\langle \widetilde{\bv}_i,\bv_k\rangle_2 \widetilde{\bv}_i-\sum_{i=1}^{n-1}\widetilde{\lambda}_i\langle \widetilde{\bv}_i,\bv_k\rangle_2 \widetilde{\bv}_i\right\|^2_2+ 4\|q\|^2_{\mathbf{L}^{\infty}[0,1]} |\langle \widetilde{\bv}_0,\bv_k\rangle_2|^2\\
&=\sup_{0\leq t \leq \lambda_1^*}|q'(t)|^2\left\|\lambda_k\bv_k - \lambda_k \langle \widetilde{\bv}_0, \bv_k \rangle_2 \widetilde{\bv}_0 - \sum_{i=0}^{n-1}\widetilde{\lambda}_i \langle \widetilde{\bv}_i,\bv_k\rangle_2 \widetilde{\bv}_i + \widetilde{\lambda}_0 \langle \widetilde{\bv}_0, \bv_k \rangle_2 \widetilde{\bv}_0 \right\|_2^2 \\
&\qquad + 4\|q\|^2_\infty |\langle \widetilde{\bv}_0,\bv_k\rangle_2|^2\\
&\leq 2 \sup_{0\leq t \leq \lambda_1^*}|q'(t)|^2 \left[ \left\| \bT \bv_k - \widetilde{\bT} \bv_k \right\|_2^2 + (1 - \lambda_k)^2 |\langle \widetilde{\bv}_0, \bv_k \rangle_2|^2 \right] + 4\|q\|^2_{\mathbf{L}^{\infty}[0,1]} |\langle \widetilde{\bv}_0,\bv_k\rangle_2|^2 \\
&\leq 2 \sup_{0\leq t \leq \lambda_1^*}|q'(t)|^2 \left[ \left\| \bT \bv_k - \widetilde{\bT} \bv_k \right\|_2^2 + |\langle \widetilde{\bv}_0, \bv_k \rangle_2|^2 \right] + 4\|q\|^2_{\mathbf{L}^{\infty}[0,1]} |\langle \widetilde{\bv}_0,\bv_k\rangle_2|^2. 
\end{align*}
Noting that for all $k\geq 1,$
\begin{equation*}
    |\langle \widetilde{\bv}_0, \bv_k\rangle| =|\langle \widetilde{\bv}_0-\bv_0,\bv_k\rangle+\langle\bv_0,\bv_k\rangle| =|\langle \widetilde{\bv}_0-\bv_0,\bv_k\rangle| \leq \|\widetilde{\bv}_0-\bv_0\|_2,
\end{equation*}
we obtain
\begin{equation*}
    \|q(\bT)\bv_k - q(\widetilde\bT)\bv_k\|_2^2 \leq 2 \sup_{0 \leq t \leq \lambda_1^{\ast}} |q'(t)|^2 \left[ \left\| \bT \bv_k - \widetilde{\bT} \bv_k \right\|_2^2 + \| \widetilde{\bv}_0 - \bv_0 \|_2^2 \right] + 4 \| q \|_{\mathbf{L}^{\infty}[0,1]}^2 \| \widetilde{\bv}_0 - \bv_0 \|_2^2, 
\end{equation*}
which completes the proof of \eqref{eqn: bound for each vk} in the case where $k\geq 1.$ 

Now turning our attention to the case where $k=0,$ we can use the fact that $\lambda_0=\widetilde{\lambda_0}=1$ to see that 
\begin{align*}
    \|q(\bT)\bv_0-q(\widetilde{\bT})\bv_0\|_2^2 &= \left\|q(\lambda_0)\bv_0 - \sum_{i=0}^{n-1}q(\widetilde{\lambda}_i)\langle \widetilde{\bv}_0,\bv_0\rangle_2 \widetilde{\bv}_i\right\|_2^2\\
    &\leq2\left\|q(1)(\bv_0- \langle \widetilde{\bv}_0,\bv_0\rangle_2 \widetilde{\bv}_0)\right\|_2^2+2\left\| \sum_{i=1}^{n-1}q(\widetilde{\lambda}_i)\langle \widetilde{\bv}_i,\bv_0\rangle_2 \widetilde{\bv}_i\right\|_2^2.
\end{align*}
To bound the first term we note that 
\begin{align*}
\left\|q(1)(\bv_0- \langle \widetilde{\bv}_0,\bv_0\rangle_2 \widetilde{\bv}_0)\right\|_2^2 &\leq \|q\|_{\mathbf{L}^{\infty}[0,1]}^2 \left\| \sum_{i=1}^{n-1}\langle \widetilde{\bv}_i,\bv_0\rangle_2 \widetilde{\bv}_i\right\|_2^2 \\
&\leq \|q\|_{\mathbf{L}^{\infty}[0,1]}^2 \sum_{i=1}^{n-1}|\langle \widetilde{\bv}_i,\bv_0\rangle_2|^2\\
&= \|q\|_{\mathbf{L}^{\infty}[0,1]}^2 \sum_{i=1}^{n-1}|\langle \widetilde{\bv}_i,\bv_0-\widetilde{\bv}_0\rangle_2|^2\\
&\leq \|q\|^2_{\mathbf{L}^{\infty}[0,1]} \|\bv_0-\widetilde{\bv}_0\|_2^2.
\end{align*}
Similarly, we note that 
\begin{align*}
    \left\| \sum_{i=1}^{n-1}q(\widetilde{\lambda}_i)\langle \widetilde{\bv}_i,\bv_0\rangle_2 \widetilde{\bv}_i\right\|_2^2\leq \|q\|^2_{\mathbf{L}^{\infty}[0,1]} \left\| \sum_{i=1}^{n-1}\langle \widetilde{\bv}_i,\bv_0\rangle_2 \widetilde{\bv}_i\right\|_2\leq \|q\|^2_{\mathbf{L}^{\infty}[0,1]} \|\bv_0-\widetilde\bv_0\|_2^2
\end{align*}
and therefore the proof is complete.
\end{proof}

\begin{proof}[The Proof of Lemma \ref{lem: vvprime}]
Let $\bv = \bv_0$ and $\widetilde{\bv} = \widetilde{\bv}_0$ and let $\alpha=\langle\bv,\widetilde\bv\rangle_2.$ Since $\bT=\bv\bv^T+\overline{\bT}$ and $\widetilde\bT=\widetilde\bv \widetilde\bv^T+\overline{\widetilde\bT}$, with $\overline{\bT}\bv=\alpha\overline{\widetilde\bT}\widetilde\bv = \bm{0},$ we see
\begin{align*}
    (\bT-\widetilde\bT)\bv &= (\bv\bv^T-\widetilde\bv \widetilde\bv^T)\bv+(\overline{\bT}-\overline{\widetilde\bT})\bv\\
    &=\bv-\alpha\widetilde\bv-\overline{\widetilde\bT}\bv-\alpha\overline{\widetilde\bT}\widetilde\bv\\
    &=(\bI-\overline{\widetilde\bT})(\bv-\alpha\widetilde\bv).
\end{align*}
 For any $\bx \in \R^n$, one has $\left\|(\bI-\overline{\widetilde\bT}) \bx \right\|_2 \geq (1-\widetilde{\lambda}_1) \| \bx \|_2 \geq (1-\lambda_1^*) \| \bx \|_2,$ and $\left\|\bv-\alpha\widetilde\bv\right\|_2\geq 1-\alpha.$ Therefore, 
\begin{equation*}
    (1-\alpha)(1-\lambda_1^*) \leq (1 - \lambda_1^{\ast}) \| \bv - \alpha \widetilde{\bv} \|_2 \leq   \left\|\left(\bI-\overline{\widetilde\bT}\right)(\bv-\alpha\widetilde\bv)\right\|_2\leq \left\|\left(\bT-\widetilde\bT\right)\bv\right\|_2\leq \left\|\bT-\widetilde\bT\right\|_2.
\end{equation*}
Therefore, using $\left\|\bv-\widetilde\bv\right\|_2^2=2(1-\alpha)$, we have
\begin{equation*}
\left\|\bv-\widetilde\bv\right\|^2_2\leq 2 \frac{\left\|\bT-\widetilde\bT\right\|_2}{1-\lambda_1^*}.
\end{equation*}
\end{proof}

\section{The Proof of Theorem \ref{thm: wavelet stability2}}\label{sec: The Proof of wavelet 2 stability}

\begin{proof} 
Let  $\bv\coloneqq\mathbf{v}_0$ denote the lead eigenvector of $\bT.$ 
Since $\lambda_1<1,$ we may write 
\begin{equation*}
    \bT=\bv\bv^T+\overline{\bT},
\end{equation*} 
 where 
\begin{equation*}
    \|\overline{\bT}\|_2 =\lambda_1<1,
\end{equation*}
and $\overline{\bT}\bv=0.$
Since $\bv_0,\bv_1,\ldots,\bv_{n-1}$ form an orthonormal basis for $\mathbf{L}^2,$ we have that 
\begin{equation}\label{eqn: powertrick}
    \bT^k=\bv\bv^T+\overline{\bT}^k
\end{equation}
for all $k\geq 1.$ 
Therefore, we see that 
\begin{equation*}
    \Psi_j = \begin{cases}
    \bI-\bT= \bI-\bv\bv-\overline{\bT}&\text{ for } j=0\\
    \bT^{2^{j-1}}-\bT^{2^j}= \overline{\bT}^{2^{j-1}}-\overline{\bT}^{2^j}&\text{ for } 1\leq j \leq J
    \end{cases},
\end{equation*}
and
\begin{equation*}
    \Phi_J = 
     \bT^{2^J} =\bv\bv^T+\overline{\bT}^{2^J},
\end{equation*}
with  similar equations being valid for $\widetilde{\Psi}_j$ and $\widetilde{\Phi}_J.$ Thus,
\begin{align}
    \left\|\mathcal{W}_J^{(2)}-\widetilde{\mathcal{W}}_J^{(2)}\right\|_2^2 &= \|\Phi_J-\widetilde{\Phi}_J\|_2^2+\sum_{j=0}^J \|\Psi_j-\widetilde{\Psi}_j\|_2^2\nonumber\\   &\leq 4\left(\|\bv\bv^T-\widetilde{\bv}\widetilde{\bv}^T\|_2^2+\sum_{j=0}^J\left\|\overline{\bT}^{2^j}-\left(\overline{\widetilde{\bT}}\right)^{2^j}\right\|^2_2\right)\label{eqn: ell2psiestimate}.
\end{align}
The following lemma follows by imitating equation (23) of \cite{gama:diffScatGraphs2018} and summing over $j.$ 
\begin{lemma}\label{lem: eqn23ofGama}
\begin{equation*}
    \sum_{j=0}^J\left\|\overline{\bT}^{2^j}-\left(\overline{\widetilde{\bT}}\right)^{2^j}\right\|^2_2 \leq C_{\lambda_1^*} \left\|\overline{\bT}-\overline{\widetilde{\bT}}\right\|^2_2.
\end{equation*}
\end{lemma}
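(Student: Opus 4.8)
The plan is to reduce the claimed bound to a per-scale estimate of the form $\|\overline{\bT}^{2^j}-(\overline{\bT}')^{2^j}\|_2\le 2^j(\lambda_1^*)^{2^j-1}\|\overline{\bT}-\overline{\bT}'\|_2$ and then sum a doubly-exponentially decaying series in $j$. The structural facts I would use are: (i) both $\overline{\bT}$ and $\overline{\bT}'$ are self-adjoint with $\|\overline{\bT}\|_2=\lambda_1$ and $\|\overline{\bT}'\|_2=\lambda_1'$, so both have operator norm at most $\lambda_1^*=\max\{\lambda_1,\lambda_1'\}<1$, the strict inequality being exactly the spectral gap \eqref{eqn: spectral gap} fed through the strictly decreasing function $g$; and (ii) the telescoping identity $A^k-B^k=\sum_{i=0}^{k-1}A^i(A-B)B^{k-1-i}$, valid for arbitrary square matrices, which is the finite-dimensional analog of equation~(23) in \cite{gama:diffScatGraphs2018}.

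First I would apply the telescoping identity with $A=\overline{\bT}$, $B=\overline{\bT}'$, and $k=2^j$. Submultiplicativity of $\|\cdot\|_2$ gives $\|A^i\|_2\le(\lambda_1^*)^i$ and $\|B^{k-1-i}\|_2\le(\lambda_1^*)^{k-1-i}$, so each of the $k$ summands has norm at most $(\lambda_1^*)^{k-1}\|\overline{\bT}-\overline{\bT}'\|_2$, whence
\begin{equation*}
\left\|\overline{\bT}^{2^j}-(\overline{\bT}')^{2^j}\right\|_2\le 2^j(\lambda_1^*)^{2^j-1}\left\|\overline{\bT}-\overline{\bT}'\right\|_2
\end{equation*}
(the case $j=0$ reading trivially as $\|\overline{\bT}-\overline{\bT}'\|_2\le\|\overline{\bT}-\overline{\bT}'\|_2$). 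Squaring and summing over $0\le j\le J$ then yields
\begin{equation*}
\sum_{j=0}^J\left\|\overline{\bT}^{2^j}-(\overline{\bT}')^{2^j}\right\|_2^2\le\left(\sum_{j=0}^\infty 4^j(\lambda_1^*)^{2^{j+1}-2}\right)\left\|\overline{\bT}-\overline{\bT}'\right\|_2^2,
\end{equation*}
and it remains only to record that the bracketed series is finite and depends on $\lambda_1^*$ alone.

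The one point needing a short argument — and the closest thing here to an obstacle — is the $J$-independence of the constant, i.e.\ convergence of $\sum_{j\ge 0}4^j(\lambda_1^*)^{2^{j+1}-2}$. This is immediate once one notes that for $\lambda_1^*<1$ the factor $(\lambda_1^*)^{2^{j+1}}$ decays doubly exponentially in $j$ and hence dominates the merely exponential growth of $4^j$; concretely the ratio of consecutive terms is $4(\lambda_1^*)^{2^{j+1}}\to 0$, so the series converges by the ratio test. Denoting its sum by $C_{\lambda_1^*}$ completes the proof. I would also remark, for use in Theorem~\ref{thm: wavelet stability2}, that the same estimate with $J=\infty$ shows the bound is uniform in the number of scales, which is the reason the final stability bound for $\cW^{(2)}$ carries no dependence on $J$.
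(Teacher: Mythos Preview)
Your proof is correct and arrives at exactly the same per-scale estimate $\|\overline{\bT}^{2^j}-(\overline{\bT}')^{2^j}\|_2\le 2^j(\lambda_1^*)^{2^j-1}\|\overline{\bT}-\overline{\bT}'\|_2$ and the same series $\sum_{j\ge 0}4^j(\lambda_1^*)^{2^{j+1}-2}$ as the paper. The only difference is cosmetic: the paper obtains the per-scale bound by interpolating along the path $\mathbf{H}_j(t)=(t\overline{\bT}+(1-t)\overline{\bT}')^{2^j}$ and bounding $\sup_t\|\mathbf{H}_j'(t)\|_2$, whereas you use the direct telescoping identity $A^k-B^k=\sum_{i=0}^{k-1}A^i(A-B)B^{k-1-i}$; both routes produce a sum of $2^j$ terms each bounded by $(\lambda_1^*)^{2^j-1}\|\overline{\bT}-\overline{\bT}'\|_2$, so the arguments are essentially equivalent, with yours being marginally more elementary in that it avoids differentiating a matrix-valued function.
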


\begin{proof}
Let 
\begin{equation*}
    \mathbf{H}_j(t)=\left(t\overline{\bT}+(1-t)\overline{\widetilde{\bT}}\right)^{2^j}.
\end{equation*}
Then
\begin{equation*}
    \left\|\overline{\bT}^{2^j}-\left(\overline{\widetilde{\bT}}\right)^{2^j}\right\|_2 = \|\mathbf{H}_j(1)-\mathbf{H}_j(0)\|_2\leq \int_0^1\|\mathbf{H}'_j(t)\|_2 \, dt \leq \sup_{0\leq t\leq1}\|\mathbf{H}_j'(t)\|_2.
\end{equation*}
Since 
\begin{equation*}
    \mathbf{H}_j'(t) = \sum_{\ell=0}^{2^j-1} \left(t\overline{\widetilde{\bT}}+(1-t)\overline{\bT}\right)^\ell  \left(\overline{\bT} - \overline{\widetilde{\bT}} \right) \left(t\overline{\widetilde{\bT}}+(1-t)\overline{\bT}\right)^{2^j-\ell-1},
\end{equation*}
and
$\|\overline{\bT}\|_2,\|\overline{\widetilde{\bT}}\|_2\leq \lambda_1^*,$ this implies 
\begin{equation*}
    \|\mathbf{H}_j'(t)\|_2\leq 2^j\left(\lambda_1^*\right)^{2^j-1}\left\|\overline{\bT}-\overline{\widetilde{\bT}}\right\|^2_2.
\end{equation*}
Therefore,
\begin{align*}
    \sum_{j=0}^J\left\|\overline{\bT}^{2^j}-\left(\overline{\widetilde{\bT}}\right)^{2^j}\right\|^2_2 
    &\leq \left\|\overline{\bT}-\overline{\widetilde{\bT}}\right\|^2_2\sum_{j=0}^\infty 2^{2j}\left(\lambda_1^*\right)^{2^{j+1}-2}\\
    &=C_{\lambda_1^*}\left\|\overline{\bT}^{2^j}-\left(\overline{\widetilde{\bT}}\right)^{2^j}\right\|^2_2, 
\end{align*}
where $C_{\lambda_1^{\ast}} < \infty$ since $\lambda_1^*<1.$
\end{proof}

By the triangle inequality,
\begin{equation*}
\left\|\overline{\bT}-\overline{\widetilde{\bT}}\right\|_2     \leq \left\|\bT-\widetilde{\bT}\right\|_2 + \left\|\bv\bv^T-\widetilde\bv\widetilde{\bv}^T\right\|_2.
\end{equation*}
Therefore, by \eqref{eqn: ell2psiestimate}, we have
\begin{equation}\label{eqn:secondell2psiestimate}
    \left\|\mathcal{W}_J^{(2)}-\widetilde{\mathcal{W}}_J^{(2)}\right\|_2^2\leq C_{\lambda_1^*}\left(\left\|\bT-\widetilde{\bT}\right\|^2_2 + \left\|\bv\bv^T-\widetilde\bv\widetilde{\bv}^T\right\|^2_2\right).
\end{equation}
To bound $\left\|(\bv\bv^T-\widetilde\bv\widetilde{\bv}^T)\bx\right\|_2$, we note that for all $\bx$ 
\begin{align*}
    \left\|\bv\bv^T-\widetilde\bv\widetilde{\bv}^T\right\|_2 &\leq \left\|(\bv(\bv-\widetilde\bv)^T\bx\right\|_2 + \left\|(\bv-\widetilde\bv)\widetilde{\bv}^T\bx\right\|_2 \\
    &\leq \|\bv\|_2\left\|\bv-\widetilde\bv\right\|_2\|\bx\|_2+\left\|\bv-\widetilde\bv\right\|_2\|\widetilde\bv\|_2\|\bx\|_2\\
    &\leq 2\left\|\bv-\widetilde\bv\right\|_2\|\bx\|_2
\end{align*}
with the last equality following from the fact that  
$\|\bv\|_2=\|\widetilde\bv\|_2=1.$ 
  Therefore,  the result now follows from applying Lemma \ref{lem: vvprime} and then \eqref{eqn: T is contractive}.
\end{proof}

\section{The proof of Theorem \ref{thm: transferTtoP}  }\label{sec: proof of general K}
\begin{proof}
Let $\|\bx\|_\bM=1,$ and let $\by=\bM\bx.$ Note that $\|\by\|_2=\|\bM\bx\|_2=\|\bx\|_\bM=1.$
By Lemma \ref{lem: polynomialproperties} and by \eqref{eqn: defsquareroot} we have that for all $i\in\mathcal{I}$
\begin{equation*}
    r_j(\bK)=\bM^{-1}r_j(\bT)\bM\quad\text{and}\quad r_j(\widetilde \bK)=(\widetilde \bM)^{-1}r_j(\widetilde \bT)\widetilde \bM.
\end{equation*}
Therefore,
\begin{align*}
   \|(r_j(\bK) &- r_j(\widetilde \bK))\bx\|_\bM = \\ 
   &= \|\bM\left(\bM^{-1}r_j(\bT)\bM-(\widetilde \bM)^{-1}r_j(\widetilde \bT)\widetilde \bM\right)\bx\|_2\\
    &=\|r_j(\bT)\bM\bx-\bM(\widetilde \bM)^{-1}r_j(\widetilde \bT)\widetilde \bM\bM^{-1}\bM\bx\|_2\\
    &=\|r_j(\bT)\by-\bR_2^{-1} r_j(\widetilde \bT)\bR_2\by\|_2\\
    &\leq \|r_j(\bT)\by-\bR_2^{-1} r_j(\widetilde \bT)\by\|_2+\|\bR_2^{-1} r_j(\widetilde\bT )\by-\bR_2^{-1} r_j(\widetilde \bT)\bR_2\by\|_2\\
    &\leq\|(r_j(\bT)-r_j(\widetilde \bT))\by\|_2+\|(\bI-\bR_2^{-1})r_j(\widetilde \bT)\by\|_2+\|\bR^{-1}_2\|_2\|r_j(\widetilde \bT)(\bI-\bR_2)\by\|_2\\
    &\leq\|(r_j(\bT)-r_j(\widetilde \bT))\by\|_2+\kappa(\bM,\widetilde \bM)\|r_j(\widetilde {\bT})\by\|_2+R(\bM,\widetilde \bM)\|r_j(\widetilde \bT)(\bI-\bR_2^{-1})\by\|_2.
\end{align*}
Therefore, squaring both sides, summing over $j,$ and using the nonexpansiveness of $\cW^{\bT}$ and the fact that $\|\by\|_2=1,$ 
we have 
\begin{align*}
    &\sum_{j\in\mathcal{J}}\|(r_j(\bK)-r_j(\widetilde \bK))\bx\|_\bM^2\\
    \leq& 3\left(\sum_{j\in\mathcal{J}}\|(r_j(\bT)-r_j(\widetilde \bT))\by\|_2^2 + \kappa(\bM,\widetilde \bM)^2\sum_{j\in\mathcal{J}}\|r_j(\widetilde \bT)\by\|^2_2
    +R(\bM,\widetilde \bM)^2\sum_{j\in\mathcal{J}}\|r_j(\widetilde \bT)(\bI-\bR_2)\by\|^2_2\right)\\
    \leq& 3\left(\left\|\cW^{\bK}-\cW^{\widetilde \bK}\right\|_{\bM}^2+\kappa(\bM,\widetilde \bM)^2+R(\bM,\widetilde \bM)^2\|(\bI-\bR_2)\by\|_2^2 \right)\\
    \leq& 3\left(\left\|\cW^{\bK}-\cW^{\widetilde \bK}\right\|_{\bM}^2+\kappa(\bM,\widetilde \bM)^2+R(\bM,\widetilde \bM)^2\kappa(\bM,\widetilde \bM)^2 \right)\\
    \leq& 6\left(\left\|\cW^{\bK}-\cW^{\widetilde \bK}\right\|_{\bM}^2+\kappa(\bM,\widetilde \bM)^2\big(\kappa(\bM,\widetilde \bM\big)+1)^2 \right)
\end{align*}
where the last inequality uses the fact that $R(\bM,\widetilde \bM)\leq (\kappa(\bM,\widetilde \bM)+1).$
\end{proof}

\section{The Proof of Proposition \ref{prop: Tdist to Pdist} }\label{sec: proofofTtoK}
\begin{proof}[The proof of Proposition \ref{prop: Tdist to Pdist}]
Let $\|\bx\|_2=1.$ Then, since $\bT=\bM\bK\bM^{-1}$
\begin{align}
    \|(\bT-\widetilde \bT)\bx\|_2 &= \|\bM\bK\bM^{-1} \bx - \widetilde \bM\widetilde \bK(\widetilde \bM)^{-1} \bx\|_2\nonumber\\
    &= \|\bM\bK\bM^{-1} \bx - (\bM\bM^{-1})\widetilde \bM\widetilde \bK(\widetilde \bM)^{-1}(\bM\bM^{-1}) \bx\|_2\nonumber\\
    &=  \|\bM(\bK - \bR_1^{-1}\widetilde \bK\bR_1)\bM^{-1} \bx\|_2\nonumber\\
    &= \|(\bK - \bR_1^{-1}\widetilde \bK\bR_1)\bM^{-1} \bx\|_\bM\nonumber\\
    &\leq \|\bK - \bR_1^{-1}\widetilde \bK\bR_1\|_\bM\|\bM^{-1} \bx\|_\bM\nonumber\\
    &= \|\bK - \bR_1^{-1}\widetilde \bK\bR_1\|_\bM, \label{eqn: middlebound}
\end{align}
since $\|\bM^{-1} \bx\|_\bM=\|\bx\|_2=1.$ By the triangle inequality,
\begin{align*}
    \|\bK - \bR_1^{-1}\widetilde \bK\bR_1\|_\bM & \leq \|\bK - \bR_1^{-1}\bK\|_\bM + \|\bR_1^{-1}\bK- \bR_1^{-1}\widetilde \bK\bR_1\|_\bM\\
    &\leq \|\bK\|_\bM\|\bI-\bR_1^{-1}\|_\bM + \|\bR_1^{-1}\|_\bM\|\bK-\widetilde \bK\bR_1\|_\bM\\
    &\leq \|\bK\|_\bM\|\bI-\bR_1^{-1}\|_\bM + \|\bR_1^{-1}\|_\bM\|\bK-\widetilde \bK\|_\bM+\|\bR_1^{-1}\|_\bM\|\widetilde \bK(\bI-\bR_1)\|_\bM\\
    &\leq \|\bK\|_\bM\|\bI-\bR_1^{-1}\|_\bM + \|\bR_1^{-1}\|_\bM\|\bK-\widetilde \bK\|_\bM+\|\bR_1^{-1}\|_\bM\|\widetilde \bK\|_\bM\|\bI-\bR_1\|_\bM\\
    &\leq \kappa(\bM,\widetilde \bM) + R(\bM,\widetilde \bM)\|\bK-\widetilde \bK\|_\bM+R(\bM,\widetilde \bM)R(\bM,\widetilde \bM)^2\kappa(\bM,\widetilde \bM)\\
    &=\kappa(\bM,\widetilde \bM)\left(1+R(\bM,\widetilde \bM)^{3}\right) + R(\bM,\widetilde \bM)\|\bK-\widetilde \bK\|_\bM,
\end{align*}
where we used the facts that $\|\bI-\bR_1^{\pm1}\|_\bM\leq \kappa(\bM,\widetilde \bM),$  $\|\bR_1^{\pm1}\|\leq R(\bM,\widetilde \bM),$   $\|\bK\|_\bM=1,$ and  $\|\widetilde \bK\|_\bM\leq \|\bR_1\|_2\|\bR^{-1}\|_2\leq R(\bM,\widetilde \bM)^2.$
\end{proof}

\section{The proof of Lemma \ref{lem: windowstability}}\label{sec: the proof of lem: window stability}

\begin{proof}

Let $\cA\coloneqq\|\cW- \widetilde{\cW}\|_{\bM}$ and $\mathcal{C}\coloneqq\|\widetilde{\cW}\|_{\bM}.$ 
To prove \eqref{eqn: scatteringstabilitynopermwindow}, we need to  show
\begin{equation}\label{eqn: korder}
    \sum_{\pathvar\in\mathcal{J}^\ell}\|\bS[\pathvar]\bx-\widetilde\bS[\pathvar]\bx\|^2_{\bM}\leq 2\mathcal{A}^2\cdot\left(\sum_{k=0}^\ell\mathcal{C}^{k}\right)^2\|\bx\|_{\bM}^2.
\end{equation} 
For $\ell=0$, the zeroth-order windowed scattering coefficient of $\bx$
is given by 
    $\bS[\pathvar_{e}]\bx=\Phi\bx$,
where $\pathvar_{e}$ is the empty-index.
Therefore, by the definition of $\cA$ we have
\begin{equation*}
    \sum_{\pathvar\in\mathcal{J}^0}\|\bS[\pathvar]\bx-\widetilde\bS[\pathvar]\bx\|_{\bM}^2    =\|\Phi\bx-\widetilde\Phi\bx\|_{\bM'}^2 \leq\|\mathcal{W}\bx-\widetilde{\mathcal{W}}\bx\|_{\bM}^2= \mathcal{A}^2\|\bx\|_{\bM}^2,
\end{equation*}
and so  \eqref{eqn: korder} holds when $\ell=0.$ For the case where $\ell\geq 1,$ 
we note that for all $\pathvar\in\mathcal{J}^\ell,$  we have 
\begin{align*}
    \|\bS[\pathvar]\bx-\widetilde\bS[\pathvar]\bx\|_{\bM}&=    \|\Phi \bU[\pathvar]\bx-\widetilde\Phi \widetilde\bU[\pathvar]\bx\|_{\bM}\\
    &\leq \|(\Phi-\widetilde\Phi)\bU[\pathvar]\bx\|_{\bM} +  \|\widetilde\Phi\bU[\pathvar]\bx-\widetilde\Phi\widetilde\bU[\pathvar]\bx\|_{\bM}\\
    &\leq \|\Phi-\widetilde\Phi\|_{\bM}\|\bU[\pathvar]\bx\|_{\bM} +  \|\widetilde\Phi\|_\bM\|\bU[\pathvar]\bx-\widetilde\bU[\pathvar]\bx\|_{\bM}.\\
    &= \|\Phi-\widetilde\Phi\|_{\bM}\|\bU[\pathvar]\bx\|_{\bM} +  \|\widetilde \Phi\|_{\bM}\|\bU[\pathvar]\bx-\widetilde\bU[\pathvar]\bx\|_{\bM},
\end{align*}
and so squaring both sides and summing over $\pathvar$ implies
\begin{align*}    
    \sum_{\pathvar\in\mathcal{J}^\ell}\|\bS[\pathvar]\bx-\widetilde\bS[\pathvar]\bx\|_{\bM}^2
    &\leq2\|(\Phi-\widetilde\Phi)\|_{\bM}^2\sum_{\pathvar\in\mathcal{J}^\ell}\|\bU[\pathvar]\bx\|_{\bM}^2 +2\|\widetilde\Phi\|_{\bM}^2\sum_{\pathvar\in\mathcal{J}^\ell}  \|\bU[\pathvar]\bx-\widetilde\bU[\pathvar]\bx\|_{\bM}^2. 
\end{align*}
Therefore, \eqref{eqn: korder} and thus \eqref{eqn: scatteringstabilitynopermwindow}, follow from applying Lemma \ref{lem: nonexpansiveU} stated below, noting that $\|\Phi-\widetilde\Phi\|_{\bM}^2\leq \cA^2$ and $\|\widetilde\Phi\|_{\bM}^2\leq \mathcal{C}^2$, and using the fact that $a^2 + b^2 \leq (a + b)^2$ when $a,b \geq 0$.  
For a proof of Lemma \ref{lem: nonexpansiveU}  please see Appendix \ref{sec: the proof of lemmas for scattering stability}.
\end{proof}
\begin{lemma}\label{lem: nonexpansiveU}
Let $\mathcal{A}\coloneqq\|\mathcal{W}-\widetilde{\mathcal{W}}\|_{\bM}$ and $\mathcal{C}\coloneqq \|\widetilde{\mathcal{W}}\|_{\bM}$. Then, for all $\ell\geq 1$,
\begin{align*}
    \sum_{\pathvar\in\mathcal{J}^\ell}\|\bU[\pathvar]\bx\|^2_{\bM}   &\leq \|\bx\|^2_{\bM},\quad\text{and}\quad 
    \sum_{\pathvar\in\mathcal{J}^\ell}\|\bU[\pathvar]\bx-\widetilde\bU[\pathvar]\bx\|^2_{\bM} \leq \mathcal{A}^2 \left(\sum_{k=0}^{\ell-1}\mathcal{C}^{k}\right)^2\|\bx\|^2_{\bM}.
\end{align*}
\end{lemma}

\section{The proof of Proposition \ref{thm: Cstability}}\label{sec: proofCStability}

\begin{proof}[The proof of Proposition \ref{thm: Cstability}]
By Lemma \ref{lem: polynomialproperties}, we have that 
\begin{equation*}
r_j(\widetilde \bK)=(\widetilde \bM)^{-1}r(\widetilde \bT)\widetilde \bM
\end{equation*}
for all $j\in\mathcal{J}$. Lemma \ref{lem: polynomialproperties} also implies that $\|r_j(\widetilde{\bT})\|_2=\|r_j(\widetilde{\bK})\|_\bM$. Therefore, we have 
\begin{align*}
    \|r_j(\widetilde \bK)\bx\|_\bM&=\|\bM((\widetilde \bM)^{-1}r_j(\widetilde \bT)\widetilde \bM)(\bM^{-1}\bM))\bx\|_2 \\
    &=\|\bR_2^{-1} r_j(\widetilde \bT)\bR_2\bM\bx\|_2\\
    &\leq \|\bR_2^{-1}\|_2\|r_j(\widetilde \bT)\|_2\|\bR_2\|_2\|\bM\bx\|_2\\
    &=R(\bM,\widetilde \bM)^2\|r_j(\widetilde \bT)\|_2 \|\bx\|_\bM \\
    & = R(\bM,\widetilde \bM)^2\|r_j(\widetilde{\bK})\|_{\bM} \|\bx\|_\bM.
\end{align*}
Therefore, the result follows by squaring both sides and summing over $j$.
\end{proof}

\section{The proof of Theorem \ref{thm: scattering stability no window}}\label{sec: proof of scattering stability no window}

\begin{proof}
Let $\Pi\in S_n$ be a permutation and let $G'=\Pi(G)$.
By Theorem \ref{thm: equivariance}, we have $\Pi\bS^\ell=(\bS^\ell)'\Pi$. 
Therefore, we 
may use the definitions of the nonwindowed scattering transform and the weighted inner product $\langle\cdot,\cdot\rangle_{\bM'}$ to see that for each path $\pathvar$ we have
\begin{align*}
    &|\overline{\bS}[\pathvar]  \bx - \widetilde{\overline{\bS}}[\pathvar] \widetilde{\bx}|\\=& |\overline{\bS'}[\pathvar] \Pi \bx - \widetilde{\overline{\bS}}[\pathvar] \widetilde{\bx}|\\ =& | \langle \bm{\mu}', \bU' [\pathvar] \Pi \bx \rangle_{\bM'} - \langle \widetilde{\bm{\mu}}, \widetilde{\bU} [\pathvar] \widetilde{\bx} \rangle_{\widetilde{\bM}}|\\
    \leq& | \langle \bm{\mu}', \bU' [\pathvar] \Pi \bx- \widetilde{\bU} [\pathvar] \widetilde{\bx}\rangle_{\bM'}|  + |
    \langle (\bm{\mu}'-\tilde{\bm{\mu}}),\widetilde{\bU} [\pathvar] \widetilde{\bx} \rangle_{\bM'}| +
    |\langle \widetilde{\bm{\mu}}, \widetilde{\bU} [\pathvar] \widetilde{\bx} \rangle_{\bM'}-\langle \widetilde{\bm{\mu}}, \widetilde{\bU} [\pathvar] \widetilde{\bx} \rangle_{\widetilde{\bM}}|\\
   =& | \langle \bm{\mu}', \bU' [\pathvar] \Pi \bx- \widetilde{\bU} [\pathvar] \widetilde{\bx}\rangle_{\bM'}|  + |
    \langle (\bm{\mu}'-\tilde{\bm{\mu}}),\widetilde{\bU} [\pathvar] \widetilde{\bx} \rangle_{\bM'}| +
    |\langle((\bM')^*\bM'-\widetilde{\bM}^*\widetilde{\bM}) \widetilde{\bm{\mu}}, \widetilde{\bU} [\pathvar] \widetilde{\bx} \rangle_{2}|\\
    =&\hspace{-.05in}:  I[\pathvar] + II[\pathvar] + III[\pathvar].
\end{align*}
To bound $I[\pathvar]$, we use the Cauchy Schwarz inequality to observe
\begin{align*}
    |\langle \bm{\mu}', \bU' [\pathvar] \Pi \bx- \widetilde{\bU} [\pathvar] \widetilde{\bx}\rangle_{\bM'}|
&\leq |\langle \bm{\mu}', \bU' [\pathvar] \Pi \bx- \bU'  [\pathvar] \widetilde{\bx}\rangle_{\bM'}| + |\langle \bm{\mu}', \bU' [\pathvar] \widetilde{\bx}- \widetilde{\bU} [\pathvar] \widetilde{\bx}\rangle_{\bM'}|\\
&\leq 
|\overline{\bS'}[\pathvar] \Pi \bx - \overline{\bS'}[\pathvar] \widetilde{\bx}| + \|\bmu'\|_{\bM'}\|\bU' [\pathvar] \widetilde{\bx}- \widetilde{\bU} [\pathvar] \widetilde{\bx}\|_{\bM'}
\end{align*}
 Therefore, applying Theorem \ref{thm: nonexpansive} and Lemma \ref{lem: nonexpansiveU} yields 
 \begin{align}
     \sum_{\pathvar\in\mathcal{J}^\ell} I[\pathvar]^2 &\leq \frac{2 \| (\bM')^{-1}\|_2^2}{n\min_i |\bu_0(i)|^2 }\|\Pi \mathbf{x}-\widetilde{\mathbf{x}}\|^2_{\bM'} + 2\|\bmu'\|_{\bM'}^2\|\mathcal{W}'-\widetilde{\mathcal{W}}\|_{\bM'}^2 \left(\sum_{k=0}^{\ell-1}\|\widetilde{W}\|_{\bM'}^k \right)^2\|\tilde{\bx}\|^2_{\bM'} \nonumber \\
     &\leq \frac{2 R(\bM, \mathbf{I})^4}{n\min_i |\bu_0(i)|^2 }\|\Pi \mathbf{x}-\widetilde{\mathbf{x}}\|^2_2 + 2 R(\bM, \mathbf{I})^2 \|\bmu\|_{\bM}^2\|\mathcal{W}'-\widetilde{\mathcal{W}}\|_{\bM'}^2 \left(\sum_{k=0}^{\ell-1}\|\widetilde{W}\|_{\bM'}^k \right)^2\|\tilde{\bx}\|^2_2 \label{eqn: Ip bound}
 \end{align}
For $II[\pathvar]$, we again use the Cauchy Schwarz inequality to see
\begin{align*}
    |\langle (\bm{\mu}'-\tilde{\bm{\mu}}),\widetilde{\bU} [\pathvar] \widetilde{\bx} \rangle_{\bM'}|
    &\leq R(\bM',\widetilde{\bM}) \|\bm{\mu}'-\tilde{\bm{\mu}}\|_{\bM'}\|\widetilde{\bU} [\pathvar] \widetilde{\bx} \|_{\widetilde{\bM}}.
\end{align*}
Therefore, again applying Lemma \ref{lem: nonexpansiveU} implies 
\begin{equation}\label{eqn: IIp bound}
     \sum_{\pathvar\in\mathcal{J}^\ell} II[\pathvar]^2 \leq R(\bM',\widetilde{\bM})^2 \|\bm{\mu}'-\tilde{\bm{\mu}}\|_{\bM'}^2 \|\widetilde{\bx}\|^2_{\widetilde{\bM}}.
\end{equation}
Lastly, to bound $III[\pathvar]$, we use the Cauchy Schwarz inequality once more, 
\begin{align*}
    |\langle((\bM')^*\bM'-\widetilde{\bM}^*\widetilde{\bM}) \widetilde{\bm{\mu}}, \widetilde{\bU} [\pathvar] \widetilde{\bx} \rangle_{\widetilde{2}}|&\leq \|(\bM')^*\bM'-\widetilde{\bM}^*\widetilde{\bM}\|_2 \|\widetilde{\bm{\mu}}\|_2\|\widetilde{\bU} [\pathvar] \widetilde{\bx}\|_2\\
    &\leq R(\widetilde{\bM},\bI)^2 \|(\bM')^*\bM'-\widetilde{\bM}^*\widetilde{\bM}\|_2 \|\widetilde{\bm{\mu}}\|_{\widetilde{\bM}}\|\widetilde{\bU} [\pathvar] \widetilde{\bx}\|_{\widetilde{\bM}},
\end{align*}
and so summing over $\pathvar$ and once more applying Lemma \ref{lem: nonexpansiveU} gives 
\begin{equation*}
     \sum_{\pathvar\in\mathcal{J}^\ell} III[\pathvar]^2 \leq R(\widetilde{\bM}, \mathbf{I})^4 \|(\bM')^*\bM'-\widetilde{\bM}^*\widetilde{\bM}\|^2_2  \|\widetilde{\bx}\|^2_{\widetilde{\bM}}.
\end{equation*}
To bound $\|(\bM')^*\bM'-\widetilde{\bM}^*\widetilde{\bM}\|_2$ we write $\widetilde{\bM}=\mathbf{R}_2(\bM',\widetilde{\bM})\bM'$ so that 
\begin{align*}
    \|(\bM')^*\bM'-\widetilde{\bM}^*\widetilde{\bM}\|_2&=\|(\bM')^*\bM'-(\bM')^*\mathbf{R}_2(\bM',\widetilde{\bM})^*\mathbf{R}_2(\bM',\widetilde{\bM})\bM'\|_2   \\
    &=\|(\bM')^*\big(\bI-\mathbf{R}_2(\bM',\widetilde{\bM})^*\mathbf{R}_2(\bM',\widetilde{\bM})\big)\bM'\|_2 \\
    &\leq \|\bM'\|_2^2
    \|\bI-\mathbf{R}_2(\bM',\widetilde{\bM})^*\mathbf{R}_2(\bM',\widetilde{\bM})\|_2\\
    &\leq R(\bM,\bI)^2 \|\bI-\mathbf{R}_2(\bM',\widetilde{\bM})^*\mathbf{R}_2(\bM',\widetilde{\bM})\|_2.
\end{align*}
The triangle inequality implies that 
\begin{align*}
    \|\bI-\mathbf{R}_2(\bM',\widetilde{\bM})^*\mathbf{R}_2(\bM',\widetilde{\bM})\|_2&\leq \|\bI-\mathbf{R}_2(\bM',\widetilde{\bM})^*\|_2+\|\mathbf{R}_2(\bM',\widetilde{\bM})^*(\bI-\mathbf{R}_2(\bM',\widetilde{\bM}))\|_2\\
    &\leq \kappa(\bM',\widetilde{\bM})(1+\bR(\bM',\widetilde{\bM}))
\end{align*}
Therefore, combining this with \eqref{eqn: Ip bound} and \eqref{eqn: IIp bound} yields 
\begin{align*}
    \sum_{p\in\mathcal{J}^\ell} |\overline{\bS}[\pathvar]  \bx - \widetilde{\overline{\bS}}[\pathvar] \widetilde{\bx}|^2 &\leq  3\Bigg( \frac{2 R(\bM, \mathbf{I})^4}{n\min_i |\bu_0(i)|^2 }\|\Pi \mathbf{x}-\widetilde{\mathbf{x}}\|^2_2 \\
    &\quad\quad\quad+ 2 R(\bM, \mathbf{I})^2 \|\bmu\|_{\bM}^2\|\mathcal{W}'-\widetilde{\mathcal{W}}\|_{\bM'}^2 \left(\sum_{k=0}^{\ell-1}\|\widetilde{W}\|_{\bM'}^k \right)^2\|\tilde{\bx}\|^2_2 \\
    &\quad\quad\quad+ R(\bM',\widetilde{\bM})^2 \|\bm{\mu}'-\tilde{\bm{\mu}}\|_{\bM'}^2 \|\widetilde{\bx}\|^2_{\widetilde{\bM}} \\
    &\quad\quad\quad+ R(\widetilde{\bM}, \mathbf{I})^4R(\bM,\bI)^2 \kappa(\bM',\widetilde{\bM})(1+\bR(\bM',\widetilde{\bM}))\|\widetilde{\bm{\mu}}\|_{\widetilde{\bM}}^2\|\widetilde{\bx}\|_{\widetilde{\bM}}\Bigg).
\end{align*}
The result follows by taking the infimum over $\Pi.$
\end{proof}












\section{The Proof of Lemma \ref{lem: nonexpansiveU}  }\label{sec: the proof of lemmas for scattering stability}

\begin{proof}
When $\ell=1,$ the result follows immediately from the fact that we have assumed that $C\leq 1$ in \eqref{eqn: frameAB} and the fact that $M$ is nonexpansive. Now, suppose by induction that the first inequality holds for $\ell.$ Let $\bx\in\mathbf{L}^2(\bM).$ Then
\begin{align}
    \sum_{\pathvar\in\mathcal{J}^{\ell+1}}\|\bU[\pathvar]\bx\|^2_{\bM}  &= \sum_{\pathvar\in\mathcal{J}^{\ell+1}}\|M\Psi_{j_{\ell+1}}\cdots M\Psi_{j_1}\bx\|^2_{\bM} \nonumber \\
    &=    \sum_{\pathvar\in\mathcal{J}^{\ell}}\left(\sum_{j_{\ell+1}\in\mathcal{J}}\|M\Psi_{j_{\ell+1}}(M\Psi_{j_{\ell}}\cdots M\Psi_{j_1}\bx)\|^2_{\bM}\right) \nonumber \\
    &\leq    \sum_{\pathvar\in\mathcal{J}^{\ell}}\|M\Psi_{j_{\ell}}\cdots M\Psi_{j_1}\bx\|^2_{\bM} \nonumber \\
    &\leq \|\bx\|^2_\bM, \label{eqn: lemma 4.10 proof}
\end{align}
with the last inequality following from the inductive assumption.

To prove the second inequality, let $t_\ell \coloneqq \left(\sum_{\pathvar\in\mathcal{J}^\ell}\|\bU[\pathvar]\bx - \widetilde{\bU}[\pathvar]\bx \|^2_{\bM}\right)^{1/2}.$ Since the modulus operator is nonexpansive, 
the definition of $\cA$ 
implies 
$t_1\leq \cA \|\bx\|_{\bM}$. Now, by induction, suppose  the result holds for $\ell.$ Then, recalling that $\bU[\pathvar]=M\Psi_{j_\ell}\cdots M\Psi_{j_1},$ we have
\begin{align*}
t_{\ell+1}&= \left(    \sum_{\pathvar\in\mathcal{J}^{\ell+1}}\|M\Psi_{j_{\ell+1}}\cdots M\Psi_{j_1}\bx-M\widetilde{\Psi}_{j_{\ell+1}}\cdots M\widetilde{\Psi}_{j_1}\bx\|_{\bM}^2\right)^{1/2}\\ 
&\leq \left(    \sum_{\pathvar\in\mathcal{J}^{\ell+1}}\|(\Psi_{j_{\ell+1}}-\widetilde{\Psi}_{j_{\ell+1}})M\Psi_{j_{\ell}}\cdots M\Psi_{j_1}\bx\|_{\bM}^2\right)^{1/2} \\
&\quad+\left(    \sum_{\pathvar\in\mathcal{J}^{\ell+1}}\|\widetilde{\Psi}_{j_{\ell+1}}(M\Psi_{j_{i}}\cdots M\Psi_{j_1}\bx-M\widetilde{\Psi}_{j_{\ell}}\cdots M\widetilde{\Psi}_{j_1}\bx)\|_{\bM}^2\right)^{1/2}\\
&\leq \mathcal{A}\left(    \sum_{\pathvar\in\mathcal{J}^{\ell+1}}\|M\Psi_{j_{\ell}}\cdots M\Psi_{j_1}\bx\|_{\bM}^2\right)^{1/2} \\
&\quad+\mathcal{C}\left(    \sum_{\pathvar\in\mathcal{J}^{\ell}}\|M\Psi_{j_{\ell}}\cdots M\Psi_{j_1}\bx-M\widetilde{\Psi}_{j_{\ell}}\cdots M\widetilde{\Psi}_{j_1}\bx\|_{\bM}^2\right)^{1/2}\\
&\leq \mathcal{A}\|\bx\|_{\bM} +t_\ell\mathcal{C}\|\bx\|_{\bM}  
\end{align*}
by 
the definitions of $\cA$ and $\mathcal{C}$ and by \eqref{eqn: lemma 4.10 proof}. By the inductive hypothesis, we have that 
\begin{equation*} t_\ell\leq \mathcal{A}\sum_{k=0}^{\ell-1}\mathcal{C}^{k}\|\bx\|_{\bM}.
\end{equation*}
Therefore,
\begin{equation*}
    t_{\ell+1}\leq \mathcal{A}\|\bx\|_{\bM}+\mathcal{A}\sum_{k=0}^{\ell-1}\mathcal{C}^{k+1}\|\bx\|_{\bM}=\mathcal{A}\sum_{k=0}^{\ell}\mathcal{C}^{k}\|\bx\|_{\bM}.
\end{equation*}
Squaring both sides completes the proof of the second inequality.
\end{proof}

\section{Implementation Details}\label{sec: implementation}
\review{
We compute the classification accuracy over a number of datasets with different scattering implementations. We use the scattering architecture from \cite{gao:graphScat2018} as implemented in \cite{tong2020data} for computation. Specifically we have $J=4$ with $2$ levels of scattering followed by aggregating the first 4 (centered) moments over the nodes for a graph invariant feature extractor. We use the clustering coefficient and the degree as input features. We then apply a linear classifier trained with gradient descent in pytorch~\cite{paszke_pytorch_2019} with 10 different seeds. Using a radial basis function or MLP did not seem to improve performance, so we use standard linear regression after scattering. We split the data as 80\% training 10\% validation and 10\% test following~\cite{tong2020data}. Code can be found at \url{https://github.com/atong01/trainable_symmetry/}.}

\bibliographystyle{siamplain}
\bibliography{mainbib}